\documentclass{article}
\usepackage{iclr2027_conference,times}
\iclrfinalcopy
\usepackage[T1]{fontenc}
\usepackage{amsmath,amssymb,amsthm,mathtools}
\usepackage{graphicx,booktabs,microtype,enumitem}
\usepackage{xcolor}
\usepackage{hyperref,url}
\hypersetup{hidelinks,
 pdftitle={Gaussian Limits for SGD without Stationary Moments},pdfauthor={Xiaoli Li and Wei Biao Wu}}
\newtheorem{theorem}{Theorem}[section]
\newtheorem{lemma}[theorem]{Lemma}
\newtheorem{proposition}[theorem]{Proposition}
\newtheorem{corollary}[theorem]{Corollary}
\theoremstyle{definition}
\newtheorem{assumption}{Assumption}
\newtheorem{remark}[theorem]{Remark}
\newcommand{\E}{\mathbb E}
\newcommand{\Prob}{\mathbb P}
\newcommand{\R}{\mathbb R}
\newcommand{\F}{\mathcal F}
\newcommand{\G}{\mathcal G}
\newcommand{\Law}{\mathcal L}
\newcommand{\Proj}{\mathcal P}
\newcommand{\norm}[1]{\left\|#1\right\|}

\newcommand{\trans}{^{\mathsf T}}
\newcommand{\as}{\text{a.s.}}

\newcommand{\TV}{\mathrm{TV}}
\DeclareMathOperator{\Cov}{Cov}
\DeclareMathOperator{\tr}{tr}

\title{Gaussian Limits for SGD\\without Stationary Moments}
\author{%
Xiaoli Li\\
University of Chicago\\
\texttt{xiaolil@uchicago.edu}
\And
Wei Biao Wu\\
University of Chicago\\
\texttt{wbwu@uchicago.edu}
}

\newcommand{\ind}{\mathbf 1}

\begin{document}
\maketitle
\lhead{Preprint}

\begin{abstract}
Temporal dependence can separate the Gaussian approximation of stochastic gradient descent from its stationary moments. For unmodified least-squares SGD, we construct a design with standard Gaussian marginals whose stationary error has every positive moment infinite. Independent observations with the same marginals instead give finite stationary variance. Both regimes retain a Gaussian small-step limit. Our general theory establishes pathwise contraction from a finite second design moment, then uses score cancellation and localization to obtain stationary Gaussian and Ornstein--Uhlenbeck limits. Independent Gaussian regression errors yield an exact conditional Gaussian law and total-variation convergence under the same design integrability. Stronger design conditions identify a positive first-order total-variation constant and a deterministic covariance correction with $o(a)$ error. A scalar coverage expansion translates this correction into its inference consequence. Experiments examine distributional error, coverage, and calibration with dependent scores. Together, these results establish precise probability-law approximation beyond moment-based stationary analysis.
\end{abstract}

\section{Introduction}
Constant-step stochastic gradient descent (SGD) keeps adapting to a stream of observations. Its stationary last iterate describes the fluctuations that remain after initialization is forgotten. Gaussian and diffusion approximations connect these fluctuations to the learning rate and support uncertainty quantification \citep{kushner1981,mandt2017,li2019,dieuleveut2020,chen2021}. We ask how far this description extends when the stationary iterate has no positive moments.

We study unmodified least-squares SGD,
\begin{equation}\label{eq:model}
Y_i=X_i\trans\beta^*+Z_i,\qquad
\beta_i=\beta_{i-1}+aX_i(Y_i-X_i\trans\beta_{i-1}),
\end{equation}
with a fixed step $a>0$ during each run and stationary dependent observations. An extreme covariate can persist across updates and repeatedly amplify an earlier regression error.

Our clearest example holds the design marginal fixed at $N(0,1)$ and refreshes its value with probability $1-\rho$ at each step. Take iid centered errors of variance $\sigma^2>0$, independent of the design. Writing $e_0^{(a)}$ for the stationary error, for $0<a<2/3$ we obtain
\[
\rho=0:\quad \E(e_0^{(a)})^2=\frac{a\sigma^2}{2-3a}<\infty,
\qquad
0<\rho<1:\quad \E|e_0^{(a)}|^r=\infty\quad(r>0).
\]
With independent Gaussian regression errors, both regimes share a Gaussian small-step limit and the same first-order correction. Temporal persistence changes moment stability while preserving a precise probability-law approximation.

Total variation (TV) controls absolute errors in event probabilities, whereas moments integrate unbounded functions. Rare design environments can dominate every stationary moment while contributing little to a bounded probability distance. With independent Gaussian errors, conditioning on the entire design gives an exact Gaussian law with a vanishing random covariance perturbation. This identity explains the coexistence formalized in Corollary~\ref{cor:quantitative-coexistence}.

\paragraph{Gaussian limits and quantitative approximation.}
Constant-gain limits \citep{kushner1981} and diffusion approximations \citep{li2019} establish the Gaussian/Lyapunov framework. Stationary moment expansions \citep{dieuleveut2020} and distributional characterizations \citep{chen2021} connect it to constant-step SGD. For the finite-history last iterate, \citet{khusainov2025} obtain convex-set Gaussian bounds under iid sampling, bounded design and a finite third score moment. Our Gaussian-error branch permits dependent unbounded designs and gives a sharp TV expansion beyond every positive-order stationary moment class. The error-law assumptions and approximation metrics differ.

Convergence to an invariant law at a fixed step \citep{merad2023} and Gaussian approximation as $a\downarrow0$ are distinct limits; our finite-history bound connects them. Wasserstein and tail bounds \citep{wang2026} address moment-controlled settings. Averaged-iterate CLTs \citep{yu2021,mou2020} concern the covariance of accumulated iterates, while our correction concerns the stationary last iterate.

\paragraph{Product stability under temporal dependence.}
Least-squares SGD is the least-mean-squares (LMS) recursion. Its $L^p$ stability theory \citep{guo1997}, Markov-operator perturbations \citep{delyon1999}, and tracking-error expansions \citep{aguech2000} provide routes to moment control. Markovian analyses cover bounded coefficients \citep{srikant2019,huo2023} and unbounded coefficients under super-Lyapunov drift and growth conditions \citep{durmus2021markov}. Our positive-semidefinite (PSD) product bound establishes pathwise contraction from design integrability. Appendix~\ref{app:comparison} compares the temporal conditions, including the complementary martingale-design setting of \citet{shen2026}. Our causal framework uses functional dependence \citep{wu2005} and projective martingale approximation \citep{wuwoodroofe2004,wu2007}.

\paragraph{Heavy tails and central probabilities.}
Affine-recursion theory establishes stationary existence and power-law tails \citep{brandt1986,kesten1973,goldie1991}, including Markov-dependent coefficients \citep{roitershtein2007}. Multiplicative SGD noise can generate heavy stationary tails from light-tailed inputs \citep{gurbuzbalaban2021,hodgkinson2021}; \citet{damek2026} refine the iid matrix-recursion justification using irreducible-proximal theory. Even bounded iid coefficients can obstruct exponential concentration \citep{durmus2021tails}. Here all positive stationary moments fail at every small fixed step, while the same laws admit a sharp Gaussian approximation in TV. Table~\ref{tab:literature} summarizes the distinctions.

\paragraph{Route through the results.}
The design assumption gives pathwise contraction; a logarithmic score moment adds stationary existence (Theorem~\ref{thm:stability}). An $L^2$ projective score condition yields Gaussian and Ornstein--Uhlenbeck (OU) limits (Theorem~\ref{thm:clt}). For iid centered errors independent of the design, finite variance gives conditional weak convergence, strengthened to TV for Gaussian errors (Theorems~\ref{thm:gaussian} and~\ref{up:thm:quenched}). With Gaussian errors, a fourth design moment identifies the conditional first-order correction; quantitative Gram dependence permits its unconditional sharp expansion (Theorems~\ref{up:thm:first} and~\ref{up:thm:rate}). Section~\ref{sec:experiments} tests the expansion and its coverage consequence, then studies long-run covariance calibration for dependent scores.

\section{Model and assumptions}\label{sec:assumptions}
Let $(\varepsilon_i)_{i\in\mathbb Z}$ be iid random elements, and let
$(X_i,Z_i)=G(\ldots,\varepsilon_{i-1},\varepsilon_i)$ be measurable. Set
$\F_i=\sigma(\varepsilon_j:j\le i)$ and
$\Proj_iU=\E(U\mid\F_i)-\E(U\mid\F_{i-1})$. The dimension $p$ is fixed.
For vectors and matrices, $\norm{\cdot}$ denotes Euclidean and operator norm, respectively; $\norm{A}_{\mathrm F}=(\tr A\trans A)^{1/2}$ is Frobenius norm. Write $\norm{U}_q=(\E\norm{U}^q)^{1/q}$ and $U\in L^q$ when this is finite. We use $d_{\TV}(\mu,\nu)=\sup_{A\in\mathcal B(\R^p)}|\mu(A)-\nu(A)|$, where $\mathcal B(\R^p)$ denotes the Borel sets; this equals $\frac12\int|f-g|$ for densities $f,g$. The standard Gaussian density and CDF are $\phi$ and $\Phi$, with quantile $z_u=\Phi^{-1}(u)$; $\phi_p$ is the $N_p(0,I)$ density. Write
\begin{equation}\label{eq:error}
 e_i=\beta_i-\beta^*,\qquad H_i=X_iX_i\trans,\qquad H=\E H_0,\qquad
 \xi_i=X_iZ_i,\qquad e_i=(I-aH_i)e_{i-1}+a\xi_i.
\end{equation}

\begin{assumption}[Design]\label{ass:D}
$\E\norm{X_0}^2<\infty$ and $H\succ0$.
\end{assumption}
\begin{assumption}[Score]\label{ass:S}
$\E\xi_0=0$, $\xi_0\in L^2$, and
\begin{equation}\label{eq:hannan}
 \Theta_2=\sum_{k=0}^\infty\norm{\Proj_0\xi_k}_2<\infty.
\end{equation}
\end{assumption}

Assumption~\ref{ass:S} is a projective short-range dependence condition. For any causal process $U_k$, let $U_k^{\{0\}}$ replace $\varepsilon_0$ by an independent copy, and define
\begin{equation}\label{eq:fd}
 \delta_{U,q}(k)=\norm{U_k-U_k^{\{0\}}}_q,\qquad k\ge0.
\end{equation}
This notation applies to $U=X,H,\xi$. The sufficient condition $\sum_{k\ge0}\delta_{\xi,2}(k)<\infty$ implies \eqref{eq:hannan}.
Square-integrable martingale-difference scores also satisfy \eqref{eq:hannan}, since $\Proj_0\xi_k=0$ for $k\ge1$. In particular, independent centered errors with finite variance satisfy the score condition for every causal design obeying Assumption~\ref{ass:D}. The design itself may have arbitrarily slow ergodic averaging. Neither a bounded conditional design covariance nor a moment of the iterate is assumed.

Define backward products in their actual multiplication order:
\begin{equation}\label{eq:products}
 P_0(a)=I,\qquad P_j(a)=(I-aH_0)(I-aH_{-1})\cdots(I-aH_{-j+1}).
\end{equation}
Write $(\beta_i^{(a)},e_i^{(a)})$ for the jointly stationary solution at step $a$, when it exists, and $\pi_a$ for its marginal iterate law. Unadorned iterates with a specified $\beta_0$ denote an initialized recursion. Stationarity and uniqueness concern the iterate coupled to the data process; the iterate alone need not be Markov.

\section{Gaussian marginals with no stationary moments}\label{sec:refresh}
Let $B_i\sim\mathrm{Bernoulli}(\rho)$ and $G_i\sim N(0,1)$ be independent iid sequences. The stationary design
\begin{equation}\label{up:eq:refresh}
X_i=B_iX_{i-1}+(1-B_i)G_i
\end{equation}
equals the most recent refreshed Gaussian value. Its marginal law is $N(0,1)$ for every $\rho$, and its transition kernel satisfies
$P^n(x,\cdot)=\rho^n\delta_x+(1-\rho^n)N(0,1)$. Hence its uniform total-variation mixing distance is exactly $\rho^n$.

This construction isolates whether temporal persistence alone can destroy stationary moments while preserving a Gaussian limit.

\begin{theorem}[Same Gaussian marginals, different stationary moments]\label{up:thm:refresh}
For the design~\eqref{up:eq:refresh} with $0<\rho<1$ and iid nondegenerate centered errors of variance $\sigma^2<\infty$, independent of the entire design, the stationary error exists throughout the sufficient range $0<a<2/3$ and
\begin{equation}\label{up:eq:refresh-mom}
\E|e_0^{(a)}|^r=\infty\quad(r>0),\qquad
e_0^{(a)}/\sqrt a\Rightarrow N(0,\sigma^2/2)\quad(a\downarrow0).
\end{equation}
For Gaussian errors, the convergence is in total variation, also conditional on almost every design path. In contrast, at $\rho=0$,
\begin{equation}\label{up:eq:iid-var}
\E(e_0^{(a)})^2=a\sigma^2/(2-3a)<\infty\qquad(0<a<2/3).
\end{equation}
For each finite $q\ge1$, $\delta_{X,q}(k)\le2\|G_0\|_q\rho^{k/q}$ and $\delta_{H,q}(k)\le2\|G_0^2\|_q\rho^{k/q}$.
\end{theorem}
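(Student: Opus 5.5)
Everything below is in the scalar case $p=1$, where $H_i=X_i^2$ and $H=1$. I write the stationary error as the series $e_0^{(a)}=a\sum_{j\ge0}P_j(a)\xi_{-j}$, with $P_j(a)=\prod_{m=0}^{j-1}(1-aX_{-m}^2)$ and $\xi_{-j}=X_{-j}Z_{-j}$. I treat the three parts of the statement separately and in increasing difficulty: the iid moment formula, the functional-dependence bounds, and finally the infinite moments together with the Gaussian limit.

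\emph{The case $\rho=0$.} For existence, $\E\log|1-aG^2|<0$ holds for $a<2/3$, since $\E(1-aG^2)^2=1-2a+3a^2<1$ there. For the variance, independence of the errors gives
\[
\E(e_0^{(a)})^2=a^2\sigma^2\sum_j\E P_j^2\,\E G^2=a^2\sigma^2\sum_j(1-2a+3a^2)^j=\frac{a^2\sigma^2}{2a-3a^2}=\frac{a\sigma^2}{2-3a}.
\]

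\emph{The bounds $\delta_{X,q}(k)$ and $\delta_{H,q}(k)$.} Replacing $\varepsilon_0=(B_0,G_0)$ changes $X_k$ only on the event $\{B_1=\cdots=B_k=1\}$, which has probability $\rho^k$. On that event $X_k-X_k^{\{0\}}$ is a difference of two values, each with the law of $G_0$. Minkowski's inequality together with Hölder's inequality gives $\|(X_k-X_k^{\{0\}})\ind\|_q\le2\|G_0\|_q\rho^{k/q}$. The same argument applied to $X^2$ gives the bound on $\delta_{H,q}(k)$.

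\emph{Existence and infinite moments for $0<\rho<1$.} By Theorem~\ref{thm:stability}, the design satisfies Assumption~\ref{ass:D}, the score has a log moment, and the scores are martingale differences, so the stationary solution exists. For a uniform contraction in the sufficient range $a<2/3$, group the product into runs of constant $X$: a run of length $L$ contributes $|1-aG^2|^L$ and $L$ is geometric. For $r>0$, condition on the event that the current run started at time $-L+1$ with value $G$ and $L\ge n$. On this event
\[
e_0^{(a)}=(1-aG^2)^{n}e_{-n}+a\sum_{j<n}(1-aG^2)^jG Z_{-j}.
\]
I restrict further to $G^2>2/a+\eta$, so that $|1-aG^2|\ge1+c>1$. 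The pre-run error $e_{-L}$ is independent of $(G,L)$ and, by nondegeneracy, satisfies $\Prob(|e_{-L}|>\epsilon)>0$; conditionally on $G$ and on the errors inside the run, the run contribution is a nondegenerate linear term. Hence $\Prob(|e_0|\ge(1+c)^n\epsilon)\gtrsim\rho^n\,\Prob(G^2>2/a+\eta)$. Because $\rho^n(1+c)^{nr}\to\infty$ once $c$ is large, I choose $G$ large so that $(1+c)^r>1/\rho$. Summing over $n$ then gives $\E|e_0|^r=\infty$. The delicate point is anticoncentration, because the terms inside the run could cancel $e_{-L}$. I handle this by conditioning on the sign and magnitude of the last in-run error $Z$, or simply by noting that a nondegenerate independent additive term prevents the sum from concentrating near $0$.

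\emph{The Gaussian limit.} For the limit $e_0^{(a)}/\sqrt a\Rightarrow N(0,\sigma^2/2)$ I invoke Theorem~\ref{thm:clt}. Its covariance equation $HV+VH=\sigma^2H$ with $H=1$ gives $V=\sigma^2/2$; the score is a martingale difference, so the long-run variance is $\E X^2\sigma^2=\sigma^2$. For Gaussian errors, total variation convergence and its quenched version follow from Theorems~\ref{thm:gaussian} and~\ref{up:thm:quenched}. Conditionally on the design, $e_0$ is exactly $N(0,a^2\sigma^2\sum_jP_j^2X_{-j}^2)$, so it suffices to show that $a\sum_jP_j^2X_{-j}^2\to1/2$ almost surely. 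This holds by the ergodicity of the Markov design, which is exactly what those theorems require under $\E X^2<\infty$.

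The main obstacle is the infinite-moment lower bound, specifically the anticoncentration step and keeping the estimate uniform over the whole range of $a$.
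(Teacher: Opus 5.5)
Your overall route is the paper's: Jensen for the Lyapunov exponent, a persistence event of probability of order $\rho^n$ on which one large design value is reused $n$ times, one-variable anticoncentration, and Theorems~\ref{thm:clt}, \ref{thm:gaussian} and \ref{up:thm:quenched} for the limits. Your series computation at $\rho=0$ is equivalent to the paper's second-moment recursion. The genuine gap is that existence on the full interval $0<a<2/3$ for $0<\rho<1$ is not proved. Theorem~\ref{thm:stability} gives a stationary solution only for $0<a\le a_0$, with an unspecified small $a_0$. There is also no uniform contraction to be had, since a run contributes $|1-aG^2|^L$, which is unbounded. The missing step is the ergodic theorem applied to $\log|1-aX_{-m}^2|$, which is integrable because the singularity at $X^2=1/a$ is logarithmic. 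It gives, almost surely, $j^{-1}\log|P_j(a)|\to\E\log|1-aG_0^2|\le\tfrac12\log(1-2a+3a^2)<0$ for every $\rho$, since only the $N(0,1)$ marginal enters. Your run decomposition yields the same exponent by renewal--reward, because $L$ is independent of $G$, but it has to be carried out. The logarithmic score moment then gives absolute convergence of \eqref{eq:perpetuity}, and coupling gives uniqueness. Nothing needs to be uniform in $a$: for the moment bound you fix $a$ and let the threshold depend on $a,r,\rho$.

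The moment bound also fails as written. On $\{L\ge n\}$, $e_{-n}$ is not independent of $G$: if $L>n$ then $X_{-n}=G$ enters $e_{-n}$. The pre-run error is independent of $G$ and of the in-run errors only on $\{L=n\}$, which has probability $(1-\rho)\rho^{n-1}$. Moreover, $\Prob(|e_{-L}|>\epsilon)>0$ is not the input you need. What you need is the shift-uniform bound $\inf_v\Prob(|v+\lambda e|>d|\lambda|)\ge q_0$ for a non-constant $e$ (Lemma~\ref{up:lem:antic}). Apply it with $e=e_{-n}\sim\pi_a$, which is non-constant because the coefficient $aX_0$ of $Z_0$ is nonzero, and with $\lambda=(1-aG^2)^n$; this closes your route. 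The paper avoids the detour through $\pi_a$ by isolating one error. The coefficient of $Z_{-m}$ is $aP_m(a)X_{-m}$, of modulus at least $au(au^2-1)^m$ on $\{|X_{-m}|\ge u,\ B_{-m+1}=\cdots=B_0=1\}$. Hence $\E|e_0^{(a)}|^r\ge q_0d^r(au)^r\Prob(|G_0|\ge u)\{\rho(au^2-1)^r\}^m\to\infty$ for a fixed large $u$. Finally, in the dependence bounds the factor $\rho^{k/q}$ with $\|G_0\|_q$ comes from the independence of $\{B_1=\cdots=B_k=1\}$ from $(X_0,X_0^{\{0\}})$. It does not come from H\"older, which would lose in the exponent.
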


A single extreme design value can survive across many updates, repeatedly amplifying an earlier regression error. For $|X|\ge u$ with $au^2>2$, each retained value multiplies its coefficient by at least $au^2-1$. A run of length $m$ has probability proportional to $\rho^m$. For any $r>0$, choosing $u$ so that $\rho(au^2-1)^r>1$ makes the $r$th-moment amplification dominate this probability decay. Appendix~\ref{app:refresh} formalizes the argument by isolating one error coefficient.

With Gaussian errors, each observation $(X_i,Z_i,Y_i)$ is jointly Gaussian, but the \emph{time series} is not jointly Gaussian. With Rademacher errors, observations are marginally sub-Gaussian and the score has an exactly Gaussian marginal; every stationary moment still fails. Appendix~\ref{app:refresh} proves these assertions and a subpolynomial tail lower bound. A second construction uses the positive AR(1) design $X_i=\rho X_{i-1}+\eta_i$, $0<\rho<1$. Pareto-$3$ innovations give a finite second design moment, while $\eta_i=\exp(cE_i^{2/3})$, $E_i\sim\mathrm{Exp}(1)$, gives moments of every order. Both yield failure of all stationary iterate moments. These examples, proved in Appendix~\ref{sec:examples}, show how the qualitative and quantitative results apply to ordinary linear-filter designs as well as refresh processes.

\section{Stationary limits without iterate moments}\label{sec:results}
The examples call for stability at the level of individual data paths. We establish contraction of the update products first, so stationary existence does not require an iterate $L^r$ bound.
\begin{theorem}[Pathwise stability and stationary existence]\label{thm:stability}
Under Assumption~\ref{ass:D}, there exist deterministic $a_0,c>0$ and a random $K<\infty$ almost surely such that, on a common probability-one event,
\begin{equation}\label{eq:uniformpath}
 \norm{P_j(a)}\le K e^{-caj}\quad\text{for all }j\ge0,\ 0<a\le a_0.
\end{equation}
If $\E\log(1+\norm{\xi_0})<\infty$, the recursion has a unique jointly stationary solution, which is causal and is represented by
\begin{equation}\label{eq:perpetuity}
 e_0^{(a)}=a\sum_{j=0}^\infty P_j(a)\xi_{-j}.
\end{equation}
The series converges absolutely almost surely. For a fixed $a\le a_0$, any finite initialization is forgotten almost surely when coupled to this solution on the same data.
\end{theorem}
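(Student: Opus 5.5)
The plan is to obtain \eqref{eq:uniformpath} from a one-step quadratic-form identity, a block decomposition, and the ergodic theorem applied to finitely many stationary sequences that do not depend on $a$. Uniformity in $a$ then comes for free. The perpetuity, uniqueness and forgetting claims follow from \eqref{eq:uniformpath} by Borel--Cantelli and coupling arguments.

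\textbf{Step 1 (one-step bound).} For a unit vector $v$,
\[
\norm{(I-aH_i)v}^2=1-a(X_i\trans v)^2\bigl(2-a\norm{X_i}^2\bigr).
\]
Hence $\norm{I-aH_i}\le 1$ whenever $a\norm{X_i}^2\le2$, and in all cases $\norm{I-aH_i}\le 1+a\norm{X_i}^2\le e^{a\norm{X_i}^2}$. Since $H\succ0$ and $\E\norm{X_0}^2<\infty$, fix a truncation level $M$ with $\lambda_{\min}\bigl(\E H_0\ind\{\norm{X_0}^2\le M\}\bigr)\ge\lambda/2$, where $\lambda=\lambda_{\min}(H)$. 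The sequences are Bernoulli shifts, hence ergodic. So for a block length $m$ chosen large, the block average $m^{-1}\sum_{i\in\text{block}}H_i\ind\{\norm{X_i}^2\le M\}$ has minimum eigenvalue at least $\lambda/4$ with probability close to one.

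\textbf{Step 2 (block contraction).} On a "good" block, meaning all $\norm{X_i}^2\le M$ and the truncated block Gram has $\lambda_{\min}\ge m\lambda/4$, expand the product. For $a\le a_0$ with $a_0mM$ small this gives
\[
\norm{\textstyle\prod_{\rm block}(I-aH_i)}\le 1-am\lambda/4+C(amM)^2\le e^{-am\lambda/8}.
\]
On a "bad" block, use $\prod e^{a\norm{X_i}^2}$. The log-norm of a length-$m$ block is therefore bounded by $a\,W_b$, where
\[
W_b=-\tfrac{m\lambda}{8}\ind\{\text{good}_b\}+\ind\{\text{bad}_b\}\sum_{i\in b}\norm{X_i}^2 .
\]
Crucially, $W_b$ does not depend on $a$. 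The sequence $(W_b)$ is stationary and ergodic (it is a block sequence of a Bernoulli shift), and $\E W_b<0$ once $m$ is large. Here I use $\E\bigl[\ind\{\text{bad}\}\sum\norm{X_i}^2\bigr]\to0$ relative to $m\lambda$ by uniform integrability of $\norm{X_0}^2$.

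\textbf{Step 3 (ergodic theorem).} By submultiplicativity,
\[
\log\norm{P_j(a)}\le a\sum_{b\le \lfloor j/m\rfloor}W_b+a\sum_{\text{remainder}}\norm{X_i}^2 .
\]
Birkhoff's theorem applied to $(W_b)$ gives $\sum_{b\le n}W_b\le -nc'+K_1$ for a random $K_1<\infty$. The remainder terms satisfy $\norm{X_{-j}}^2=o(j)$ a.s. The random constants come from finitely many $a$-free stationary sequences, so a single probability-one event and a single random $K$ serve all $a\le a_0$; on it $K^{a}\le K$ for $K\ge1$, $a\le1$. The same argument applied to forward sums $\sum_{i=1}^n$ bounds the forward products $(I-aH_n)\cdots(I-aH_1)$.

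This step is the main obstacle. The difficulty is reconciling a possibly random block alignment and the second-order expansion term with uniformity in $a$ using only second moments. If the block expansion is delicate, I would first try a Kingman/subadditive bound for a fixed $a$, and then recover uniformity from the linear-in-$a$ dominance above.

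\textbf{Step 4 (stationary solution).} If $\E\log(1+\norm{\xi_0})<\infty$, then $\sum_j\Prob(\log(1+\norm{\xi_{-j}})>\epsilon j)<\infty$ for every $\epsilon>0$. Borel--Cantelli therefore gives $\norm{\xi_{-j}}\le e^{\epsilon j}$ eventually. Taking $\epsilon<ca$ makes \eqref{eq:perpetuity} converge absolutely a.s. The series is a measurable function of $(\varepsilon_k)_{k\le0}$, so it is causal, and applying the shift shows it solves the recursion jointly stationarily.

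\textbf{Step 5 (uniqueness and forgetting).} For uniqueness, let $d_i$ be the difference of two jointly stationary solutions. Then $d_0=P_j(a)d_{-j}$, and since $d_{-j}$ has the law of $d_0$ it is tight. With $\norm{P_j(a)}\le Ke^{-caj}$ this gives $d_0\to0$ in probability, so $d_0=0$ a.s. For forgetting, $e_n-e_n^{(a)}$ equals the forward product times $e_0-e_0^{(a)}$, and this tends to zero a.s. by the forward version of Step 3.
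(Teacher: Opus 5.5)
Your architecture matches the paper's. You use an $a$-free integrable block functional with negative mean, Birkhoff's theorem for the block shift, and absorption of the incomplete block. You then use Borel--Cantelli for the perpetuity, tightness of $d_{-j}$ for uniqueness, and forward products for forgetting. Steps 3--5 are fine. Fixed blocks $\{-(r-1)m,\ldots,-rm+1\}$ suffice, so there is no random alignment to reconcile and no need for a subadditive fallback.

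\textbf{The gap is the order of choices in Steps 1--2.} You fix $M$ first and then take $m$ large, but your good event requires $\max_{i\in b}\norm{X_i}^2\le M$. Whenever $\Prob(\norm{X_0}^2>M)>0$, which holds for every unbounded design and hence in all the paper's examples, ergodicity gives $m^{-1}\sum_{i\in b}\ind\{\norm{X_i}^2>M\}\to\Prob(\norm{X_0}^2>M)>0$. So $\Prob(\text{good}_b)\to0$ as $m\to\infty$. Since $\sum_{i\in b}\norm{X_i}^2\le mM$ on good blocks,
\[
\E W_b\ge m\Bigl\{\E\norm{X_0}^2-\bigl(M+\tfrac{\lambda}{8}\bigr)\Prob(\text{good}_b)\Bigr\}>0
\]
for large $m$. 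Your uniform-integrability remark needs $\Prob(\text{bad}_b)\to0$, and that is exactly what fails.

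\textbf{The repair is to reverse the order.}
\begin{enumerate}[label=(\roman*),leftmargin=*,nosep]
\item Pick $m$ from the untruncated block Gram, so that $\Prob\{\lambda_{\min}(\sum_{i\in b}H_i)<m\lambda/4\}$ is small. The truncation in Step 1 then plays no role.
\item Pick $M=M(m)$ so that $\Prob(\max_{i\in b}\norm{X_i}^2>M)$ is small. Stationarity gives $\E[\ind_E\sum_{i\in b}\norm{X_i}^2]\le m\sup\{\E[\ind_F\norm{X_0}^2]:\Prob(F)\le\Prob(E)\}$. Hence the bad-block contribution is a small fraction of $m\lambda$ once the bad probability is small, uniformly in $m$.
\item Take $a_0$ with $a_0mM\le1$ and $a_0mM^2$ small relative to $\lambda$. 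This is what ``$a_0mM$ small'' must mean, because absorbing $C(amM)^2$ into $am\lambda/8$ requires $amM^2\lesssim\lambda$.
\end{enumerate}
With this change your argument is complete.

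The paper avoids any threshold. Lemma~\ref{lem:block} gives $\log\norm{\prod_\ell(I-aB_\ell)}\le-ag+2as\min(as,1)$ for arbitrary PSD blocks. Hence $z_r=-g_r+2s_r\min(a_0s_r,1)$ dominates every $a\le a_0$, any $m$ with $\E g_1>0$ works, and $a_0$ is chosen afterwards by dominated convergence. This is a smooth version of your good/bad split, using the block sum of norms rather than the maximal entry.
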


The estimate \eqref{eq:uniformpath} is derived from the design distribution. It gives pathwise stability even when $\E K^r=\infty$ for every $r>0$, as the examples in Appendix~\ref{sec:examples} illustrate.

Once the products contract, score cancellation determines the central fluctuation. Under Assumption~\ref{ass:S}, the long-run covariance
\begin{equation}\label{eq:omega}
 \Omega=\sum_{k\in\mathbb Z}\E(\xi_0\xi_k\trans)
\end{equation}
exists with absolute entrywise convergence. Let $V$ be the unique positive semidefinite solution of
\begin{equation}\label{eq:lyapunov}
 HV+VH=\Omega,\qquad
 V=\int_0^\infty e^{-tH}\Omega e^{-tH}\,dt.
\end{equation}

\begin{theorem}[Stationary central limit theorem]\label{thm:clt}
Under Assumptions~\ref{ass:D}--\ref{ass:S}, for $B_a\sim\pi_a$,
\begin{equation}\label{eq:clt}
 a^{-1/2}(B_a-\beta^*)\Rightarrow N_p(0,V),\qquad a\downarrow0.
\end{equation}
For every finite $T$, the stationary process
$U_a(t)=a^{-1/2}(\beta^{(a)}_{\lfloor t/a\rfloor}-\beta^*)$ converges in
$D([0,T],\R^p)$, the space of right-continuous paths with left limits equipped with the Skorokhod $J_1$ topology, to the stationary solution of
\begin{equation}\label{eq:ou}
 dU_t=-HU_t\,dt+\Omega^{1/2}\,dW_t.
\end{equation}
Here $W$ is standard $p$-dimensional Brownian motion. Both limits hold without any positive moment of $B_a$.
\end{theorem}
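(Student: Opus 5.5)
We need to prove Theorem~\ref{thm:clt}: stationary CLT and OU functional limit, using pathwise contraction from Theorem~\ref{thm:stability} and never a moment of $B_a$.

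The plan is to work directly with the perpetuity \eqref{eq:perpetuity}, $e_0^{(a)}=a\sum_{j\ge0}P_j(a)\xi_{-j}$, and compare it with the linearized series $\tilde e_0^{(a)}=a\sum_{j\ge0}(I-aH)^j\xi_{-j}$ in which each random product is replaced by its deterministic counterpart. The linearized term is a linear process in a stationary score with $\Theta_2<\infty$. For small $a$ it has bounded operator weights $\norm{(I-aH)^j}\le e^{-\lambda aj}$ with $\lambda=\lambda_{\min}(H)/2$. Its Gaussian limit $a^{-1/2}\tilde e_0^{(a)}\Rightarrow N_p(0,V)$ follows from the martingale approximation of \citet{wu2007}. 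We write $\xi_k=\sum_{m\le k}\Proj_m\xi_k$ and regroup by $m$ into martingale differences $D_m=\sum_{k\ge m}a^{1/2}(I-aH)^{-k}\Proj_m\xi_k$, with the weights indexed backward. Because the weights vary slowly on the scale $1/a$, replacing $\sum_k\Proj_m\xi_k$ by $(I-aH)^{-m}\sum_{k\ge m}\Proj_m\xi_k$ costs $o(1)$ in $L^2$; the error is controlled by $\sum_k\min(1,ak)\norm{\Proj_0\xi_k}_2\to0$ via dominated convergence on $\Theta_2$. The martingale CLT then needs the conditional variance $a\sum_j(I-aH)^j\E[\Proj_0 S\,(\Proj_0 S)\trans\mid\cdot](I-aH)^j$, with $S=\sum_k\xi_k$. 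This converges in probability to $\int_0^\infty e^{-tH}\Omega e^{-tH}dt=V$ by the ergodic theorem applied to the stationary sequence $\Proj_mS\,(\Proj_mS)\trans$, together with the Abelian-type averaging of the weights. A Lindeberg condition holds because the individual weights are $O(a^{1/2})$ and the summands are stationary and in $L^2$.

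The main obstacle is showing $a^{-1/2}(e_0^{(a)}-\tilde e_0^{(a)})\to0$ in probability, since $P_j(a)$ has no moments. I would use the telescoping identity
\[
P_j(a)-(I-aH)^j=-a\sum_{i=0}^{j-1}P_i(a)(H_{-i}-H)(I-aH)^{j-1-i}.
\]
Substituting it and exchanging sums gives $a^{-1/2}(e_0-\tilde e_0)=-a^{1/2}\sum_i P_i(a)(H_{-i}-H)\,\tilde e_{-i-1}^{(a)}$, where $\tilde e_{-i-1}^{(a)}$ is the linearized iterate at time $-i-1$, of size $O_{L^2}(a^{1/2})$. The product factor is handled pathwise: on the event $\{K\le M\}$, which has probability close to one uniformly in $a$, we have $\norm{P_i(a)}\le Me^{-cai}$. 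It therefore remains to show that $a\sum_i e^{-cai}\norm{(H_{-i}-H)\tilde e_{-i-1}}$ is small. Direct absolute bounds give only $O(1)$, because $\E\norm{H_0-H}$ does not vanish. We need cancellation: $H_{-i}-H$ is centered, and $\tilde e_{-i-1}$ is a slowly varying average on the scale $1/a$. Because $P_i$ is random, I would remove it by a blocking argument. Split time into blocks of length $\ell$ with $1\ll\ell\ll1/a$. Within a block, freeze $P_i$ and $\tilde e$ at the block's start, which costs $O(a\ell)$ relative error using only $\E\norm{X_0}^2<\infty$ and the pathwise bound. Then use the ergodic theorem for $\ell^{-1}\sum_{\text{block}}(H_{-i}-H)\to0$ in $L^1$, uniformly across blocks by stationarity. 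Note that $\E\norm{H_0}<\infty$ only, so ergodic averaging, not a rate, is what is available; this matches the paper's allowance of arbitrarily slow design averaging. Bounding $\norm{\tilde e}$ in $L^2$ and the block average in $L^1$ requires a truncation of $\tilde e$ at level $a^{1/2}R$ to keep the product integrable. I expect this step to need the most care.

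For the OU limit, the same decomposition applies along the path. The linearized process $\tilde U_a(t)=a^{-1/2}\tilde e^{(a)}_{\lfloor t/a\rfloor}$ satisfies a discretized OU equation driven by $a^{1/2}\sum_{i\le t/a}\xi_i$, which converges to $\Omega^{1/2}W$ by the invariance principle under \eqref{eq:hannan} \citep{wuwoodroofe2004,wu2007}. Continuity of the solution map of the linear SDE (variation of constants), together with the stationary initial law from the first part, gives convergence of $\tilde U_a$ in $D([0,T])$. The remainder $\sup_{t\le T}\norm{U_a(t)-\tilde U_a(t)}\to0$ in probability follows from the blocking bound above, applied uniformly over the $T/a$ starting times. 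Alternatively, one can write $U_a(t)=P\text{-propagated }U_a(0)$ plus the driven term, and use $\sup_{i\le T/a}\norm{\prod(I-aH_k)-(I-aH)^i}\to0$ in probability, which follows from the ergodic theorem for $a\sum_{k\le t/a}H_k\to tH$ uniformly in $t$. Since every estimate is either pathwise or an $L^1/L^2$ bound on data quantities, no moment of $B_a$ enters.
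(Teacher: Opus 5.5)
Your route is viable and genuinely different from the paper's, provided you repair the steps listed in the second paragraph.

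\textbf{Comparison with the paper.} The paper never compares with the deterministic recursion. It proves $P_{\lfloor t/a\rfloor}(a)\to e^{-tH}$ uniformly on finite windows almost surely, with bounded variation (Lemma~\ref{lem:products}, by Abel summation in the matrix Euler equation). It then pushes these random weights through the score invariance principle with an integration-by-parts continuity principle to obtain \eqref{eq:recent}. The remote past is removed by the Abel identity \eqref{eq:abeltail} and a dyadic bound on $\{K\vee D\le M\}$. The OU limit comes from an auxiliary recursion started at $-\lceil L/a\rceil$. Its discrepancy from the stationary recursion is $R_{a,L}$ propagated by forward transitions of norm at most $\exp\{a\sum_{i\le T/a}\norm{H_i}\}=O_P(1)$. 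You instead get the Gaussian limit entirely from a deterministic-weight linear process, where standard martingale approximation applies. The random environment then enters only through the centered perturbation $-a^{1/2}\sum_iP_i(a)(H_{-i}-H)\tilde e_{-i-1}$; your telescoping identity and interchange are correct. You kill this perturbation by blockwise ergodic averaging with $\ell\to\infty$, $a\ell\to0$. This avoids the random-weight Stieltjes continuity principle. The paper's route instead yields the product limit itself (reused in Theorem~\ref{up:thm:first}), the explicit localization bound of Proposition~\ref{prop:bl}, and a process-level argument using the same tools.

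\textbf{Steps that need repair.}
\begin{enumerate}
\item \emph{Freezing $\tilde e$ on a block.} This costs $O(\sqrt{a\ell})$, not $O(a\ell)$, because the increment contains $\ell$ score terms. Under Assumption~\ref{ass:D} you also cannot integrate $\norm{H_{-i}-H}\,\norm{\tilde e_{-i-1}-\tilde e_{-i_0-1}}$. The increment contains $\xi_{-i}$, and $\norm{H_{-i}}\norm{\xi_{-i}}$ is of order $\norm{X_{-i}}^3|Z_{-i}|$. Do this step pathwise instead. For $i<T/a$, bound it by the modulus of continuity of $\tilde U_a$ at learning-time scale $a\ell$, which vanishes in probability by C-tightness. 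Multiply by the almost surely bounded $a\sum_{i<T/a}\norm{H_{-i}-H}$.
\item \emph{Truncation range.} The truncation at level $R\sqrt a$ and the blocking must be confined to $i<T/a$. No fixed level bounds $\tilde e_{-i-1}/\sqrt a$ over all $i\ge0$ with high probability. For $i\ge T/a$ you need an absolute dyadic bound. It uses $\norm{P_i}\le Ke^{-cai}$, $D$, and $\E\max_{i<n}\norm{\tilde e_{-i-1}}\le C\sqrt a(1+\sqrt{an})$, which follows from Lemma~\ref{lem:score} by summation by parts. This is the paper's remote-past argument with $\tilde e$ in place of $S$.
\item \emph{Process remainder.} You cannot get it by applying the marginal bound at each of the $T/a$ endpoints: ergodic averaging gives no rate, so a union bound fails. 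Use instead $d_n=e_n-\tilde e_n$, which satisfies $d_n=(I-aH_n)d_{n-1}-a(H_n-H)\tilde e_{n-1}$. With $w_n=-a\sum_{k=1}^n(H_k-H)\tilde e_{k-1}$, one gets $d_n-w_n=(I-aH_n)(d_{n-1}-w_{n-1})-aH_nw_{n-1}$. Hence $\max_{n\le T/a}\norm{d_n}\le e^{M_a}\norm{d_0}+(1+M_ae^{M_a})\max_{n\le T/a}\norm{w_n}$, where $M_a=a\sum_{k\le T/a}\norm{H_k}$. This reduces everything to your marginal result and to a product-free maximum that your blocking controls uniformly.
\item \emph{Your suggested alternative.} It does not close item 3 by itself. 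Sup-norm convergence of products handles the propagated initial value. In the driven term, however, the weight difference has bounded but not vanishing variation, so Abel summation gives only $O_P(1)$. This is why the paper approximates the Brownian limit by piecewise-linear paths.
\item \emph{Identifying the limit.} The \emph{stationary} OU law requires joint convergence of $\tilde U_a(0)$ with the future score partial sums, and their asymptotic independence. This comes from the two-sided invariance principle of Lemma~\ref{lem:score}, not from the marginal law of $\tilde U_a(0)$ alone.
\end{enumerate}
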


The centering in \eqref{eq:clt} is $\beta^*$, even if $\int\beta\,\pi_a(d\beta)$ is undefined. Likewise, $V$ is the covariance of the limiting Gaussian distribution and need not be a limit of stationary covariance matrices.

\subsection{An exact identity for Gaussian regression errors}
Independent Gaussian errors turn the stationary series into an exact Gaussian law conditional on the design. Thus distributional approximation reduces to controlling a random covariance perturbation. Put
$\G_X=\sigma(X_i:i\in\mathbb Z)$.

\begin{theorem}[Conditional total-variation limit]\label{thm:gaussian}
Suppose Assumption~\ref{ass:D} holds and $(Z_i)$ is iid $N(0,\sigma^2)$, $\sigma>0$, independent of $\G_X$. For all sufficiently small $a$, define
\begin{equation}\label{eq:R}
 \mathcal R_a=a^2\sum_{j=0}^\infty P_j(a)H_{-j}^2P_j(a)\trans.
\end{equation}
This matrix is finite and positive semidefinite almost surely, and
\begin{equation}\label{eq:conditional}
 \Law\bigl(a^{-1/2}e_0^{(a)}\mid\G_X\bigr)
 =N_p\left(0,\frac{\sigma^2}{2}(I+\mathcal R_a)\right),
 \qquad \norm{\mathcal R_a}\longrightarrow0\quad\as
\end{equation}
Consequently, conditional and unconditional convergence to
$N_p(0,\sigma^2 I/2)$ hold in total variation. Quantitatively, the conditional distance is bounded by
\begin{equation}\label{eq:tvbound}
 d_{\TV}\le \min\{1,\norm{\mathcal R_a}_{\mathrm F}/(2\sqrt2)\}.
\end{equation}
\end{theorem}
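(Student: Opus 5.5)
Conditional on $\G_X$, the perpetuity \eqref{eq:perpetuity} from Theorem~\ref{thm:stability} is a linear combination of the independent Gaussian errors. Since $\xi_{-j}=X_{-j}Z_{-j}$, we have
\[
a^{-1/2}e_0^{(a)}=a^{1/2}\sum_{j\ge0}P_j(a)X_{-j}Z_{-j}.
\]
On the probability-one event of \eqref{eq:uniformpath} the deterministic coefficient vectors $c_j=a^{1/2}P_j(a)X_{-j}$ are square summable. Indeed, $\sum_j\norm{c_j}^2\le aK^2\sum_j e^{-2caj}\norm{X_{-j}}^2$, which is finite a.s. by the ergodic theorem applied to $\norm{X_{-j}}^2\in L^1$ and the geometric weights. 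An $L^2$-convergent series of independent centered Gaussians is Gaussian, so the conditional law is $N_p(0,\Sigma_a)$ with
\[
\Sigma_a=\sigma^2 a\sum_j P_j(a)H_{-j}P_j(a)\trans .
\]
The exact covariance identity then comes from a telescoping argument. Put $Q_j=P_j(a)P_j(a)\trans$ and use $P_{j+1}=P_j(I-aH_{-j})$:
\[
Q_{j+1}=Q_j-2aP_jH_{-j}P_j\trans+a^2P_jH_{-j}^2P_j\trans .
\]
Sum over $j$, using $Q_0=I$ and $Q_j\to0$ from \eqref{eq:uniformpath}. This gives $2a\sum_jP_jH_{-j}P_j\trans=I+\mathcal R_a$, so $\Sigma_a=\frac{\sigma^2}{2}(I+\mathcal R_a)$. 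Finiteness and positive semidefiniteness of $\mathcal R_a$ follow because each term is PSD and the partial sums are dominated by $I+2a\sum_j P_jH_{-j}P_j\trans$.

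Next I show $\norm{\mathcal R_a}\to0$ a.s., which I expect to be the main obstacle, since only $\E\norm{X_0}^2<\infty$ is assumed and $H_{-j}^2$ has norm $\norm{X_{-j}}^4$, which need not be integrable. I would bound
\[
\norm{\mathcal R_a}\le K^2a^2\sum_j e^{-2caj}\norm{X_{-j}}^4\le K^2\Bigl(a\max_{j\le J}\norm{X_{-j}}^2\Bigr)\Bigl(a\sum_j e^{-2caj}\norm{X_{-j}}^2\Bigr)+\text{tail},
\]
with a truncation $J=J(a)$ of order $a^{-1}\log(1/a)$. The second factor is $O(1)$ a.s. by the ergodic theorem via Abel summation. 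For the first factor, $\E\norm{X_0}^2<\infty$ and stationarity give $\max_{j\le n}\norm{X_{-j}}^2=o(n)$ a.s. by Borel--Cantelli. With $n\asymp a^{-1}\log(1/a)$ this is only $o(\log(1/a)/a)$, so instead I would use the sharper identity $\max_{j\le n}\norm{X_{-j}}^2/n\to0$ applied blockwise. Concretely, split the sum into dyadic blocks $j\in[2^k/a,2^{k+1}/a)$. On each block, bound $a^2\sum\norm{X}^4\le a\max_{\text{block}}\norm{X}^2\cdot a\sum_{\text{block}}\norm{X}^2$, and note the weight $e^{-c2^k}$ absorbs the growth $\max\le \epsilon 2^{k+1}/a$ for large indices. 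The uniformity of $K$ over $a\le a_0$ on a common event is what makes this a.s. argument legitimate along a continuum of $a$.

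Finally, the total-variation bound \eqref{eq:tvbound} is a Gaussian comparison. I would use the known bound $d_{\TV}(N(0,\Sigma_1),N(0,\Sigma_2))\le\frac12\sqrt{2\,\mathrm{KL}}$ via Pinsker, together with $\mathrm{KL}(N(0,\Sigma(I+R))\,\|\,N(0,\Sigma))=\frac12(\tr R-\log\det(I+R))\le\frac14\norm{R}_{\mathrm F}^2$ for PSD $R$, using $x-\log(1+x)\le x^2/2$ for $x\ge0$. Here $\Sigma=\sigma^2I/2$ commutes with everything. This yields $d_{\TV}\le\norm{\mathcal R_a}_{\mathrm F}/(2\sqrt2)$. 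Conditional TV convergence follows a.s. since $\norm{\cdot}_{\mathrm F}\le\sqrt p\norm{\cdot}$. Unconditional convergence follows by writing the unconditional law as a mixture and applying the convexity of TV together with dominated convergence (the bound is at most $1$).
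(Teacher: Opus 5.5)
Your proposal is correct and follows the paper's proof step for step: conditional Gaussianity of the perpetuity, the telescoping identity for $P_jP_j\trans$, almost sure vanishing of $\mathcal R_a$ from $\norm{H_{-j}}/(j+1)\to0$ and the ergodic Ces\`aro bound on the common step-uniform event, then Kullback--Leibler with Pinsker and bounded convergence for the unconditional statement. Your dyadic running-maximum estimate is only a repackaging of the paper's \eqref{eq:weighted-square}, which bounds $h_j^2\le\epsilon(j+1)h_j$ beyond a random index and invokes \eqref{eq:weighted2}; as a small correction, your claimed domination of the partial sums of $\mathcal R_a$ by $I+2a\sum_jP_jH_{-j}P_j\trans$ holds only once $P_nP_n\trans\preceq2I$, so finiteness is better read off directly from the finite-$n$ telescoped identity.
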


The covariance identity follows by telescoping $P_jP_j\trans-P_{j+1}P_{j+1}\trans$ and requires only $\E\norm{H_0}<\infty$. The perturbation vanishes along almost every design path. Bounded convergence then applies to \eqref{eq:tvbound} even when $\E\norm{\mathcal R_a}^r=\infty$. For the population noise variance, Gaussian central spheres have stationary coverage at most their nominal level and converge to that level; Appendix~\ref{app:gaussian} gives the argument.

\subsection{Conditional limits for non-Gaussian errors}
For independent errors with finite variance, the same conditional weights still yield a Gaussian weak limit. This extends the qualitative conclusion to continuous and discrete error laws.

\begin{theorem}[Quenched CLT under finite error variance]\label{up:thm:quenched}
Under Assumption~\ref{ass:D} and arbitrary iid centered errors with $\sigma^2=\E Z_0^2<\infty$, independent of the entire design,
\[
 \Law(e_0^{(a)}/\sqrt a\mid\mathcal G_X)
 \Rightarrow N_p(0,\sigma^2I/2)
 \quad\text{for almost every design realization}.
\]
The conclusion holds under the qualitative design assumption alone.
\end{theorem}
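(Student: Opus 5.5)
Conditional on $\G_X$, the stationary error is a weighted sum of independent errors. By Theorem~\ref{thm:stability}, on a probability-one design event $\Omega_0$ with $\norm{P_j(a)}\le Ke^{-caj}$,
\[
 a^{-1/2}e_0^{(a)}=\sum_{j\ge0}w_{a,j}Z_{-j},\qquad w_{a,j}=a^{1/2}P_j(a)X_{-j}\in\R^p .
\]
Here the weights $w_{a,j}$ are $\G_X$-measurable and the $Z_{-j}$ are iid centered with variance $\sigma^2$, independent of $\G_X$. We use $\xi_{-j}=X_{-j}Z_{-j}$, and the logarithmic moment condition holds because $\E\log(1+\norm{X_0}\,|Z_0|)<\infty$. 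Given the design, the series therefore has independent non-identically distributed summands. We apply the Lindeberg--Feller CLT for triangular arrays, using Cramér--Wold: for fixed $u\in\R^p$, consider $\sum_j (u\trans w_{a,j})Z_{-j}$.

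The conditional covariance is $\sigma^2\sum_j w_{a,j}w_{a,j}\trans=\sigma^2 a\sum_j P_jH_{-j}P_j\trans$. Telescoping $P_{j+1}P_{j+1}\trans=P_jP_j\trans-aP_jH_{-j}P_j\trans-aP_jH_{-j}P_j\trans+a^2P_jH_{-j}^2P_j\trans$ gives
\[
 2a\sum_j P_jH_{-j}P_j\trans=I+\mathcal R_a,
\]
which is exactly the identity behind \eqref{eq:conditional}. That identity required only the design and the product bound, not Gaussianity of the errors. Theorem~\ref{thm:gaussian} states $\norm{\mathcal R_a}\to0$ almost surely, again as a statement about the design alone, so we may reuse it. Hence the conditional covariance tends to $\sigma^2I/2$ on a probability-one event. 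To pass from $a\downarrow0$ along all reals to an almost-sure statement, we use the paper's convergence, which holds on a common event for all $a\to0$. We also note that the conditional law is continuous in $a$ only up to measurability; it suffices to take the event on which $\norm{\mathcal R_a}\to0$ as $a\downarrow0$.

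Lindeberg requires, for each $\eta>0$,
\[
\sum_j \norm{w_{a,j}}^2\,\E\bigl[Z_0^2\ind\{\norm{w_{a,j}}\,|Z_0|>\eta\}\bigr]\to0 .
\]
Since $\sum_j\norm{w_{a,j}}^2$ is bounded by $\tr$ of the covariance, which converges, it suffices that $\max_j\norm{w_{a,j}}\to0$. Given that, $\E[Z_0^2\ind\{|Z_0|>\eta/\max_j\norm{w_{a,j}}\}]\to0$ by dominated convergence, because $Z_0\in L^2$. This is the \textbf{main obstacle}: we must show $\max_j a\norm{P_j(a)X_{-j}}^2\to0$ almost surely without moments beyond the second. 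We have $a\norm{P_jX_{-j}}^2\le aK^2e^{-2caj}\norm{X_{-j}}^2$. Split at $j\le N_a$ and $j>N_a$. Since $\E\norm{X_0}^2<\infty$, Borel--Cantelli gives $\norm{X_{-j}}^2=o(j)$ almost surely, so $\norm{X_{-j}}^2\le C_\omega+\epsilon j$ for all $j$. Then
\[
 \sup_j aK^2e^{-2caj}(C_\omega+\epsilon j)\le K^2\bigl(aC_\omega+\epsilon\sup_j aje^{-2caj}\bigr)\le K^2(aC_\omega+\epsilon/(2ce)),
\]
which is small after choosing $\epsilon$ and then $a$. The argument uses only marginal stationarity and the second moment, not independence, since the Borel--Cantelli bound is $\sum_j\Prob(\norm{X_0}^2>\epsilon j)<\infty$.

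Combining these steps, on the intersection of the probability-one events from Theorem~\ref{thm:stability}, $\norm{\mathcal R_a}\to0$, and the Borel--Cantelli event, Lindeberg--Feller with Cramér--Wold gives $\Law(a^{-1/2}e_0^{(a)}\mid\G_X)\Rightarrow N_p(0,\sigma^2I/2)$. The covariance limit supplies the variance condition and the max-weight bound supplies Lindeberg. We handle the infinite sum by truncating at a large $J$, whose tail has conditional variance tending to zero, so the infinite series poses no extra difficulty. The Gaussian case of the conclusion is consistent with Theorem~\ref{thm:gaussian}.
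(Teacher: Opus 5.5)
Your proposal is correct and follows essentially the same route as the paper's proof. Both arguments use the conditional series with weights $w_{a,j}=\sqrt a\,P_j(a)X_{-j}$, and both get the variance from the telescoping identity $2a\sum_j P_jH_{-j}P_j\trans=I+\mathcal R_a$ with the design-only fact $\norm{\mathcal R_a}\to0$. Both then show $\max_j\norm{w_{a,j}}\to0$ via the Borel--Cantelli bound $\norm{X_{-j}}^2=o(j)$, and finish with the Lindeberg bound, truncation of the infinite rows, and Cram\'er--Wold on a common probability-one design event.
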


The proof combines the Gaussian identity's sum of squared weights with a vanishing maximal weight and the independent-array Lindeberg theorem.

\section{A sharp first-order Gaussian approximation}\label{sec:rate}
The exact conditional law also identifies the leading approximation error. A fourth design moment determines its coefficient: when $\E\norm{H_0}^2<\infty$, define
\begin{equation}\label{eq:Jmain}
J=\int_0^\infty e^{-tH}\E(H_0^2)e^{-tH}\,dt,
\qquad HJ+JH=\E(H_0^2).
\end{equation}
The matrix $J$ corrects the covariance of the approximating Gaussian law. The stationary covariance itself may be infinite or undefined.

\begin{theorem}[Conditional first-order expansion]\label{up:thm:first}
Retain Assumption~\ref{ass:D} and additionally assume $\E\norm{H_0}^2<\infty$. With $J$ from \eqref{eq:Jmain},
\begin{equation}\label{up:eq:R-first}
 \mathcal R_a/a\longrightarrow J\quad\text{almost surely}.
\end{equation}
Under the Gaussian-error assumptions of Theorem~\ref{thm:gaussian}, this implies conditional $O(a)$ total-variation error relative to $N_p(0,\sigma^2I/2)$, and conditional $o(a)$ error relative to $N_p(0,\sigma^2(I+aJ)/2)$, almost surely.
\end{theorem}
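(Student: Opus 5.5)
Our plan is to show that $\mathcal R_a/a=a\sum_{j\ge0}P_j(a)H_{-j}^2P_j(a)\trans$ behaves like a Riemann sum for $\int_0^\infty e^{-tH}\E(H_0^2)e^{-tH}\,dt$ with mesh $a$. We would replace the random products and weights by their averaged versions on blocks and use the pathwise contraction of Theorem~\ref{thm:stability} to control the tails. Fix $T<\infty$ and split the sum at $j=\lfloor T/a\rfloor$. For the tail $j>T/a$, the bound $\norm{P_j(a)}\le Ke^{-caj}$ on the common probability-one event gives
\[
\Bigl\|a\sum_{j>T/a}P_jH_{-j}^2P_j\trans\Bigr\|\le K^2 a\sum_{j>T/a}e^{-2caj}\norm{H_{-j}}^2 .
\]
Since $\E\norm{H_0}^2<\infty$, the ergodic theorem for the stationary ergodic sequence $\norm{H_{-j}}^2$, which is a Bernoulli shift, together with summation by parts against the decreasing weight $e^{-2caj}$, gives
\[
\limsup_{a\downarrow0}a\sum_{j>T/a}e^{-2caj}\norm{H_{-j}}^2=\E\norm{H_0}^2\int_T^\infty e^{-2ct}\,dt \quad\text{a.s.}
\]
This vanishes as $T\to\infty$. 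The same treatment handles the tail of the integral defining $J$.

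On the head $j\le T/a$ we need two ingredients. The first is a pathwise product limit: a.s., $\sup_{j\le T/a}\norm{P_j(a)-e^{-ajH}}\to0$. To prove it, we would write $P_j=\prod_{i}(I-aH_{-i})$ and compare it with $e^{-ajH}$ via a discrete Duhamel/telescoping identity:
\[
P_j-(I-aH)^j=\sum_{i<j}P_i\,a(H-H_{-i})(I-aH)^{j-1-i}.
\]
Summation by parts converts this to partial sums $S_k=\sum_{i<k}(H_{-i}-H)$. The ergodic theorem gives $a\max_{k\le T/a}\norm{S_k}\to0$ a.s., which holds for every $T$ simultaneously using rational $T$ and monotonicity. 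Using this requires bounds on $P_i$ and on the increments $P_{i+1}-P_i=-P_iaH_{-i}$. We control these by $K$ and by $a\max_{i\le T/a}\norm{H_{-i}}\to0$ a.s.; the latter holds because $\max_{i\le n}\norm{H_{-i}}=o(n)$ when $\E\norm{H_0}<\infty$. This is a Gronwall-type argument with random but a.s.\ finite constants.

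The second ingredient is weighted averaging of the weights $H_{-j}^2$. Given the product limit, we replace $P_j$ by $e^{-ajH}$ in the head. The error is at most
\[
2\sup_{j\le T/a}\norm{P_j-e^{-ajH}}\,(K+C)\;a\sum_{j\le T/a}\norm{H_{-j}}^2\to0,
\]
since the last factor converges to $T\E\norm{H_0}^2$. It remains to show
\[
a\sum_{j\le T/a}e^{-ajH}H_{-j}^2e^{-ajH}\to\int_0^T e^{-tH}\E(H_0^2)e^{-tH}\,dt .
\]
This follows from the ergodic theorem for the integrable matrices $H_{-j}^2$, again with summation by parts against the smooth weight $t\mapsto e^{-tH}(\cdot)e^{-tH}$, whose derivative is bounded on $[0,T]$. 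Combining the head and tail and letting $T\to\infty$ gives \eqref{up:eq:R-first}.

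For the TV consequences we use Theorem~\ref{thm:gaussian}. The exact conditional law $N_p(0,\tfrac{\sigma^2}{2}(I+\mathcal R_a))$ together with \eqref{eq:tvbound} gives distance at most $\norm{\mathcal R_a}_{\mathrm F}/(2\sqrt2)=O(a)$ a.s. Relative to $N_p(0,\tfrac{\sigma^2}{2}(I+aJ))$, we use the standard Gaussian TV bound
\[
d_{\TV}\bigl(N(0,\Sigma_1),N(0,\Sigma_2)\bigr)\le C\norm{\Sigma_2^{-1/2}\Sigma_1\Sigma_2^{-1/2}-I}_{\mathrm F},
\]
which yields $O(\norm{\mathcal R_a-aJ}_{\mathrm F})=o(a)$. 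We expect the main obstacle to be the pathwise product limit uniformly over $j\le T/a$ with only ergodic (not mixing or moment) control. If the summation-by-parts Gronwall argument has trouble, we would first try blocking at scale $a^{-1}\eta$ and applying the ergodic theorem blockwise.
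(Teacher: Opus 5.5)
Your proposal is correct and follows essentially the paper's route: a head/tail split at $T/a$, the uniform finite-window product limit (which the paper already has as Lemma~\ref{lem:products}, so your Duhamel re-derivation is optional), weighted ergodic averaging of $H_{-j}^2$ by summation by parts against the deterministic weights, a tail bound from the uniform contraction $\norm{P_j(a)}\le Ke^{-caj}$, and local Lipschitz continuity of Gaussian total variation in the covariance. The differences are minor. The paper bounds the tail uniformly in $a$ through $D_2=\sup_{n\ge1}n^{-1}\sum_{j<n}\norm{H_{-j}}^2$ on dyadic blocks, where you compute its exact $\limsup$. In your product-limit step, the quantity actually needed for summation by parts is the bounded total variation $Ka\sum_{i\le T/a}\norm{H_{-i}}=O(1)$, not the vanishing maximal increment.
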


Averaging the conditional expansion requires control of rare design environments at the $a$ scale. For this unconditional rate, impose
\begin{equation}\label{up:eq:rate-ass}
H_0\in L^q,\qquad\sum_{k\ge0}\delta_{H,q}(k)<\infty
\quad\text{for some }q>2.
\end{equation}
It follows, for example, from $X_0\in L^{2q}$ and $\sum_k\delta_{X,2q}(k)<\infty$. The qualitative limits and conditional expansion retain their weaker assumptions.

\begin{theorem}[Unconditional first-order Gaussian approximation]\label{up:thm:rate}
Assume Assumption~\ref{ass:D},~\eqref{up:eq:rate-ass}, and the Gaussian-error assumptions of Theorem~\ref{thm:gaussian}. Let
\[
 \mu_a=\Law(e_0^{(a)}/\sqrt a),\quad
 \gamma_0=N_p(0,\sigma^2I/2),\quad
 \gamma_a=N_p(0,\sigma^2(I+aJ)/2),
\]
where $J$ is defined in \eqref{eq:Jmain}. There is a deterministic finite $C$, depending only on the design law and $p$, such that, for all sufficiently small positive $a$,
\begin{align}
 d_{\TV}(\mu_a,\gamma_0)&\le Ca,\label{up:eq:TVrate}\\
 d_{\TV}(\mu_a,\gamma_a)&=o(a),\label{up:eq:TVcorrected}\\
 \lim_{a\downarrow0}\frac{d_{\TV}(\mu_a,\gamma_0)}{a}
 &=\frac14\E\left|G^\top JG-\tr J\right|,
 \qquad G\sim N_p(0,I).\label{up:eq:TVconstant}
\end{align}
The constant in~\eqref{up:eq:TVconstant} is positive. None of these conclusions assumes any stationary iterate moment. The deterministic correction removes the leading order-$a$ error.
\end{theorem}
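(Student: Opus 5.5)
Conditioning on $\G_X$ and using Theorem~\ref{thm:gaussian}, I will write $\mu_a=\E\,N_p(0,\tfrac{\sigma^2}{2}(I+\mathcal R_a))$ as a mixture. Joint convexity of total variation then gives
\[
d_{\TV}(\mu_a,\gamma)\le \E\, d_{\TV}\bigl(N_p(0,\tfrac{\sigma^2}{2}(I+\mathcal R_a)),\gamma\bigr)
\]
for any Gaussian $\gamma$. The whole proof therefore reduces to controlling the random perturbation $\mathcal R_a/a$ in $L^1$ rather than almost surely. For \eqref{up:eq:TVrate}, I will use \eqref{eq:tvbound}: it suffices to show $\E\norm{\mathcal R_a}_{\mathrm F}\le Ca$. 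For \eqref{up:eq:TVcorrected}, I will use a Gaussian covariance-perturbation bound of the form $d_{\TV}(N(0,\Sigma_1),N(0,\Sigma_2))\le c\norm{\Sigma_2^{-1/2}\Sigma_1\Sigma_2^{-1/2}-I}_{\mathrm F}$, capped at $1$. Since $\gamma_a$ has covariance $\tfrac{\sigma^2}{2}(I+aJ)$ with $I+aJ\succeq I$, it suffices to show $\E\min\{1,\norm{\mathcal R_a/a-J}_{\mathrm F}\}\to0$. This is the statement $\mathcal R_a/a\to J$ in $L^1$, which upgrades the almost sure limit \eqref{up:eq:R-first} of Theorem~\ref{up:thm:first}. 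Given uniform integrability of $\norm{\mathcal R_a}/a$, this follows from \eqref{up:eq:R-first} and Vitali's theorem. Hence the key estimate is
\[
\sup_{a\le a_1}\E\bigl(\norm{\mathcal R_a}/a\bigr)^{s}<\infty\quad\text{for some } s>1.
\]

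\textbf{Main obstacle: this moment bound.} The pathwise bound \eqref{eq:uniformpath} carries a random $K$ that may have no moments, so it cannot be used directly. My plan is to split the products into blocks of length $m\asymp 1/a$ (or a fixed multiple of it). Over such a block, $\prod(I-aH_{-i})$ equals $I-a\sum H_{-i}+$ higher-order terms. Under \eqref{up:eq:rate-ass}, a Burkholder/functional-dependence inequality in $L^q$ (Wu's projective bound) gives $\norm{\sum_{i<m}(H_{-i}-H)}_q\le C\sqrt m$. The deviation of the block product from $e^{-amH}$ is then $O(a\sqrt m)=O(\sqrt a)$ in $L^q$, up to the nonlinear terms, which I will control by $(a\sum\norm{H_{-i}})^k/k!$. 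The PSD structure gives $\norm{P_j}$ submultiplicative across blocks. On a good event each block contracts by a fixed factor $\theta<1$; a bad block has norm at most $\exp(a\sum_{\text{block}}\norm{H_{-i}})$, whose relevant moments I will bound via $\norm{I-aH}\le \max(1,a\norm{H}-1)$ together with an exponential-moment truncation. Alternatively, I will use the identity from the telescoping remark, $\E_{\text{cond}}$-free: $a^2\sum P_jH_{-j}^2P_j\trans\preceq a\max_{j}(a\norm{H_{-j}})\cdot a\sum P_jH_{-j}P_j\trans$, with $a\sum_j P_j(2H_{-j}-aH_{-j}^2)P_j\trans=I$. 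This shows $\mathcal R_a\preceq$ a bounded multiple of $a\max_j\norm{H_{-j}}w_j$ for weights summing to at most $C$. I expect to first try this telescoping route, bounding $\norm{\mathcal R_a}\le \min\{C', \ldots\}$. Note that $\mathcal R_a\preceq$ a quantity controlled by $I$ only when $a\norm{H_{-j}}\le1$. I will therefore truncate at $\norm{H_{-j}}\le a^{-1}$ and handle large $H$ separately by Markov's inequality with $q>2$, so that the exceptional contribution has probability $O(a^{q-1}\cdot a^{-1})$-type smallness summed over the effective window.

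For the sharp constant \eqref{up:eq:TVconstant}, write $\Sigma_a=I+aJ+a\Delta_a$ with $\E\min(1,\norm{\Delta_a})\to0$. Then $d_{\TV}(\mu_a,\gamma_0)=d_{\TV}(\gamma_a,\gamma_0)+o(a)$ by the triangle inequality and \eqref{up:eq:TVcorrected}. A direct expansion gives the ratio of densities $\phi_{I+aJ}/\phi_I=1+\tfrac a2(x\trans Jx-\tr J)+O(a^2(1+\norm{x}^4))$, so
\[
d_{\TV}(\gamma_a,\gamma_0)=\tfrac12\int\phi_p|\tfrac a2(x\trans Jx-\tr J)|\,dx+O(a^2)=\tfrac a4\E|G\trans JG-\tr J|+O(a^2).
\]
For positivity: $J\succ0$, since $\E H_0^2\succeq H^2\succ0$, so $J\ne0$. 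Then $G\trans JG-\tr J$ is a nondegenerate centered random variable, and $\E|\cdot|>0$. Finally, $C$ depends only on the design law and $p$, because the noise scale $\sigma^2$ cancels in all the TV distances.
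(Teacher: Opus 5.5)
Your mixture reduction is right, but both of your main estimates are false for designs the theorem covers. For \eqref{up:eq:TVrate} you need $\E\norm{\mathcal R_a}_{\mathrm F}\le Ca$, and for \eqref{up:eq:TVcorrected} you need $\sup_{a\le a_1}\E(\norm{\mathcal R_a}/a)^s<\infty$. Take the Gaussian-refresh design with $0<\rho<1$, which satisfies \eqref{up:eq:rate-ass} (Corollary~\ref{cor:quantitative-coexistence}). Given $\G_X$, the variance of $e_0^{(a)}$ is $\sigma^2a(1+\mathcal R_a)/2$, so $\E|e_0^{(a)}|^r=c_r\E[\{\sigma^2a(1+\mathcal R_a)/2\}^{r/2}]$ with $c_r=\E|N(0,1)|^r$. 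Theorem~\ref{up:thm:refresh} makes the left side infinite for every $r>0$, hence $\E\mathcal R_a^s=\infty$ for every $s>0$ and every fixed $a$. No block decomposition or truncation can deliver your key estimate, so the Vitali/$L^1$ route is closed.

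The exponential-moment step also fails in general, since \eqref{up:eq:rate-ass} gives only $q$ moments of $H_0$. Nor is the obstruction confined to a window of length $O(1/a)$. When $H_0$ is unbounded, factors with $a\norm{H_{-j}}>2$ occur infinitely often in the remote past almost surely. Controlling large Gram matrices inside a window therefore does not control $\norm{P_j(a)}$ for all $j$.

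The missing idea is to exploit that total variation is bounded by one. By mixture convexity you need only $\E\min\{1,\norm{\mathcal R_a}\}\le Ca$ and $\E\min\{1/a,\norm{\mathcal R_a/a-J}\}\to0$. The truncation level is $1/a$, not $1$ as you wrote. Truncation at $1$ does not yield $o(a)$, and $L^1$ convergence is sufficient but false. These truncated requirements need control only on an event of probability $1-o(a)$.

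The paper gets this from Lemma~\ref{up:lem:overshoot}: $\norm{P_j(a)}\le e^{aL_*}e^{-caj}$ for all $j\ge0$ and $a\le a_0$, with a step-independent $L_*$ satisfying $\E L_*<\infty$. That integrability comes from an $L^q$ maximal inequality for the centered block log-norm bounds, plus a dyadic summation that uses $q>2$. On $E_a=\{aL_*\le1\}$ one has $\norm{\mathcal R_a}\le e^2a^2\sum_je^{-2caj}\norm{H_{-j}}^2$. This has expectation $O(a)$ and equals $a$ times a uniformly integrable family. Meanwhile $\Prob(E_a^c)\le a\E[L_*\ind_{\{L_*>1/a\}}]=o(a)$. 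Combined with the almost sure limit of Theorem~\ref{up:thm:first}, this gives \eqref{up:eq:TVrate} and \eqref{up:eq:TVcorrected} without any moment of $\mathcal R_a$.

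Your derivation of \eqref{up:eq:TVconstant} and your positivity argument are fine. One small correction: the pointwise remainder in the density-ratio expansion carries an extra factor $e^{a\norm J\norm x^2/2}$, which is harmless after integrating against $\phi_p$.
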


The refresh design now brings the moment obstruction and the sharp expansion together.

\begin{corollary}[Sharp Gaussian approximation with no stationary moments]\label{cor:quantitative-coexistence}
For the refresh design with fixed $0<\rho<1$ and independent $N(0,\sigma^2)$ errors, let $\mu_a=\Law(e_0^{(a)}/\sqrt a)$. Then
\begin{align*}
\E|e_0^{(a)}|^r&=\infty &&(r>0,\ 0<a<2/3),\\
d_{\TV}(\mu_a,N(0,\sigma^2/2))&=\tfrac32\phi(1)a+o(a),\\
d_{\TV}(\mu_a,N(0,\sigma^2(1+3a/2)/2))&=o(a) &&(a\downarrow0).
\end{align*}
\end{corollary}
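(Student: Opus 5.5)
The corollary combines three earlier results: Theorem~\ref{up:thm:refresh}, Theorem~\ref{up:thm:rate}, and an explicit computation of $J$ in dimension $p=1$.

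\textbf{Moments.} The first display is exactly the moment statement~\eqref{up:eq:refresh-mom} of Theorem~\ref{up:thm:refresh}. Gaussian errors are iid, centered, nondegenerate, have finite variance, and are independent of the design, so that theorem applies directly for $0<a<2/3$.

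\textbf{Hypotheses of Theorem~\ref{up:thm:rate}.} Assumption~\ref{ass:D} holds because $\E X_0^2=1$, so $H=1\succ0$. For~\eqref{up:eq:rate-ass}, fix any $q\in(2,\infty)$, say $q=3$. Then $H_0=G^2\in L^q$ for $G\sim N(0,1)$. The last assertion of Theorem~\ref{up:thm:refresh} gives $\delta_{H,q}(k)\le 2\|G_0^2\|_q\rho^{k/q}$, which is summable since $0<\rho<1$. The Gaussian-error assumptions of Theorem~\ref{thm:gaussian} hold by hypothesis. Theorem~\ref{up:thm:rate} is therefore available for all sufficiently small $a$, with no stationary moment required. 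This is consistent with the first display, since that theorem never assumes iterate moments.

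\textbf{Computing $J$.} With $p=1$ and $H=1$, equation~\eqref{eq:Jmain} reduces to $2J=\E H_0^2=\E G^4=3$, so $J=3/2$. This gives $\gamma_a=N(0,\sigma^2(1+3a/2)/2)$, and the third display follows from~\eqref{up:eq:TVcorrected}.

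\textbf{The TV constant.} By~\eqref{up:eq:TVconstant}, the limit of $d_{\TV}(\mu_a,\gamma_0)/a$ is $\tfrac14\cdot\tfrac32\,\E|G^2-1|=\tfrac38\E|G^2-1|$. Using $\E(G^2-1)=0$,
\[
\E|G^2-1|=2\E(1-G^2)_+=2\int_{-1}^1(1-x^2)\phi(x)\,dx .
\]
Integrating by parts with $x\phi(x)=-\phi'(x)$ gives $\int_{-1}^1x^2\phi=\int_{-1}^1\phi-2\phi(1)$. Hence $\int_{-1}^1(1-x^2)\phi=2\phi(1)$, so $\E|G^2-1|=4\phi(1)$. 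The constant is therefore $\tfrac38\cdot4\phi(1)=\tfrac32\phi(1)$, which yields the second display as $\tfrac32\phi(1)a+o(a)$.

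There is no real obstacle here. The only points needing care are checking that $q>2$ is admissible in~\eqref{up:eq:rate-ass} using the functional-dependence bound, and getting the constant $\E|G^2-1|=4\phi(1)$ right.
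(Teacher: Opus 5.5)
Your proposal is correct and follows the paper's own argument. You cite Theorem~\ref{up:thm:refresh} for the moment failure and for the $L^q$ dependence bounds that verify~\eqref{up:eq:rate-ass}, then apply Theorem~\ref{up:thm:rate} with $H=1$, $\E H_0^2=3$, $J=3/2$ and $\E|G^2-1|=4\phi(1)$; your integration-by-parts derivation simply fills in that last constant, which the paper states without detail.
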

Indeed, Gaussian marginal moments and the dependence bounds in Theorem~\ref{up:thm:refresh} verify \eqref{up:eq:rate-ass}, and $J=3/2$. The log-Weibull AR example also satisfies the quantitative assumptions; the Pareto-$3$ example illustrates the weaker qualitative theory. Each conclusion holds for a fixed dependence law.

The expansion also predicts the leading coverage error of a Gaussian interval.
\begin{corollary}[First-order scalar coverage]\label{cor:coverage-first}
Under Theorem~\ref{up:thm:rate} with $p=1$, let $\alpha\in(0,1)$ be fixed and put $z=z_{1-\alpha/2}$ and $h_a=\sigma\sqrt{a/2}\,z$. Then
\begin{align}
\Prob(|e_0^{(a)}|\le h_a)&=1-\alpha-aJz\phi(z)+o(a),\label{eq:coverage-first}\\
\Prob(|e_0^{(a)}|\le h_a\sqrt{1+aJ})&=1-\alpha+o(a).\label{eq:coverage-corrected}
\end{align}
\end{corollary}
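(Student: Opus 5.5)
Both statements follow from Theorem~\ref{up:thm:rate}, because an event probability differs from its value under an approximating law by at most the total-variation distance. Write $\mu_a=\Law(e_0^{(a)}/\sqrt a)$ and $s=\sigma/\sqrt2$. The event $|e_0^{(a)}|\le h_a$ is the event $|e_0^{(a)}/\sqrt a|\le sz$.

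By \eqref{up:eq:TVcorrected},
\[
\Prob(|e_0^{(a)}|\le h_a)=\gamma_a([-sz,sz])+o(a).
\]
Here $\gamma_a=N(0,s^2(1+aJ))$, so
\[
\gamma_a([-sz,sz])=2\Phi\bigl(z(1+aJ)^{-1/2}\bigr)-1 .
\]
Expand $(1+aJ)^{-1/2}=1-aJ/2+O(a^2)$, with $J$ a finite scalar. A first-order Taylor step of $\Phi$ at $z$ then gives
\[
2\Phi\bigl(z-zaJ/2+O(a^2)\bigr)-1=2\Phi(z)-1-aJz\phi(z)+O(a^2).
\]
Since $2\Phi(z)-1=1-\alpha$, this is \eqref{eq:coverage-first}.

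For \eqref{eq:coverage-corrected}, the event is $|e_0^{(a)}/\sqrt a|\le sz\sqrt{1+aJ}$. Its $\gamma_a$-probability equals $2\Phi(z)-1=1-\alpha$ exactly, because the threshold scales with the standard deviation of $\gamma_a$. The TV bound \eqref{up:eq:TVcorrected} holds uniformly over Borel sets, so the fact that this set depends on $a$ does not matter. The error is therefore $o(a)$.

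The only point needing care is that the $o(a)$ term in \eqref{up:eq:TVcorrected} is uniform over all Borel sets. This is exactly what the definition of $d_{\TV}$ gives, so no step is expected to be an obstacle.
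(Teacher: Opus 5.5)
Your proposal is correct and follows essentially the same route as the paper. The paper also applies the corrected bound $d_{\TV}(\mu_a,\gamma_a)=o(a)$ to the fixed interval $[-\sigma z/\sqrt2,\sigma z/\sqrt2]$ with a Taylor expansion, and uses the exact identity $\gamma_a(\sqrt{1+aJ}\,A)=1-\alpha$ together with the uniformity of the TV bound over measurable sets for the corrected interval.
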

The deterministic correction removes the leading undercoverage. These intervals use the population noise variance; Appendix~\ref{app:coverage-first} proves the expansion directly from the TV bound.

\subsection{Initialization and central confidence regions}
To use the stationary theory after a finite run, initialization must be negligible on the approximation's scale. Weak convergence uses the $\sqrt a$ scale; preserving the first-order expansion requires a sharper bound.
\begin{corollary}[Last iterate after initialization]\label{cor:startup}
Under Assumptions~\ref{ass:D}--\ref{ass:S}, initialize at a fixed finite $\beta_0$. If
\begin{equation}\label{eq:burnin}
 a^{-1/2}e^{-can(a)}\longrightarrow0,
\end{equation}
with $c$ from Theorem~\ref{thm:stability}, then
$a^{-1/2}(\beta_{n(a)}-\beta^*)\Rightarrow N_p(0,V)$.
One sufficient condition is $an(a)/\log(1/a)\to\infty$.
\end{corollary}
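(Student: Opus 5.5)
Fix $\beta_0$ and couple the initialized recursion with the jointly stationary solution of Theorem~\ref{thm:stability} on the same data, so both use the same $(X_i,Z_i)_{i\ge1}$. The plan is to show that the initialized error differs from the stationary one by a term that vanishes on the $\sqrt a$ scale, and then conclude with Theorem~\ref{thm:clt} and Slutsky's lemma.

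Write $e_n=\beta_n-\beta^*$ and $e_n^{(a)}$ for the stationary error. The two errors satisfy the same affine recursion with the same driving scores, so their difference $d_n=e_n-e_n^{(a)}$ obeys the homogeneous recursion $d_n=(I-aH_n)d_{n-1}$. Hence
\[
 d_{n}=(I-aH_n)(I-aH_{n-1})\cdots(I-aH_1)\,(e_0-e_0^{(a)}).
\]
By stationarity of the design, the joint law of the pair $\bigl(\text{this forward product},\,e_0^{(a)}\bigr)$ equals, after the shift $i\mapsto i-n$, the joint law of $\bigl(P_n(a),\,e_{-n}^{(a)}\bigr)$ with $P_n$ as in \eqref{eq:products}. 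It therefore suffices to show
\[
 a^{-1/2}\norm{P_{n}(a)}\,\norm{\beta_0-\beta^*-e^{(a)}_{-n}}\to0
\]
in probability, where $n=n(a)$.

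The bound \eqref{eq:uniformpath} gives $\norm{P_n(a)}\le Ke^{-can}$ with a random $K<\infty$ almost surely that does not depend on $a$ or $n$. Condition \eqref{eq:burnin} makes $a^{-1/2}Ke^{-can(a)}\to0$ almost surely, and this handles the deterministic part $\beta_0-\beta^*$. The stationary part requires more care, because $e^{(a)}_{-n}$ and $K$ are both random and need not be integrable. I will split the product as
\[
 a^{-1/2}\norm{P_n}\,\norm{e^{(a)}_{-n}}
 =\bigl(a^{-1/2}\norm{P_n}\bigr)\,\sqrt a\,\bigl(a^{-1/2}\norm{e^{(a)}_{-n}}\bigr).
\]
By stationarity and Theorem~\ref{thm:clt}, the last factor has the law of $a^{-1/2}\norm{B_a-\beta^*}$, which is tight. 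The first factor tends to $0$ almost surely. Thus the product is the product of an almost surely vanishing term and a tight term, and it tends to $0$ in probability even though the two factors are dependent.

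The main obstacle is this handling of the dependence between $K$ and the stationary iterate without any moments. The splitting above resolves it, because tightness in law suffices and neither $\E K$ nor $\E\norm{e_0^{(a)}}$ is needed. A small technical point is that the supremum in \eqref{eq:uniformpath} must be taken over the shifted product. I would apply the bound to the backward products $P_n$ after the shift rather than to forward products, which is legitimate because the identity in law is for the joint pair.

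Putting the pieces together, $a^{-1/2}e_{n}=a^{-1/2}e^{(a)}_{n}+o_P(1)$. Since $a^{-1/2}e^{(a)}_{n}$ has law $\pi_a$ rescaled, Theorem~\ref{thm:clt} and Slutsky give the limit $N_p(0,V)$. For the sufficient condition, $an/\log(1/a)\to\infty$ implies $can-\tfrac12\log(1/a)\to\infty$, which is \eqref{eq:burnin}.
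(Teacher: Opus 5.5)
Your proof is correct and follows the paper's argument. You couple the initialized and stationary recursions on the same data, bound the difference by an almost surely vanishing product factor times $a^{-1/2}\norm{\beta_0-\beta^*}$ plus a tight stationary term, and finish with Theorem~\ref{thm:clt} and Slutsky. The only variation is in the product bound: the paper applies the forward bound \eqref{eq:forward} (with its own almost surely finite $K^+$) directly, whereas you use joint stationarity to move the forward product to $P_{n}(a)$ and then apply \eqref{eq:uniformpath}. Your route is legitimate, because convergence in probability to zero depends only on the law for each $a$.
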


\begin{corollary}[Finite-history first-order approximation]\label{cor:startup-tv}
Under the assumptions of Theorem~\ref{up:thm:rate}, initialize with deterministic error $b=\beta_0-\beta^*$. There are $a_0,c,C,t_0>0$ and a nonnegative $L_*$ with $\E L_*<\infty$ such that, for $0<a\le a_0$ and $an\ge t_0$,
\begin{equation}\label{eq:startup-tv}
d_{\TV}\!\left(\Law(e_n/\sqrt a),\mu_a\right)
\le C\bigl(\norm b\,a^{-1/2}e^{-can}+e^{-2can}\bigr)+\Prob(L_*>1/a).
\end{equation}
If $e^{-can(a)}=o(a^{3/2})$, the initialized law $\mu_{a,n}=\Law(e_{n(a)}/\sqrt a)$ satisfies
\[
d_{\TV}(\mu_{a,n},\gamma_a)=o(a),\qquad
\lim_{a\downarrow0}\frac{d_{\TV}(\mu_{a,n},\gamma_0)}a
=\frac14\E|G\trans JG-\tr J|,
\]
with $\gamma_a,\gamma_0,G$ as in Theorem~\ref{up:thm:rate}. For $b=0$, it suffices that $e^{-2can(a)}=o(a)$. The schedule $n(a)=\lceil\max\{24,2\log^2(1/a)\}/a\rceil$ satisfies both conditions for each fixed admissible law.
\end{corollary}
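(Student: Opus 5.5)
The plan is to couple the initialized recursion with the jointly stationary solution of Theorem~\ref{thm:stability} on the same data, condition on $\G_X$, and compare two Gaussian laws exactly. Write the forward product $Q_n=(I-aH_n)\cdots(I-aH_1)$.

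\textbf{Step 1: conditional laws.} Unrolling the recursion gives
\[
e_n=Q_nb+a\sum_{i=1}^n Q_{n,i}\xi_i,\qquad e_n^{(a)}=Q_ne_0^{(a)}+a\sum_{i=1}^n Q_{n,i}\xi_i,
\]
where $Q_{n,i}=(I-aH_n)\cdots(I-aH_{i+1})$. Since $e_0^{(a)}$ depends only on errors with index $\le0$, conditionally on $\G_X$ both vectors are Gaussian with a common covariance part $S_n=\sigma^2a^2\sum_{i}Q_{n,i}H_iQ_{n,i}\trans$. Hence
\[
\Law(e_n/\sqrt a\mid\G_X)=N\bigl(Q_nb/\sqrt a,\;S_n/a\bigr),\qquad
\Law(e_n^{(a)}/\sqrt a\mid\G_X)=N\bigl(0,\;S_n/a+Q_n\tfrac{\sigma^2}{2}(I+\mathcal R_a)Q_n\trans\bigr).
\]
Here $\mathcal R_a$ is evaluated at time $0$, and the second law uses Theorem~\ref{thm:gaussian} together with stationarity.

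\textbf{Step 2: lower bound on $S_n$.} The same telescoping as in Theorem~\ref{thm:gaussian}, applied to finitely many terms, gives
\[
S_n/a=\tfrac{\sigma^2}{2}\bigl(I-Q_nQ_n\trans+\mathcal R_{a,n}\bigr)\succeq\tfrac{\sigma^2}{2}\bigl(I-Q_nQ_n\trans\bigr).
\]
So on the event $\norm{Q_n}\le1/2$ we have $S_n/a\succeq(3\sigma^2/8)I$.

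\textbf{Step 3: comparing the two Gaussians.} I bound the conditional TV distance by two terms. The first is the Pinsker/mean-shift bound $\lesssim\norm{Q_nb}/\sqrt a$. The second is the covariance-perturbation bound $\lesssim\norm{Q_n(I+\mathcal R_a)Q_n\trans}_{\mathrm F}$, which is at most $\norm{Q_n}^2(1+\norm{\mathcal R_a})$. Taking expectations gives \eqref{eq:startup-tv} provided, outside an event $\{L_*>1/a\}$, we have $\norm{Q_n}\le C'e^{-can}\le1/2$ (this is where $an\ge t_0$ is used) and $\norm{\mathcal R_a}\le C''$. On the complement we bound the distance by $1$.

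\textbf{Main obstacle: constructing $L_*$.} The random constant $K$ of Theorem~\ref{thm:stability} may satisfy $\E K^r=\infty$, so it cannot serve as $L_*$. I will instead derive a finite-window product bound from the PSD product bound. Up to the curvature term, $\log\norm{Q_n}\le-a\lambda_{\min}\bigl(\sum_{i\le n}H_i\bigr)+Ca^2\sum_{i\le n}\norm{H_i}^2$. I would define $L_*$ as a maximal functional of the centered partial sums $\sum(H_i-H)$ and of $\sum\norm{H_i}^2$. Under \eqref{up:eq:rate-ass}, $q>2$ together with summable $\delta_{H,q}$ gives a Burkholder/Wu maximal inequality for these partial sums. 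A suitable normalization, e.g.\ $\sup_m m^{-1}\norm{\sum_{i\le m}(H_i-H)}$ over dyadic windows, should then be integrable. The same device is presumably what already controls the rare environments in Theorem~\ref{up:thm:rate}, and I would reuse it there to also bound $\norm{\mathcal R_a}$ on $\{L_*\le1/a\}$. Stationarity lets the window $1,\dots,n$ be treated like $-n+1,\dots,0$.

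\textbf{Step 4: the first-order conclusions.} Since $\E L_*<\infty$,
\[
\Prob(L_*>1/a)\le a\,\E\bigl[L_*\ind\{L_*>1/a\}\bigr]=o(a).
\]
If $e^{-can}=o(a^{3/2})$, the other two terms in \eqref{eq:startup-tv} are also $o(a)$; when $b=0$ only $e^{-2can}=o(a)$ is needed. The triangle inequality with \eqref{up:eq:TVcorrected} and \eqref{up:eq:TVconstant} then transfers both the $o(a)$ bound and the limiting constant to $\mu_{a,n}$. For the schedule, $an\ge\max\{24,2\log^2(1/a)\}$. The term $e^{-2cc'\log^2(1/a)}$ decays faster than any power of $a$, and the constant $24$ secures $an\ge t_0$ after adjusting the constants.
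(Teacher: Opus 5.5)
Your architecture matches the paper's: coupling with the stationary solution, conditioning on $\G_X$, the telescoping lower bound $S_n/a\succeq\tfrac{\sigma^2}{2}(I-Q_nQ_n\trans)$, a mean-shift plus covariance-perturbation Gaussian comparison, a good/bad event split, and the $o(a)$ transfer. The gap is at the step you flag as the main obstacle.

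\textbf{The window inequality fails.} The bound $\log\norm{Q_n}\le-a\lambda_{\min}(\sum_{i\le n}H_i)+Ca^2\sum_{i\le n}\norm{H_i}^2$ is false for non-commuting Gram matrices once $an$ is large. Take $p=2$, unit vectors $u,v$ with $0<|u\trans v|<1$, and set $H_i=uu\trans$ for $i\le n/2$ and $H_i=vv\trans$ afterwards. Then $Q_n\to(I-vv\trans)(I-uu\trans)$, whose norm is $|u\trans v|>0$. The right side, however, equals $an\{-\tfrac12(1-|u\trans v|)+Ca\}\to-\infty$. Lemma~\ref{lem:block} gives only the remainder $2as\min(as,1)$ with $s=\sum_{i\le n}\norm{H_i}$, which is of order $an$ over such a window and hence useless.

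Even granting some such inequality, the untruncated quadratic term causes a second problem. The natural overshoot $\sup_n\{a_0\sum_{i\le n}(\norm{H_i}^2-\E\norm{H_0}^2)-dn\}_+$ is integrable, already for iid designs, only if $\E\norm{H_0}^4<\infty$. Condition \eqref{up:eq:rate-ass} with $2<q<4$ does not give this.

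\textbf{The missing ingredient} is Lemma~\ref{up:lem:overshoot}, which the corollary may use. Its ingredients are:
\begin{itemize}
\item fixed-length blocks;
\item the truncated, Lipschitz block functionals $z_r=-g_r+2s_r\min(a_0s_r,1)$, which lie in $L^q$;
\item an $L^q$ maximal inequality for their centered sums;
\item dyadic summation, which is where $q>2$ enters.
\end{itemize}
Shifted to endpoint $n$ by stationarity, it gives $\norm{P_{n,j}}\le e^{aL_{*,n}}e^{-caj}$ for all $j\ge0$, where $P_{n,j}$ are the backward products from $n$ and $L_{*,n}\stackrel{d}{=}L_*$. In particular, $\norm{Q_n}\le e\,e^{-can}$ on $\{aL_{*,n}\le1\}$.

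\textbf{The bound on $\mathcal R_a$ is also unavailable.} You claim $\norm{\mathcal R_a}\le C''$ off $\{L_*>1/a\}$, but this does not hold pathwise. On the good event one has only $\norm{\mathcal R_a}\le e^2a^2\sum_je^{-2caj}\norm{H_{-j}}^2$, which is $O(a)$ in expectation but not bounded. It also requires control of the backward products from time $0$, in addition to $Q_n$.

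Your factorization $\norm{Q_n}^2(1+\norm{\mathcal R_a})$ is unnecessary. By \eqref{eq:gaussian-weights}, $Q_n\tfrac{\sigma^2}{2}(I+\mathcal R_a)Q_n\trans=\sigma^2a\sum_{j\ge n}P_{n,j}H_{n-j}P_{n,j}\trans$. On the single event $\{aL_{*,n}\le1\}$, its norm is at most $\sigma^2e^2a\sum_{j\ge n}e^{-2caj}\norm{H_{n-j}}$. The expectation of this bound is at most $Ce^{-2can}$, using only $\E\norm{H_0}<\infty$ and no independence between the event and the design. This is how the paper obtains \eqref{eq:startup-tv} with one overshoot event. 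With these two repairs, the rest of your argument, including Step~4, goes through.
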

The transfer compares finite and stationary conditional Gaussian laws at the same endpoint (Appendix~\ref{app:finite-tv}). Under its stated history conditions, Corollary~\ref{cor:coverage-first} also holds with $e_{n(a)}$ in place of $e_0^{(a)}$.

Under independent centered errors with variance $\sigma^2\in(0,\infty)$, the score is a martingale difference and $\Omega=\sigma^2H$, so $V=\sigma^2 I/2$ even for non-Gaussian errors. Let $\widehat\sigma_m^2$ be the ordinary least-squares residual variance from $m\to\infty$ observations. Ergodicity and second moments imply $\widehat\sigma_m^2\to\sigma^2$ in probability. Therefore, for a stationary iterate or an iterate satisfying \eqref{eq:burnin},
\begin{equation}\label{eq:confidence}
 \mathcal C_a=\left\{b\in\R^p:
 \frac{2\norm{\beta_{n(a)}-b}^2}{a\widehat\sigma_m^2}
 \le\chi^2_{p,1-\alpha}\right\}
 \quad\text{satisfies}\quad
 \Prob(\beta^*\in\mathcal C_a)\longrightarrow1-\alpha.
\end{equation}
Here $\chi^2_{p,u}$ is the $u$-quantile of the chi-square law with $p$ degrees of freedom. Calibration data may overlap the SGD data. Variance consistency gives asymptotic marginal coverage; a first-order expansion would additionally require control of calibration error at the $a$ scale. Appendix~\ref{app:extensions} proves this result and extends it to fixed positive definite preconditioning and fixed nonoverlapping mini-batches.

\paragraph{Calibration before burn-in.}
At a fixed earlier endpoint, Gaussian conditioning gives a complementary exact calibration. Propagating the initialization and noise coefficients yields a Gaussian pivot; a full-rank residual-variance block with disjoint error indices gives an exact $F$ pivot. Appendix~\ref{app:exact} states the rank conditions and formulas, allowing dependent designs across blocks. The experiment reports both coverage and interval width to show the precision available before burn-in.

\section{Proof mechanism: pathwise contraction and cancellation}\label{sec:proof}
Positive-semidefinite update products contract pathwise, while score partial sums satisfy an invariance principle. Over $O(1/a)$ updates, or $O(1)$ learning time, the products approach $e^{-tH}$. Localization then removes the remote stationary past. Appendices~\ref{app:stability}--\ref{app:clt} supply the proofs.

\paragraph{A truncated block-product inequality.}
For positive semidefinite $B_1,\ldots,B_m$, let $s=\sum_\ell\norm{B_\ell}$ and
$g=\lambda_{\min}(\sum_\ell B_\ell)$. Direct expansion gives
\begin{equation}\label{eq:block}
 \log\norm{(I-aB_m)\cdots(I-aB_1)}
 \le-ag+2as\min(as,1).
\end{equation}
For $as\le1$, the remainder about $I-a\sum B_\ell$ is at most $2a^2s^2$; for $as>1$, bound the log norm by $as$. The right side has finite expectation after division by $a$ under only $\E s<\infty$. Ergodicity and $H\succ0$ provide a fixed block length with $\mu=\E g>0$. Choose $a_0$ so that
$\E[2s\min(a_0s,1)]<\mu/2$. Block log norms are then bounded by $a$ times a stationary integrable sequence of negative mean, simultaneously for $a\le a_0$. The ergodic theorem proves \eqref{eq:uniformpath}.

\paragraph{Finite learning-time windows.}
Put $S_n=\sum_{j=0}^{n-1}\xi_{-j}$. Assumption~\ref{ass:S} gives
\begin{equation}\label{eq:scoremax}
 \sqrt a\,S_{\lfloor t/a\rfloor}\Rightarrow\Omega^{1/2}W_t,
 \qquad \E\max_{j\le n}\norm{S_j}^2\le Cn.
\end{equation}
The ergodic theorem gives uniform small-step Cesaro convergence of the centered Gram sums. Abel summation in the matrix Euler equation then yields $P_{\lfloor t/a\rfloor}(a)\to e^{-tH}$ uniformly for $t\le T$, directly for the original unbounded coefficients.
The products also have bounded total variation in probability. Summation by parts with \eqref{eq:scoremax} yields
\begin{equation}\label{eq:recent}
 \sqrt a\sum_{j<\lceil T/a\rceil}P_j(a)\xi_{-j}
 \Rightarrow\int_0^T e^{-tH}\Omega^{1/2}\,dW_t.
\end{equation}
This argument accommodates dependence between the products and scores.

\paragraph{The infinite stationary past.}
The essential interchange of limits is established separately. Set $N=\lceil T/a\rceil$ and
$R_{a,T}=\sqrt a\sum_{j\ge N}P_j(a)\xi_{-j}$. Abel summation gives
\begin{equation}\label{eq:abeltail}
 R_{a,T}=-\sqrt a P_N(a)S_N
     +a^{3/2}\sum_{j\ge N}P_j(a)H_{-j}S_{j+1}.
\end{equation}
The random variable $D=\sup_{n\ge1}n^{-1}\sum_{j<n}\norm{H_{-j}}$ is finite almost surely. On $\{K\le M,D\le M\}$, dyadic decomposition and \eqref{eq:scoremax} imply
\begin{equation}\label{eq:localization}
 \Prob(\norm{R_{a,T}}>\epsilon)
 \le\Prob(K\vee D>M)+\frac{CM^2}{\epsilon}(1+T)^{3/2}e^{-c'T}.
\end{equation}
Only score maxima are integrated. Taking $T\to\infty$ and then $M\to\infty$ proves that the remote past is negligible uniformly as $a\downarrow0$. Combining this with \eqref{eq:recent} gives Theorem~\ref{thm:clt}. Proposition~\ref{prop:bl} expresses this localization as an explicit bound for bounded test functions.

\paragraph{Averaging the first-order correction.}
For the sharp TV expansion, the product bound becomes $\|P_j(a)\|\le e^{aL_*}e^{-caj}$ with $\E L_*<\infty$. On $\{aL_*\le1\}$, weighted Gram squares provide an integrable bound; the complementary event contributes $\Prob(L_*>1/a)=o(a)$. This localization averages the conditional expansion without requiring $\E\mathcal R_a<\infty$ (Appendix~\ref{app:rate}).

\section{Experiments}\label{sec:experiments}
We first test the sharp distributional expansion and its coverage consequence in the refresh design. A dependent-score experiment then examines long-run covariance calibration under the general Gaussian limit. Every configuration uses independent replicated histories; methods share paths within a configuration. Gaussian scalar experiments integrate regression errors conditional on the design. Actual SGD is used for non-Gaussian errors and finite-history inference. Appendix~\ref{app:experiments} provides all protocols and Monte Carlo uncertainty.

\paragraph{Direct distributional error and the first-order correction.}
The refresh experiment holds the $N(0,1)$ design marginal fixed and varies $\rho\in\{0,.7,.9\}$. Five steps range from $.04$ to $.0025$, with 24,000 histories per configuration and $n=\lceil\max\{24,2\log^2(1/a)\}/a\rceil$ updates. Corollary~\ref{cor:startup-tv} shows that this schedule preserves the stationary first-order expansion asymptotically. We estimate TV by integrating the absolute density difference between the empirical conditional Gaussian mixture and its reference Gaussian. Figure~\ref{fig:refresh}(a) compares both approximations on a common zero-based scale: the baseline approaches the predicted constant $3\phi(1)/2$, and correction reduces the normalized error. At $\rho=.7,a=.0025$, estimated TV changes from $8.87\times10^{-4}$ to $1.85\times10^{-5}$. Panel (b) shows concentration of the finite-history correction $\mathcal R_{a,n}/a$ toward $J=1.5$. Panel (c) connects this coefficient to coverage. At the same step, Corollary~\ref{cor:coverage-first} predicts approximately $0.043$ percentage points of baseline undercoverage; the observed deficit is $0.0421$ points. Correction raises coverage from $94.9579\%$ to $95.0009\%$, a $0.0430$-point increase. The leading term thus predicts the observed scale of the coverage change. Table~\ref{tab:refresh} reports absolute errors and uncertainty.

\begin{figure}[!ht]
\centering\includegraphics[width=\linewidth]{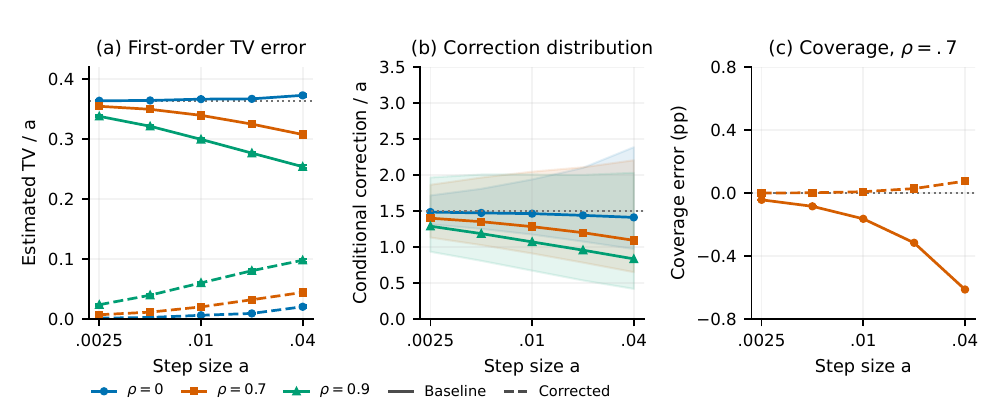}
\caption{The first-order approximation in the Gaussian-refresh design. (a) Estimated TV divided by $a$ for baseline (solid) and corrected (dashed) laws; the dotted line is $3\phi(1)/2$. Bars are 1.96 linearized Monte Carlo standard errors. (b) Median and 10th--90th percentiles of the finite-history correction $\mathcal R_{a,n}/a$; the dotted line is $J=1.5$, and shading represents variation across histories. (c) Coverage minus 95\%, in percentage points (pp), with 1.96 standard-error bars; Corollary~\ref{cor:coverage-first} predicts baseline error $-100aJz_{.975}\phi(z_{.975})+o(a)$ pp. Each configuration uses 24,000 histories.}\label{fig:refresh}
\end{figure}

\paragraph{Feasible inference with a dependent score.}
Theorem~\ref{thm:clt} also covers serially dependent scores, whose long-run covariance determines the Gaussian limit. We use a noncommuting two-dimensional Gaussian VAR(1) design. Gaussian AR(1) errors, independent of the design, have coefficient $.7$, giving $\Omega\ne H$. With 6,000 replications, we compare oracle Lyapunov calibration, lag-zero calibration, and residual-score Bartlett estimation using an earlier block on the same stream. This provides heteroskedasticity and autocorrelation consistent (HAC) calibration. At $a=.0025$, oracle coverage is 95.23\%, lag-zero coverage is 76.17\%, and HAC coverage reaches 94.38\% with 8,192 calibration observations and bandwidth 19. The lag-zero Gaussian limiting coverage is 76.01\%. A population Bartlett benchmark isolates the effect of bandwidth: at bandwidth 19 its limiting coverage is 94.37\%, close to the observed HAC result (Monte Carlo standard error $0.30$ pp). Figure~\ref{fig:hac}(b) compares this benchmark with feasible estimates. The agreement supports finite-bandwidth bias as an explanation of the remaining gap; larger bandwidths capture more temporal covariance in this experiment.

\begin{figure}[!ht]
\centering\includegraphics[width=\linewidth]{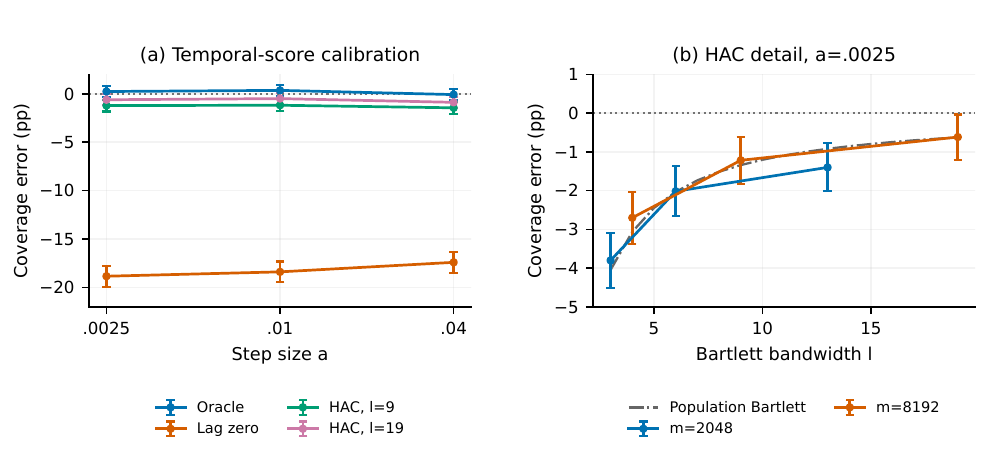}
\caption{Temporal covariance matters for last-iterate inference. Coverage errors are in percentage points relative to 95\%, with 1.96 Monte Carlo standard-error bars from 6,000 replications. (a) Oracle, lag-zero and feasible HAC calibration; HAC uses $m=8192$. (b) HAC bandwidth sensitivity at $a=.0025$, with fixed axis limits $[-5,1]$ pp. The gray dash-dotted curve is the population Bartlett Gaussian-limit benchmark; feasible estimates use observed calibration data.}\label{fig:hac}
\end{figure}

\paragraph{Scope and initialization.}
Pareto AR designs, finite-variance non-Gaussian errors and bounded two-time characteristic functions support the qualitative stationary and OU predictions (Figure~\ref{fig:additional}). The log-Weibull comparison shows that entry into the first-order regime depends on the design law. Correction is accurate at innovation scale $c=.5$; at $c=1$, the baseline remains closer to nominal over the tested steps (Tables~\ref{tab:other} and~\ref{tab:stress}). Actual Gaussian SGD with nonzero initialization and a separate residual-variance calibration block attains near-nominal history-corrected coverage from early iterations; Figure~\ref{fig:startupshock} reports the accompanying interval widths. At $a=.01$, the median estimated-variance interval width decreases from $2.186$ at $an=.05$ to $.283$ at $an=24$; the corresponding 90th percentiles are $6.242$ and $.316$. These widths quantify the precision cost of early inference alongside its coverage.

\section{Discussion}
Temporal persistence can destroy every stationary moment while preserving a sharp Gaussian approximation. Pathwise contraction and score cancellation separate stability along typical paths from integrability across environments. Conditional Gaussian structure turns this distinction into a quantitative law approximation and a leading coverage correction.

For fixed-dimensional least-squares SGD, these results support central inference even when stationary means and covariances are unavailable. Finite-history transfer connects the expansion to initialized runs; exact Gaussian calibration also permits earlier inference, with interval widths recording precision. With dependent scores, calibration instead uses their long-run covariance. The log-Weibull results show that entry into the first-order regime depends on the design law. Extending quantitative approximation to general dependent scores remains an open direction.
\label{main:end}

\clearpage

\section*{Reproducibility statement}
Appendices provide complete proofs and the numerical protocol. The arXiv ancillary files contain the simulation and analysis scripts, declared configurations and seeds, saved analysis summaries, dependency versions, and algebraic checks. The experiments use synthetic data. To keep this distribution compact, most replication-level arrays are omitted; the supplied seeds and code regenerate them and the synthetic streams. Full per-update trajectories were processed online rather than stored. The ancillary README gives commands to reproduce all figures and tables from the saved summaries and to regenerate the underlying data.

\section*{AI use statement}
In this work, we used generative AI tools to assist with manuscript writing and editing, literature search and identification, drafting portions of the manuscript, experimental design and code development, exploration of new research directions based on existing results, and the development and verification of mathematical arguments and proofs.

We have reviewed all AI-assisted work. Suggested references were checked against original sources, experimental code and numerical results were executed and verified by the authors, and mathematical claims, derivations, and proofs were independently checked for correctness.

We take responsibility for the final content of this work, including all text, claims, proofs, code, experiments, and other artifacts produced with the aid of generative AI.

\clearpage
\appendix

\section{Pathwise stability and stationary existence}\label{app:stability}
All almost sure statements below can be imposed on a common probability-one event. The underlying shift of an iid sequence is mixing in the ergodic-theoretic sense, and its nonzero integer powers are ergodic. Every measurable causal factor inherits these properties. This fact is used only to apply the ergodic theorem to fixed nonoverlapping blocks; it is not a quantitative mixing assumption.

\begin{lemma}[Integrable sequences and exponential weights]\label{lem:weights}
Let $(h_j)_{j\ge0}$ be a stationary nonnegative sequence with $\E h_0<\infty$. Almost surely,
\begin{equation}\label{eq:hproperties}
 D_h:=\sup_{n\ge1}\frac1n\sum_{j=0}^{n-1}h_j<\infty,
 \qquad \frac{h_j}{j+1}\longrightarrow0.
\end{equation}
For every $c>0$, there is a deterministic $C_c<\infty$ such that, for $0<a\le1$,
\begin{align}
 a\sum_{j\ge0}e^{-caj}h_j&\le C_cD_h,\label{eq:weighted1}\\
 a^2\sum_{j\ge0}(j+1)e^{-caj}h_j&\le C_cD_h.\label{eq:weighted2}
\end{align}
Furthermore,
\begin{equation}\label{eq:weighted-square}
 a^2\sum_{j\ge0}e^{-caj}h_j^2\longrightarrow0\quad\as
 \qquad(a\downarrow0).
\end{equation}
\end{lemma}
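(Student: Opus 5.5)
The plan is to derive everything from Birkhoff's ergodic theorem applied to the stationary sequence $(h_j)$, followed by Abel summation against the exponential weights. The shift is ergodic, but ergodicity is not even needed: the conditional ergodic theorem suffices for the first two claims. It gives $n^{-1}\sum_{j<n}h_j\to\E(h_0\mid\mathcal I)<\infty$ almost surely, where $\mathcal I$ is the invariant $\sigma$-field. A convergent sequence of finite terms is bounded, so $D_h<\infty$. Writing $A_n=\sum_{j<n}h_j$, we have
\[
\frac{h_n}{n+1}=\frac{A_{n+1}}{n+1}-\frac{n}{n+1}\cdot\frac{A_n}{n},
\]
and both quotients $A_{n+1}/(n+1)$ and $A_n/n$ tend to the same limit. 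Hence $h_n/(n+1)\to0$.

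For \eqref{eq:weighted1}, sum by parts with $w_j=e^{-caj}$, a decreasing sequence tending to $0$:
\[
\sum_j w_jh_j=\sum_{n\ge1}A_n(w_{n-1}-w_n)\le D_h\sum_{n\ge1}n(w_{n-1}-w_n)=D_h\sum_{n\ge0}w_n .
\]
Since $a\sum_n e^{-can}=a/(1-e^{-ca})$ is bounded uniformly for $a\le1$, this gives $C_c$. The bound \eqref{eq:weighted2} follows the same way with $w_j=(j+1)e^{-caj}$. This weight is not monotone, so I would instead bound $a(j+1)e^{-caj}\le C'e^{-(c/2)aj}$, using $\sup_{x\ge0}xe^{-cx/2}<\infty$ together with the fact that $a\le a(j+1)$ handles the constant part. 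That reduces \eqref{eq:weighted2} to \eqref{eq:weighted1} with $c/2$ in place of $c$.

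The main step is \eqref{eq:weighted-square}, where $h_j^2$ need not be integrable, so an ergodic average of $h_j^2$ is unavailable. Instead I would write $h_j^2\le h_j\cdot\max_{i\le j}h_i$. Set $m_j=\max_{i\le j}h_i/(j+1)$. Then $m_j\to0$: given $\eta>0$, choose $j_0$ such that $h_i\le\eta(i+1)$ for $i\ge j_0$. For $j\ge j_0$ this gives $\max_{i\le j}h_i\le\max_{i<j_0}h_i+\eta(j+1)$, so $\limsup_j m_j\le\eta$. Consequently
\[
a^2\sum_j e^{-caj}h_j^2\le a^2\sum_j(j+1)e^{-caj}m_jh_j .
\]
Now split the sum at $j_1$, where $m_j\le\eta$ for $j\ge j_1$. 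The finite head is at most $a^2\sum_{j<j_1}(j+1)h_j^2\to0$ as $a\downarrow0$. The tail is at most $\eta\,a^2\sum_j(j+1)e^{-caj}h_j\le\eta C_cD_h$ by \eqref{eq:weighted2}. Letting $a\downarrow0$ and then $\eta\downarrow0$ gives the limit $0$ almost surely.

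All of these statements hold on the single event where the ergodic averages converge. That event does not depend on $a$ or on $\eta$, so it can be taken as the common probability-one event.
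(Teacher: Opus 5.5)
Your proposal is correct and follows the paper's proof essentially step by step: the ergodic theorem with a possibly random invariant limit, Abel summation against $e^{-caj}$, reduction of the $(j+1)$-weighted bound to the first with $c/2$, and a head/tail split using $h_j\le\eta(j+1)$ for large $j$ together with the $(j+1)$-weighted bound. The only differences are minor: you get $h_j/(j+1)\to0$ by differencing ergodic averages rather than by Borel--Cantelli, and the running maximum $m_j$ in your last step is unnecessary, since $h_j\le\eta(j+1)$ for $j\ge j_0$ already gives $h_j^2\le\eta(j+1)h_j$ directly.
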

\begin{proof}
The first assertion of \eqref{eq:hproperties} follows from the ergodic theorem, with the possibly random invariant conditional expectation as limit. For every $\epsilon>0$,
$\sum_{j\ge1}\Prob(h_j>\epsilon j)=\sum_{j\ge1}\Prob(h_0>\epsilon j)<\infty$.
Borel--Cantelli proves the second assertion, first for rational $\epsilon>0$ and then for all $\epsilon>0$.
If $q=e^{-ca}$ and $C_n=\sum_{j<n}h_j$, Abel summation gives
\[
 \sum_{j\ge0}q^jh_j=(1-q)\sum_{j\ge0}q^j C_{j+1}
 \le\frac{D_h}{1-q}.
\]
The ratio $a/(1-e^{-ca})$ is bounded on $(0,1]$, proving \eqref{eq:weighted1}. Since $(j+1)e^{-caj/2}\le C_c/a$, equation~\eqref{eq:weighted2} follows from \eqref{eq:weighted1} with $c/2$.
Finally, given $\epsilon>0$, choose a random finite $J$ such that $h_j\le\epsilon(j+1)$ for $j\ge J$. The finite sum over $j<J$ in \eqref{eq:weighted-square} tends to zero, while the remainder is at most $\epsilon C_cD_h$ by \eqref{eq:weighted2}. Let $\epsilon\downarrow0$.
\end{proof}

\begin{lemma}[Deterministic block inequality]\label{lem:block}
For positive semidefinite matrices $B_1,\ldots,B_m$, write
$s=\sum_{\ell=1}^m\norm{B_\ell}$ and
$g=\lambda_{\min}(\sum_{\ell=1}^mB_\ell)$. For all $a>0$,
\[
 \log\norm{(I-aB_m)\cdots(I-aB_1)}\le-ag+2as\min(as,1),
\]
where $\log0=-\infty$.
\end{lemma}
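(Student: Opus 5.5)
The argument will be elementary and deterministic, split on whether $as\le1$. Write $\Pi=(I-aB_m)\cdots(I-aB_1)$ and $B=\sum_\ell B_\ell$. If $\norm{\Pi}=0$ there is nothing to prove, because $\log0=-\infty$.

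\emph{Case $as>1$.} Here $\min(as,1)=1$, so the right side is $-ag+2as$. Submultiplicativity gives
\[
\norm{\Pi}\le\prod_\ell\norm{I-aB_\ell}\le\prod_\ell(1+a\norm{B_\ell})\le e^{as}.
\]
Hence $\log\norm\Pi\le as$. It remains to check $as\le-ag+2as$, which is equivalent to $g\le s$. This holds because $g=\lambda_{\min}(B)\le\norm{B}\le s$. This is the crude bound mentioned in the main text.

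\emph{Case $as\le1$.} Expand the product around its linear part:
\[
\Pi=I-aB+\mathcal E,\qquad
\mathcal E=\sum_{k\ge2}(-a)^k\sum_{\ell_k>\cdots>\ell_1}B_{\ell_k}\cdots B_{\ell_1}.
\]
The $k$-th order term is bounded in norm by $a^k$ times the $k$-th elementary symmetric polynomial of the $\norm{B_\ell}$, which is at most $a^k s^k/k!$. Therefore
\[
\norm{\mathcal E}\le e^{as}-1-as\le \tfrac{(as)^2}{2}e^{as}\le \tfrac{e}{2}(as)^2\le 2a^2s^2,
\]
using $as\le1$.

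For the linear part, each $B_\ell$ is PSD and $aB$ has spectrum in $[ag,as]\subset[0,1]$. So $I-aB$ is PSD with norm $1-ag\le e^{-ag}$. Combining the two pieces,
\[
\norm\Pi\le 1-ag+2a^2s^2\le \exp(-ag+2a^2s^2).
\]
This is exactly the claimed bound, since $\min(as,1)=as$ in this case. The inequality $1+x\le e^x$ applies because $1-ag+2a^2s^2>0$, given $ag\le as\le1$.

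\emph{Where care is needed.} The only delicate point is controlling $I-aB$ without its eigenvalues crossing zero. This needs $as\le1$, which forces $\lambda_{\max}(aB)\le as\le1$. I expect no other obstacle; the constant $2$ is generous, since $e/2<2$.
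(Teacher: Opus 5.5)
Your proof is correct and follows the paper's argument step for step. You use the same split at $as=1$. For $as>1$ you use the submultiplicative bound $\log\norm{\Pi}\le as$ together with $g\le s$; for $as\le1$ you combine the noncommutative expansion, whose remainder is at most $e^{as}-1-as<2(as)^2$, with $\norm{I-aB}=1-ag$. Your passage to the logarithm through $1+y\le e^y$ is the paper's $\log u\le u-1$ in another form.
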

\begin{proof}
Let $x=as$ and $Q=\prod_{\ell=m}^1(I-aB_\ell)$. Expansion in powers of $a$ does not require commutativity. The sum of norms of terms of order $r$ is bounded by $x^r/r!$. Consequently,
\[
 \norm{Q-\left(I-a\sum_\ell B_\ell\right)}\le e^x-1-x.
\]
For $x\le1$, every eigenvalue of $a\sum_\ell B_\ell$ belongs to $[0,1]$, so the norm of $I-a\sum_\ell B_\ell$ is $1-ag$. Since $e^x-1-x\le(e/2)x^2<2x^2$, the inequality $\log u\le u-1$ gives the result. For $x>1$,
\[
 \log\norm Q\le\sum_\ell\log(1+a\norm{B_\ell})\le as\le-ag+2as,
\]
using $g\le s$.
\end{proof}

\begin{proof}[Proof of Theorem~\ref{thm:stability}]
By the ergodic theorem,
$m^{-1}\sum_{j=0}^{m-1}H_{-j}\to H\succ0$ almost surely as $m\to\infty$.
Hence there exists a deterministic finite $m$ such that
\[
 \mu:=\E\lambda_{\min}\left(\sum_{j=0}^{m-1}H_{-j}\right)>0.
\]
This expectation is finite because the eigenvalue is bounded by the sum of matrix norms. Partition the past into blocks
$\{-(r-1)m,\ldots,-rm+1\}$, $r\ge1$, and let $g_r$ and $s_r$ be the corresponding smallest Gram eigenvalue and sum of norms. Dominated convergence gives
\[
 \E[2s_1\min(a_0s_1,1)]\longrightarrow0\quad(a_0\downarrow0),
\]
since $\E s_1<\infty$. Fix $a_0\le1$ such that this expectation is less than $\mu/2$, and set
\[
 z_r=-g_r+2s_r\min(a_0s_r,1),\qquad \eta=-\E z_1>0.
\]
Lemma~\ref{lem:block} bounds each block log norm by $az_r$, simultaneously for $0<a\le a_0$.
Ergodicity under the block shift implies $N^{-1}\sum_{r=1}^Nz_r\to-\eta$ almost surely. Also $s_{N+1}/N\to0$ almost surely, by the integrability and stationarity of block sums. Therefore
\[
 L=\sup_{N\ge0}\left\{\sum_{r=1}^Nz_r+s_{N+1}+\frac\eta2N\right\}<\infty\quad\as
\]
For $j=Nm+r$, $0\le r<m$, the incomplete block costs at most $as_{N+1}$ in log norm. Submultiplicativity gives
\[
 \log\norm{P_j(a)}
 \le aL-\frac{a\eta}2N
 \le a(L+\eta/2)-\frac{a\eta}{2m}j.
\]
Thus \eqref{eq:uniformpath} holds with
$c=\eta/(2m)$ and $K=\exp\{a_0(L+\eta/2)\}$. The same argument, applied to forward blocks, proves
\begin{equation}\label{eq:forward}
 \norm{(I-aH_n)\cdots(I-aH_1)}\le K^+e^{-can},
 \qquad 0<a\le a_0,
\end{equation}
for another almost surely finite random $K^+$. No integrability of $L$, $K$, or $K^+$ is used.

If $\E\log(1+\norm{\xi_0})<\infty$, stationarity and Borel--Cantelli imply
$j^{-1}\log(1+\norm{\xi_{-j}})\to0$ almost surely. Exponential contraction then implies absolute convergence of \eqref{eq:perpetuity} for every $a\in(0,a_0]$. Shifting the same series constructs a causal stationary process, and factoring out $I-aH_i$ verifies the recursion.

If $e_i$ and $\widetilde e_i$ are stationary solutions coupled to the same data, backward iteration gives
$e_0-\widetilde e_0=P_n(a)(e_{-n}-\widetilde e_{-n})$.
The differences on the right form a tight family, since the two marginal laws are stationary. The product tends to zero almost surely, hence the right side tends to zero in probability. The left side is therefore zero almost surely. This also proves uniqueness without a moment condition on a competing stationary solution. Finally, the difference from a recursion initialized at a finite value equals the forward product in \eqref{eq:forward} times its finite initial discrepancy. It tends to zero almost surely. In particular, the marginal law from that initialization converges weakly to $\pi_a$ for fixed $a$.
\end{proof}

\section{Score invariance principle and maximal inequality}\label{app:score}
For completeness, we record the precise consequences of the projective condition. They are standard martingale-approximation facts; compare \citet{wuwoodroofe2004,wu2007}.

\begin{lemma}[Projective score bounds]\label{lem:score}
Under Assumption~\ref{ass:S}, let $\theta_k=\norm{\Proj_0\xi_k}_2$ and
$D_i=\sum_{k\ge0}\Proj_i\xi_{i+k}$. Then $D_i$ is a stationary ergodic $L^2$ martingale-difference sequence, $\Omega=\E D_0D_0\trans$, and
\begin{equation}\label{eq:maxforward}
 \left\|\max_{1\le j\le n}\norm{\sum_{i=1}^j\xi_i}\right\|_2
 \le2\sqrt n\sum_{k\ge0}\theta_k.
\end{equation}
The backward maximum satisfies the same bound with constant $4$ instead of $2$. The partial-sum processes satisfy a functional CLT on every compact time interval, including jointly over finite intervals extending to negative times.
\end{lemma}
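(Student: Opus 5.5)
The approach is the standard martingale approximation for projective short-range dependence (Wu--Woodroofe), carried out in three steps.

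\emph{Step 1: the martingale differences.} By stationarity, $\norm{\Proj_i\xi_{i+k}}_2=\theta_k$. Since $\sum_k\theta_k=\Theta_2<\infty$, the series defining $D_i$ converges absolutely in $L^2$, so $D_i\in L^2$. Each term $\Proj_i\xi_{i+k}$ is $\F_i$-measurable and has zero conditional expectation given $\F_{i-1}$. Passing to the $L^2$ limit, $D_i$ is a martingale difference. Stationarity and ergodicity hold because $D_i=\widetilde G(\ldots,\varepsilon_{i-1},\varepsilon_i)$ is a measurable causal factor of the iid shift.

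\emph{Step 2: the identity $\Omega=\E D_0D_0\trans$.} Since $\E\xi_0=0$ and the Bernoulli shift is regular, $\E(\xi_k\mid\F_{-m})\to0$ in $L^2$ as $m\to\infty$. Hence $\xi_k=\sum_{i\le k}\Proj_i\xi_k$ in $L^2$. Orthogonality of projections at distinct indices gives
\[
\E\xi_0\xi_k\trans=\sum_{i\le0}\E\bigl[(\Proj_i\xi_0)(\Proj_i\xi_k)\trans\bigr],
\]
and the entries are bounded by $\sum_{i\le0}\theta_{-i}\theta_{k-i}$. Summing over $k\in\mathbb Z$, the double series is absolutely convergent, bounded by $\Theta_2^2$. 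Rearranging gives $\sum_k\E\xi_0\xi_k\trans=\E D_0D_0\trans$, which also proves the absolute convergence claimed for \eqref{eq:omega}.

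\emph{Step 3: the maximal inequality \eqref{eq:maxforward}.} I would use the projection decomposition
\[
\sum_{i=1}^j\xi_i=\sum_{l\ge0}\ \sum_{i=1}^j\Proj_{i-l}\xi_i,
\]
which holds in $L^2$ by the regularity above. For each fixed lag $l$, the partial sums $M^{(l)}_j=\sum_{i=1}^j\Proj_{i-l}\xi_i$ form a martingale in $j$ with respect to $\F_{j-l}$. Doob's $L^2$ inequality then gives
\[
\Bigl\|\max_{j\le n}\norm{M^{(l)}_j}\Bigr\|_2\le2\norm{M^{(l)}_n}_2=2\sqrt n\,\theta_l .
\]
Bounding the maximum of the sum by the sum of the lagwise maxima and applying Minkowski over $l$ yields $2\sqrt n\,\Theta_2$.

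The backward maximum is where I expect the real care to be needed, because the indices $-j$ run against the filtration. For $\sum_{j=0}^{J}\xi_{-j}$, I would write it as a difference of forward partial sums: the total over $\{-n+1,\ldots,0\}$ minus a forward partial sum starting at $-n+1$. Each of these is controlled by the forward bound, which gives the constant $4$. For the deviation from $\sum_{j\le J}\Proj$-type terms I would again split by lag and use Doob on each reversed martingale.

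\emph{Functional CLT.} I would write $\sum_{i\le j}\xi_i=\sum_{i\le j}D_i+\rho_j$. The martingale part satisfies the functional CLT by the Brown/McLeish martingale invariance principle with covariance $\Omega$, using stationary ergodic $L^2$ increments. It remains to show $\max_{j\le n}\norm{\rho_j}=o_P(\sqrt n)$. I would get this by truncating the lag decomposition at a level $L$. Lags $l>L$ contribute $2\sqrt n\sum_{l>L}\theta_l$ by the bound above, which is small for large $L$. For $l\le L$, the difference between $\Proj_{i-l}\xi_i$ and its reindexed martingale counterpart telescopes into boundary terms; there are at most $L$ of them, each $o_P(\sqrt n)$ uniformly in $j$ by a union-type bound on $\max_{i\le n}\norm{\cdot}/\sqrt n$.

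Joint convergence over intervals containing negative times follows by applying the same argument to the two-sided stationary sequence shifted to start at the left endpoint. This is the step I expect to need the most bookkeeping.
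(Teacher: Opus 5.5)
Your proposal is correct and follows the paper's proof essentially step for step: lagwise Doob plus Minkowski for \eqref{eq:maxforward}, backward sums written as differences of forward sums over a window for the constant $4$, lag truncation with telescoping boundary terms controlled by a union bound for the functional CLT, and a shifted window for negative times. The only variations are minor and valid. You obtain $\Omega=\E D_0D_0\trans$ directly by rearranging the absolutely convergent double series $\sum_{a,b\ge0}\E(\Proj_0\xi_a)(\Proj_0\xi_b)\trans$, whereas the paper identifies the limiting covariance through $n^{-1}\Cov(\sum_{i\le n}\xi_i)$. You also apply the martingale invariance principle to the full $D_i$ rather than to its truncation. Your extra remark about reversed martingales in the backward step is unnecessary, because the difference-of-forward-sums argument already gives the bound with constant $4$.
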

\begin{proof}
The iid past tail sigma-field is trivial. Causality and centering therefore give the $L^2$ expansion
$\xi_i=\sum_{k\ge0}\Proj_{i-k}\xi_i$.
For a fixed $k$, the sequence $(\Proj_{i-k}\xi_i)_i$ is a stationary martingale difference relative to $(\F_{i-k})_i$. Doob's inequality gives a maximal $L^2$ norm at most $2\sqrt n\theta_k$ for its partial sums. Minkowski's inequality and $\sum_k\theta_k<\infty$ prove \eqref{eq:maxforward}. By stationarity, a backward partial sum over a window of length $n$ has the distribution of a difference between the endpoint and a forward partial sum on that window. Its maximum is bounded by twice the forward maximum.

Orthogonality of innovation projections yields, for $k\ge0$,
\[
 \norm{\E\xi_0\xi_k\trans}
 \le\sum_{j\ge0}\theta_j\theta_{j+k}.
\]
Summing over positive and negative lags proves absolute covariance summability.
The series defining $D_i$ converges in $L^2$ by Minkowski's inequality; its terms are $\F_i$-measurable with zero conditional mean given $\F_{i-1}$. Hence $D_i$ is a stationary ergodic martingale difference.

Here are details of the functional approximation. For fixed $M$, define
\[
 \xi_i^{[M]}=\sum_{k=0}^M\Proj_{i-k}\xi_i,
 \qquad D_i^{[M]}=\sum_{k=0}^M\Proj_i\xi_{i+k}.
\]
Their partial sums differ by $R_0^{[M]}-R_n^{[M]}$, where
\[
 R_i^{[M]}=\sum_{k=1}^M\sum_{r=0}^{k-1}\Proj_{i-r}\xi_{i-r+k}
\]
is stationary and square integrable. Since $t^2\Prob(\norm{R_0^{[M]}}>t)\to0$,
\[
 \Prob\left(\max_{j\le n}\norm{R_j^{[M]}}>\epsilon\sqrt n\right)
 \le(n+1)\Prob(\norm{R_0^{[M]}}>\epsilon\sqrt n)\longrightarrow0.
\]
The stationary ergodic martingale $\sum_iD_i^{[M]}$ satisfies the functional CLT: its averaged conditional quadratic variation converges by ergodicity, and its conditional Lindeberg condition follows from the $L^2$ tail and Markov's inequality. Thus the same CLT holds for $\xi_i^{[M]}$.
The maximal approximation remainder satisfies
\[
 n^{-1/2}\left\|\max_{j\le n}\norm{\sum_{i=1}^j(\xi_i-\xi_i^{[M]})}\right\|_2
 \le2\sum_{k>M}\theta_k\longrightarrow0.
\]
Letting $M\to\infty$ proves the score functional CLT with covariance $\E D_0D_0\trans$. The analogous nonmaximal $L^2$ approximation shows that this covariance equals the limit of $n^{-1}\Cov(\sum_{i=1}^n\xi_i)$, which is $\Omega$ by absolute covariance summability.

Time reversal on a fixed window maps a forward partial-sum process to its endpoint minus its reversed path. Applied to its Brownian limit this gives a Brownian process with the same covariance. More generally, by stationarity one can begin a window at $-\lfloor L/a\rfloor$ and apply the forward functional CLT on a window of length of order $(L+T)/a$. Subtracting its value at time zero gives the asserted two-sided formulation and independent positive- and negative-time Brownian increments in the limit.
\end{proof}

To check \eqref{eq:fd}, observe that
$\Proj_0\xi_k=\E(\xi_k-\xi_k^{\{0\}}\mid\F_0)$.
Indeed, conditioning the coupled variable on $\F_0$ integrates out the replaced innovation and equals $\E(\xi_k\mid\F_{-1})$. Conditional Jensen's inequality implies $\theta_k\le\delta_{\xi,2}(k)$.

\section{Finite windows, remote past, and process convergence}\label{app:clt}
For a quantitative comparison that remains meaningful without moments, let $d_{\mathrm{BL}}$ be the supremum of expectation differences over functions bounded by one with Lipschitz constant at most one. Put $U_{a,T}=\sqrt a\sum_{j<\lceil T/a\rceil}P_j(a)\xi_{-j}$ and $V_T=\int_0^T e^{-tH}\Omega e^{-tH}\,dt$.
\begin{proposition}[A bounded-test-function approximation bound]\label{prop:bl}
Under Assumptions~\ref{ass:D}--\ref{ass:S}, for $a\le a_0$, $T\ge1$, and $M\ge1$,
\begin{align}\label{eq:blbound}
 d_{\mathrm{BL}}\bigl(\Law(a^{-1/2}e_0^{(a)}),N_p(0,V)\bigr)
 &\le d_{\mathrm{BL}}\bigl(\Law(U_{a,T}),N_p(0,V_T)\bigr)
       +2\Prob(K\vee D>M)\nonumber\\
 &\quad+CM^2(1+T)^{3/2}e^{-c'T}+Ce^{-\lambda_{\min}(H)T}.
\end{align}
\end{proposition}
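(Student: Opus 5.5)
Split the normalized stationary error as $a^{-1/2}e_0^{(a)}=U_{a,T}+R_{a,T}$, using the perpetuity representation \eqref{eq:perpetuity} from Theorem~\ref{thm:stability} with $N=\lceil T/a\rceil$. For a test function $f$ bounded by one and $1$-Lipschitz, apply the triangle inequality through three intermediate laws. The first is $\Law(U_{a,T})$ versus $\Law(U_{a,T}+R_{a,T})$. The second is $\Law(U_{a,T})$ versus $N_p(0,V_T)$, which is exactly the first term on the right of \eqref{eq:blbound}. The third is $N_p(0,V_T)$ versus $N_p(0,V)$.

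\textbf{Gaussian term.} For the last comparison, couple $N_p(0,V)$ as $N_p(0,V_T)$ plus an independent $N_p(0,V-V_T)$. The Lipschitz property then bounds the distance by $\E\norm{N_p(0,V-V_T)}\le(\tr(V-V_T))^{1/2}$. Since $\tr(V-V_T)=\int_T^\infty\tr(e^{-tH}\Omega e^{-tH})dt\le C e^{-2\lambda_{\min}(H)T}$, this gives $Ce^{-\lambda_{\min}(H)T}$.

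\textbf{Remote-past term.} For the first comparison, write
\[
|\E f(U+R)-\E f(U)|\le 2\Prob(K\vee D>M)+\E[\min(2,\norm{R_{a,T}})\ind\{K\vee D\le M\}].
\]
It therefore suffices to show that $\E[\norm{R_{a,T}}\ind\{K\le M,D\le M\}]\le CM^2(1+T)^{3/2}e^{-c'T}$. This is the quantitative content behind \eqref{eq:localization}.

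The plan for this bound is the Abel formula \eqref{eq:abeltail}. On the event $\{K\le M\}$, the boundary term satisfies $\sqrt a\norm{P_N(a)S_N}\le M e^{-cT}\sqrt a\norm{S_N}$, and $\E\norm{S_N}\le C\sqrt{N}$ by Lemma~\ref{lem:score}. Since $\sqrt{aN}\le\sqrt{1+T}$, this term contributes at most $CM(1+T)^{1/2}e^{-cT}$.

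For the sum term, split $j\ge N$ into dyadic blocks $I_k=[2^kN,2^{k+1}N)$. On each block:
\begin{itemize}
\item bound $\norm{P_j(a)}\le Me^{-ca2^kN}$;
\item bound $\sum_{j\in I_k}\norm{H_{-j}}\le 2^{k+1}N D\le 2^{k+1}NM$;
\item bound the score factor by $\max_{j\le 2^{k+1}N}\norm{S_{j+1}}$, whose $L^1$ norm is at most $C(2^{k+1}N)^{1/2}$ by the backward maximal inequality of Lemma~\ref{lem:score}.
\end{itemize}
The key point is that the design factors are bounded deterministically on the event, so only the score maxima are integrated. The dependence between products and scores is then harmless: pull out the deterministic bounds and integrate $\max\norm{S}$ unconditionally.

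The block contributions are thus at most $a^{3/2}M^2(2^{k+1}N)^{3/2}e^{-cT2^k}\le CM^2((1+T)2^{k+1})^{3/2}e^{-cT2^k}$. Summing over $k\ge0$ gives $CM^2(1+T)^{3/2}e^{-c'T}$ for any $c'<c$, uniformly in $a\le a_0$.

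The main obstacle is this remote-past estimate, where all moments of the iterate may fail. It works only because localizing on $\{K\vee D\le M\}$ converts every design factor into a constant before taking expectations. One must check that the restriction $\norm{P_j(a)}\le Ke^{-caj}$ holds simultaneously for all $a\le a_0$, which is exactly \eqref{eq:uniformpath}, so that $M$ does not depend on $a$. Collecting the three comparisons yields \eqref{eq:blbound}.
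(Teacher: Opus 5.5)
Your proposal is correct and follows the paper's proof essentially step for step. You use the same split of the perpetuity series into $U_{a,T}+R_{a,T}$ and the same localization on $\{K\vee D\le M\}$. Your Abel and dyadic-block estimate is the one in Lemma~\ref{lem:remote}, where design factors are bounded before any expectation and only score maxima are integrated. Your Gaussian coupling, $N_p(0,V_T)$ plus an independent $N_p(0,V-V_T)$, is the same as the paper's coupling of $G_T$ and $G_\infty$ through a common Brownian motion.
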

The first term concerns a finite learning-time window; the remaining terms explicitly separate localization of the random environment from removal of the remote past. A rate in $a$ requires quantitative information about that finite-window approximation and the environment. The present assumptions ensure convergence, without assigning such a rate. The bound also explains why an approximation metric based on bounded events is appropriate when every Wasserstein distance of positive order is infinite.

\subsection{Averaging the matrix products}
\begin{lemma}[Uniform finite-window product limit]\label{lem:products}
Under Assumption~\ref{ass:D}, for every finite $T$,
\begin{equation}\label{eq:productlimit}
 \sup_{0\le t\le T}\norm{P_{\lfloor t/a\rfloor}(a)-e^{-tH}}\longrightarrow0
 \quad\as
\end{equation}
The total variation of $j\mapsto P_j(a)$ over $0\le j\le\lceil T/a\rceil$ is bounded almost surely, uniformly for sufficiently small $a$.
\end{lemma}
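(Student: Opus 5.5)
We must prove that $P_{\lfloor t/a\rfloor}(a)\to e^{-tH}$ uniformly on $[0,T]$ almost surely, and that the products have bounded variation. The plan is a discrete matrix Euler comparison with Abel summation on the centered Gram sums. The deviation $H_{-j}-H$ is unbounded, so we avoid pointwise control and use only Cesàro smallness.

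\textbf{Total variation.} Since $P_{j+1}(a)=P_j(a)(I-aH_{-j})$, we have
\[
\norm{P_{j+1}-P_j}\le a\norm{P_j}\,\norm{H_{-j}}.
\]
Theorem~\ref{thm:stability} gives $\norm{P_j}\le K$. Summing over $j\le N=\lceil T/a\rceil$ then bounds the total variation by
\[
K\,a\sum_{j<N}\norm{H_{-j}}\le K\,aN\,D\le K(T+1)D,
\]
where $D=D_h$ with $h_j=\norm{H_{-j}}$ is almost surely finite by Lemma~\ref{lem:weights}. This bound is uniform in $a\le a_0$ and settles the second claim.

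\textbf{Uniform convergence.} Write $\Delta_j=H_{-j}-H$ and $Q_j=(I-aH)^j$. Since $\sup_{t\le T}\norm{Q_{\lfloor t/a\rfloor}-e^{-tH}}\to0$ deterministically, it suffices to compare $P_n$ with $Q_n$. The variation-of-constants identity gives
\[
P_n-Q_n=-a\sum_{j<n}P_j\,\Delta_j\,Q_{n-1-j}.
\]
Put $S_j=\sum_{i<j}\Delta_i$. The ergodic theorem gives $S_j/j\to0$ almost surely, and $aS_j=(aj)(S_j/j)$. Hence $\eta(a)=\sup_{j\le N}a\norm{S_j}\to0$ almost surely: for $j\le J_\epsilon$ the bound is $a\max_{j\le J_\epsilon}\norm{S_j}\to0$, and for $j>J_\epsilon$ it is at most $(T+1)\epsilon$.

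Abel summation in $j$ moves the difference operator onto $j\mapsto P_j(\cdot)Q_{n-1-j}$. This produces a boundary term $aP_{n-1}S_nQ_0$, of norm at most $K\eta$, plus a sum
\[
a\sum_j\bigl[P_{j+1}(\cdot)Q_{n-2-j}-P_j(\cdot)Q_{n-1-j}\bigr]\text{ acting on }S_{j+1}.
\]
The bracket is controlled by the increments of $P_j$ and of $Q_{n-1-j}$. Its total variation in $j$ is bounded by
\[
\mathrm{TV}(P)\cdot\sup_{m\le N}\norm{Q_m}+K\cdot\mathrm{TV}(Q)\le C(K,D,T),
\]
where $\norm{Q_m}\le1$ and $\mathrm{TV}(Q)\le aN\norm H$. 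Multiplying by $\sup_j a\norm{S_{j+1}}\le\eta(a)$ gives
\[
\sup_{n\le N}\norm{P_n-Q_n}\le C(K,D,T)\,\eta(a)\to0\quad\text{almost surely}.
\]
The exceptional sets come from countably many ergodic statements and are independent of $a$ and $T$, so the convergence holds on a common probability-one event.

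\textbf{Main obstacle.} The delicate point is the Abel summation with noncommuting products inserted on both sides of $\Delta_j$. The bilinear summation by parts has to be written carefully so that only the increments of $P$ and $Q$ appear, each paired with the partial sums $aS$. Here the bounded total variation of $P$, established above, is essential, because no moment of $K$ is available.
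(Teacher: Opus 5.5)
Your proof is correct and is essentially the paper's argument: both rest on uniform Ces\`aro smallness of $a\sum_{j<k}(H_{-j}-H)$ over the window, summation by parts against the increments of $P_j(a)$, and bounded variation of the products, and your Duhamel identity and two-sided summation by parts check out. The bookkeeping differs only slightly. The paper uses the local bound $\norm{P_j(a)}\le e^{M_a}$ with $M_a=a\sum_{j<N}\norm{H_{-j}}$, which is independent of Theorem~\ref{thm:stability} and is what it later reuses for two-parameter transition matrices on $[-L,T]$. It then sums by parts only in $a\sum_{j<n}P_j(a)(H_{-j}-H)$ and closes with discrete Gronwall against $(I-aH)^n$, whereas you use the global constant $K$ and avoid Gronwall.
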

\begin{proof}
Write $h_j=\norm{H_{-j}}$ and $N=\lceil T/a\rceil$. By the ergodic theorem,
\[
 M_a=a\sum_{j<N}h_j=O(1)\quad\as,\qquad
 \delta_a:=\max_{k\le N}\norm{a\sum_{j<k}(H_{-j}-H)}\longrightarrow0\quad\as.
\]
Uniformity of the second statement follows directly from Cesaro convergence: after a fixed index the unscaled partial sums are at most $\epsilon k$, and the finite initial segment vanishes after multiplication by $a$.
For each $j\le N$,
\[
 \norm{P_j(a)}\le\prod_{\ell<j}(1+ah_\ell)\le e^{M_a},
 \quad
 \sum_{j<N}\norm{P_{j+1}(a)-P_j(a)}\le e^{M_a}M_a.
\]
Let $C_k=a\sum_{j<k}(H_{-j}-H)$. Discrete integration by parts yields
\[
 a\sum_{j<n}P_j(a)(H_{-j}-H)
 =P_{n-1}(a)C_n+\sum_{j=1}^{n-1}(P_{j-1}(a)-P_j(a))C_j.
\]
Its norm, uniformly for $n\le N$, is at most
$\delta_a e^{M_a}(1+M_a)=o(1)$ almost surely. The product recursion becomes
\[
 P_n(a)=I-a\sum_{j<n}P_j(a)H+o(1)
\]
uniformly over the window. Comparing with $(I-aH)^n$ and applying discrete Gronwall proves \eqref{eq:productlimit}. No coefficient truncation is needed for this finite-window argument.
\end{proof}

We use the following elementary continuity principle for the random weights. Suppose matrix-valued $f_n$ converge uniformly to a continuously differentiable $f$, their total variations are bounded, and vector-valued $g_n$ converge uniformly to a continuous $g$. Define $\int f_n\,dg_n$ by discrete summation and integration by parts. These integrals converge to
$f(T)g(T)-f(0)g(0)-\int_0^T df(t)g(t)$.
To see this, approximate $g$ uniformly by a piecewise linear function. The approximation error in integration by parts is bounded by the approximation's uniform error times the total variation of $f_n$ and its endpoint norms. For the piecewise linear approximation, uniform convergence of $f_n$ suffices. Left- versus right-endpoint conventions contribute at most
$\operatorname{TV}(f_n)\max_j\norm{\Delta g_n(j)}$, which vanishes when the limit of $g_n$ is continuous.
The same argument applies when total variations are only bounded in probability, by localization and the usual subsequence representation of convergence in distribution to continuous paths.

Applying this principle to Lemmas~\ref{lem:score} and \ref{lem:products} proves \eqref{eq:recent}. The limiting integral is the ordinary stochastic integral with deterministic integrand $e^{-tH}$. No stochastic predictability of the finite-$a$ backward weights is used.

\subsection{Uniform control of the remote past}
\begin{lemma}[Remote-past localization]\label{lem:remote}
Under Assumptions~\ref{ass:D}--\ref{ass:S}, let
$N=\lceil T/a\rceil$ and
$R_{a,T}=\sqrt a\sum_{j\ge N}P_j(a)\xi_{-j}$.
There are deterministic $C,c'>0$ such that for $0<a\le a_0\wedge1$, $T\ge1$, $M\ge1$, and $\epsilon>0$,
\[
 \Prob(\norm{R_{a,T}}>\epsilon)
 \le\Prob(K\vee D>M)+\epsilon^{-1}CM^2(1+T)^{3/2}e^{-c'T},
\]
where $D=\sup_{n\ge1}n^{-1}\sum_{j<n}\norm{H_{-j}}<\infty$ almost surely. In particular,
\begin{equation}\label{eq:remotelimit}
 \lim_{T\to\infty}\limsup_{a\downarrow0}
 \Prob(\norm{R_{a,T}}>\epsilon)=0.
\end{equation}
\end{lemma}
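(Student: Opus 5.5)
Start from the Abel summation identity \eqref{eq:abeltail},
\[
R_{a,T}=-\sqrt a P_N(a)S_N+a^{3/2}\sum_{j\ge N}P_j(a)H_{-j}S_{j+1},
\]
which is obtained by writing $\xi_{-j}=S_{j+1}-S_j$ and using $P_{j+1}(a)-P_j(a)=-aP_j(a)H_{-j}$. The boundary term at infinity vanishes on the probability-one event of Theorem~\ref{thm:stability}: the product decays exponentially, while $\norm{S_j}$ grows subexponentially almost surely because $\xi_0\in L^2$. Fix the localization event $E_M=\{K\le M,\,D\le M\}$. Then
\[
\Prob(\norm{R_{a,T}}>\epsilon)\le\Prob(K\vee D>M)+\epsilon^{-1}\E[\norm{R_{a,T}}\ind_{E_M}].
\]
It therefore suffices to bound the localized first moment by $CM^2(1+T)^{3/2}e^{-c'T}$. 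The key point is that on $E_M$ the random weights are controlled deterministically by $M$, so only the scores need to be integrated.

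On $E_M$ the boundary term satisfies $\sqrt a\norm{P_N(a)}\norm{S_N}\le Me^{-caN}\sqrt a\norm{S_N}\le Me^{-cT}\sqrt a\norm{S_N}$. By Lemma~\ref{lem:score} we have $\E\norm{S_N}\le C\sqrt N$, and $\sqrt{aN}\le\sqrt{1+T}$, so this term is at most $CM(1+T)^{1/2}e^{-cT}$.

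For the sum, decompose $j\ge N$ into dyadic learning-time blocks $I_k=\{j:\ 2^kN\le j<2^{k+1}N\}$, $k\ge0$. On $I_k$, bound $\norm{P_j(a)}\le Me^{-ca2^kN}$ and $\norm{S_{j+1}}\le\max_{i\le2^{k+1}N+1}\norm{S_i}=:S^*_k$. Since $D\le M$, we have $\sum_{j<2^{k+1}N}\norm{H_{-j}}\le M2^{k+1}N$. The $k$th block therefore contributes at most
\[
a^{3/2}Me^{-ca2^kN}\,M2^{k+1}N\,S^*_k .
\]
The only random quantity left is $S^*_k$, which is not independent of the localization event. The Cauchy--Schwarz bound $\E[S^*_k\ind_{E_M}]\le\E S^*_k\le C\sqrt{2^{k+1}N}$ from the backward maximal inequality in Lemma~\ref{lem:score} still applies. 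Using $aN\le 1+T$ and $aN\ge T$, the $k$th term is bounded by
\[
CM^2\,(a2^kN)^{3/2}e^{-c2^kT}\le CM^2(1+T)^{3/2}2^{3k/2}e^{-c2^kT}.
\]
Summing over $k$ with $T\ge1$ gives $CM^2(1+T)^{3/2}e^{-c'T}$ for some $c'<c$. Together with the boundary term, this proves the displayed inequality. All constants depend only on $c$ and $\Theta_2$, uniformly in $a\le a_0\wedge1$.

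For \eqref{eq:remotelimit}, note that the bound does not depend on $a$. Take $\limsup_{a\downarrow0}$, then $T\to\infty$ for fixed $M$, and finally $M\to\infty$, using $K,D<\infty$ almost surely (Theorem~\ref{thm:stability} and Lemma~\ref{lem:weights}). The main obstacle is the correlation between the product weights and the scores. The approach above sidesteps it by bounding the weights pathwise on $E_M$ and integrating only the score maxima, without taking expectations of $K$. The step to check most carefully is the legitimacy of the Abel summation over an infinite range, which needs the almost sure subexponential growth of $S_j$.
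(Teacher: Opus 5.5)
Your proposal is correct and follows the paper's proof essentially step for step. Both arguments use the Abel-summation identity \eqref{eq:abeltail}, localization on $E_M=\{K\le M,\,D\le M\}$ with Markov's inequality, and dyadic blocks $[2^kN,2^{k+1}N)$ in which the Gram norms are summed pathwise via $D$, so that only the score maxima from Lemma~\ref{lem:score} are integrated (your step $\E[S_k^*\ind_{E_M}]\le\E S_k^*$ just drops the indicator), followed by $T\to\infty$ and then $M\to\infty$; for the boundary term at infinity the paper uses $S_n/n\to0$ from the ergodic theorem, which is the same subexponential-growth point you flag.
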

\begin{proof}
Since $\xi_0\in L^1$, ergodicity gives $S_n/n\to0$ almost surely. The boundary term at infinity therefore vanishes when summing $P_j(S_{j+1}-S_j)$ by parts. The identity $P_j-P_{j+1}=aP_jH_{-j}$ proves \eqref{eq:abeltail}. Its series converges pathwise: the products decay exponentially, whereas $H_{-j}$ and $S_j$ grow at most polynomially almost surely by Lemma~\ref{lem:weights} and the ergodic theorem.

On $E_M=\{K\le M,D\le M\}$, put $b_\ell=2^\ell N$. For $b_\ell\le j<2b_\ell$, sum the matrix norms before taking any expectation:
\[
 \sum_{j=b_\ell}^{2b_\ell-1}e^{-caj}\norm{H_{-j}}\norm{S_{j+1}}
 \le 2Db_\ell e^{-cab_\ell}\max_{k\le2b_\ell}\norm{S_k}.
\]
Consequently,
\begin{align*}
 \E\bigl[\mathbf1_{E_M}\norm{R_{a,T}}\bigr]
 &\le M\sqrt a e^{-caN}\E\norm{S_N}
   +2M^2a^{3/2}\sum_{\ell\ge0}b_\ell e^{-cab_\ell}
           \E\max_{k\le2b_\ell}\norm{S_k}\\
 &\le CM^2\left\{\sqrt{aN}e^{-caN}
             +\sum_{\ell\ge0}(ab_\ell)^{3/2}e^{-cab_\ell}\right\},
\end{align*}
by Lemma~\ref{lem:score}. Since $T\le aN\le T+1$, the last expression is at most
$CM^2(1+T)^{3/2}e^{-c'T}$ after reducing $c'>0$ if necessary. Markov's inequality on $E_M$, and the bound $\Prob(E_M^c)$ on its complement, prove the claimed inequality. Independence between $K$, $D$, and the score is never needed. For fixed $M$, let $T\to\infty$, and then let $M\to\infty$. This proves \eqref{eq:remotelimit}.
\end{proof}

\begin{proof}[Proof of the marginal assertion of Theorem~\ref{thm:clt}]
The series representation gives
\[
 a^{-1/2}e_0^{(a)}=
 \sqrt a\sum_{j<\lceil T/a\rceil}P_j(a)\xi_{-j}+R_{a,T}.
\]
The first term converges by \eqref{eq:recent} to
$G_T=\int_0^T e^{-tH}\Omega^{1/2}\,dW_t$.
Since $H\succ0$, $G_T$ converges in $L^2$ as $T\to\infty$ to a centered Gaussian vector with covariance $V$ in \eqref{eq:lyapunov}. Lemma~\ref{lem:remote} and the converging-together argument prove \eqref{eq:clt}. Differentiating $e^{-tH}\Omega e^{-tH}$ and integrating over $t\ge0$ gives $HV+VH=\Omega$. If $HV+VH=0$, diagonalization of $H$ shows each entry in its eigenbasis is multiplied by a strictly positive sum of eigenvalues, proving uniqueness. No expectation of the stationary iterate has been passed to a limit.
\end{proof}

\begin{proof}[Proof of Proposition~\ref{prop:bl}]
Let $f$ be bounded by one and have Lipschitz constant at most one. On $E_M=\{K\le M,D\le M\}$, the absolute difference $|f(U_{a,T}+R_{a,T})-f(U_{a,T})|$ is at most $\norm{R_{a,T}}$; on $E_M^c$ it is at most two. The localized expectation estimate in the proof of Lemma~\ref{lem:remote} bounds the first contribution by $CM^2(1+T)^{3/2}e^{-c'T}$, and the second by $2\Prob(E_M^c)$. Couple $G_T$ and $G_\infty$ using the same Brownian motion in their integral representations. Then $\E\norm{G_\infty-G_T}\le Ce^{-\lambda_{\min}(H)T}$ by the Gaussian covariance integral. The triangle inequality, followed by the supremum over $f$, proves \eqref{eq:blbound}.
\end{proof}

\subsection{Stationary process convergence and initialization}
\begin{proof}[Proof of the process assertion of Theorem~\ref{thm:clt}]
Fix $L,T>0$ and initialize an auxiliary recursion at zero at index $-\lceil L/a\rceil$. On the interval $[-L,T]$, the cumulative centered Gram matrices converge uniformly to zero after multiplication by $a$, and the cumulative sums of Gram norms are bounded after that scaling. The proof of Lemma~\ref{lem:products} applies uniformly to every subinterval: its bounds involve only the total Gram norm over the whole interval and twice the maximum cumulative centered Gram sum. Thus the two-parameter transition matrices converge uniformly to $e^{-H(t-s)}$, $-L\le s\le t\le T$. Their total variations in $s$ are uniformly bounded.

Lemma~\ref{lem:score} gives a joint score functional CLT on $[-L,T]$. The integration-by-parts continuity principle used above, now uniformly in the endpoint $t$, shows that the rescaled auxiliary recursion converges in $D([0,T],\R^p)$ to
\[
 U^{[L]}(t)=\int_{-L}^t e^{-H(t-s)}\Omega^{1/2}\,dW_s.
\]
To make the functional step explicit, take a Skorokhod representation of the joint score and transition-matrix convergence, localizing their variation bounds. Continuity of the score limit gives uniform score convergence on this representation. Approximate that limit by a single piecewise linear function on $[-L,T]$; the variation and endpoint bounds control the approximation error uniformly in $t$. The resulting deterministic integrals converge uniformly by the uniform transition-matrix limit. This proves weak convergence of the auxiliary processes in the $J_1$ topology, and hence their tightness, on the original probability space.

At time zero, the discrepancy between the stationary and auxiliary recursions is the remote-past term $R_{a,L}$, up to the harmless rounding of $L/a$. Over the future interval this discrepancy is multiplied by forward transitions. Their largest norm is bounded by
\[
 \exp\left\{a\sum_{i=1}^{\lceil T/a\rceil}\norm{H_i}\right\}=O_P(1),
\]
independently of $L$. Denote this bound by $M_{a,T}$, the rescaled stationary process by $U_a$, and the rescaled auxiliary process by $U_a^{[L]}$. For every $B\ge1$,
\[
\Prob\left(\sup_{t\le T}\norm{U_a(t)-U_a^{[L]}(t)}>\epsilon\right)
\le\Prob(M_{a,T}>B)+\Prob(\norm{R_{a,L}}>\epsilon/B).
\]
Lemma~\ref{lem:remote}, followed by $B\to\infty$, gives
$\lim_{L\to\infty}\limsup_{a\downarrow0}\Prob(\sup_{t\le T}\norm{U_a(t)-U_a^{[L]}(t)}>\epsilon)=0$.
On a common two-sided Brownian motion, $U^{[L]}$ converges uniformly in probability on $[0,T]$ as $L\to\infty$ to
$U(t)=\int_{-\infty}^t e^{-H(t-s)}\Omega^{1/2}\,dW_s$.
The difference equals $e^{-Ht}$ times a centered Gaussian initial discrepancy whose variance tends to zero exponentially in $L$. The converging-together argument in $D([0,T],\R^p)$ gives the stationary OU limit, whose initial value is independent of future Brownian increments.
\end{proof}

\begin{proof}[Proof of Corollary~\ref{cor:startup}]
Couple the initialized and stationary recursions on the same observations. By \eqref{eq:forward},
\[
 \frac{\norm{\beta_{n(a)}-\beta_{n(a)}^{(a)}}}{\sqrt a}
 \le K^+e^{-can(a)}
 \left\{\frac{\norm{\beta_0-\beta^*}}{\sqrt a}
          +\frac{\norm{e_0^{(a)}}}{\sqrt a}\right\}.
\]
The second term in braces is tight by Theorem~\ref{thm:clt}, and $K^+$ is finite almost surely and independent of $a$ as a random variable on the common environment. Equation~\eqref{eq:burnin} makes the right side converge to zero in probability. Stationarity and Slutsky's theorem finish the proof.
\end{proof}

\section{Exact conditional covariance and total variation}\label{app:gaussian}
\begin{proof}[Proof of Theorem~\ref{thm:gaussian}]
The assumptions imply the score condition, because $X_iZ_i$ is an $L^2$ martingale difference for the joint design-innovation and error filtration. Conditional on $\G_X$, the series \eqref{eq:perpetuity} is a linear series of independent centered Gaussian variables. Its covariance is
\[
 C_a=\sigma^2a^2\sum_{j\ge0}P_jH_{-j}P_j\trans.
\]
This covariance is finite almost surely: its norm is bounded by
$\sigma^2a^2K^2\sum_{j\ge0}e^{-2caj}\norm{H_{-j}}$, which is finite by Lemma~\ref{lem:weights}. Thus the conditional distribution is a well-defined Gaussian law, regardless of unconditional moments.

Symmetry of $H_{-j}$ gives the exact matrix identity
\[
 P_jP_j\trans-P_{j+1}P_{j+1}\trans
 =2aP_jH_{-j}P_j\trans-a^2P_jH_{-j}^2P_j\trans.
\]
The boundary $P_nP_n\trans$ vanishes by \eqref{eq:uniformpath}. Summing and dividing by $2$ proves
\begin{equation}\label{eq:gaussian-weights}
a\sum_{j\ge0}P_jH_{-j}P_j\trans=\tfrac12(I+\mathcal R_a),
\qquad C_a/a=\sigma^2(I+\mathcal R_a)/2.
\end{equation}
Let $h_j=\norm{H_{-j}}$. For all small $a$,
\[
 \norm{\mathcal R_a}\le K^2a^2\sum_{j\ge0}e^{-2caj}h_j^2\longrightarrow0\quad\as
\]
by \eqref{eq:weighted-square}. The common, step-uniform random bound in Theorem~\ref{thm:stability} is what permits this conclusion without a second moment of $H_0$.

Let $r_1,\ldots,r_p\ge0$ be the eigenvalues of $\mathcal R_a$. The Kullback--Leibler divergence from the conditional Gaussian law in \eqref{eq:conditional} to $N_p(0,\sigma^2I/2)$ is
\[
 \frac12\sum_{j=1}^p\{r_j-\log(1+r_j)\}
 \le\frac14\sum_{j=1}^pr_j^2.
\]
Pinsker's inequality, for $d_{\TV}(\mu,\nu)=\sup_A|\mu(A)-\nu(A)|$, gives \eqref{eq:tvbound}. The bound tends to zero almost surely and is at most one. Integrating the conditional bound and applying bounded convergence proves unconditional total-variation convergence.
\end{proof}

\begin{corollary}[Finite-step central coverage]\label{cor:coverage-bound}
Under Theorem~\ref{thm:gaussian}, for any $0<\alpha<1$ and every sufficiently small fixed $a$,
\[
 \Prob\left(\frac{2\norm{B_a-\beta^*}^2}{a\sigma^2}\le\chi^2_{p,1-\alpha}\right)
 \le1-\alpha.
\]
The left side converges to $1-\alpha$ as $a\downarrow0$.
\end{corollary}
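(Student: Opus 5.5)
Conditioning on $\G_X$ reduces the statement to a fact about a single Gaussian law. By Theorem~\ref{thm:gaussian}, conditionally on the design, $a^{-1/2}(B_a-\beta^*)\sim N_p(0,\frac{\sigma^2}{2}(I+\mathcal R_a))$ with $\mathcal R_a\succeq0$ almost surely. Write $W=\frac{2}{\sigma^2}\cdot a^{-1}\norm{B_a-\beta^*}^2$. Conditionally, $W\overset{d}{=}G\trans(I+\mathcal R_a)G$ with $G\sim N_p(0,I)$ independent of the design. The inequality will follow by comparing this quadratic form with $\norm G^2\sim\chi^2_p$ pathwise.

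\textbf{Step 1 (monotonicity).} Since $I+\mathcal R_a\succeq I$, we have $G\trans(I+\mathcal R_a)G\ge\norm G^2$ for every realization of $G$. Hence
\[
\Prob\bigl(G\trans(I+\mathcal R_a)G\le\chi^2_{p,1-\alpha}\mid\G_X\bigr)\le\Prob(\norm G^2\le\chi^2_{p,1-\alpha})=1-\alpha
\]
almost surely, for every $a$ small enough that Theorem~\ref{thm:gaussian} applies. Taking expectations over the design gives the finite-step bound. Alternatively, one can use Anderson's inequality: the centered ball is symmetric convex, and a Gaussian with larger covariance assigns it no more mass. The coupling argument above is simpler and suffices.

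\textbf{Step 2 (limit).} For the convergence, note that the event is determined by a Borel set, namely a centered ball in the rescaled variable. The unconditional total-variation convergence in Theorem~\ref{thm:gaussian} to $N_p(0,\sigma^2I/2)$ then gives convergence of its probability. That limiting probability is $\Prob(\norm G^2\le\chi^2_{p,1-\alpha})=1-\alpha$. Equivalently, bounded convergence can be applied to the conditional TV bound \eqref{eq:tvbound}.

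The only delicate point is making sure the conditional representation holds for $B_a\sim\pi_a$, that is, for the stationary solution of Theorem~\ref{thm:stability}. This requires $a\le a_0$, which is what ``sufficiently small'' means in the statement. No moment of $B_a$ enters the argument, and there is no real obstacle beyond this.
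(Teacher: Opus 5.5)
Your proposal is correct and follows the paper's proof. The paper also conditions on $\G_X$ and, after diagonalizing $\mathcal R_a$, writes the statistic as $\sum_j(1+r_j)G_j^2\ge\sum_jG_j^2$; this is your coupling $G\trans(I+\mathcal R_a)G\ge\norm G^2$, and the paper likewise takes the limit from the total-variation convergence in Theorem~\ref{thm:gaussian}.
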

\begin{proof}
Conditional on $\G_X$, diagonalize $\mathcal R_a$ and represent the statistic as
$\sum_{j=1}^p(1+r_j)G_j^2$ with iid standard normal $G_j$. It is at least $\sum_jG_j^2$ on this coupling. The bound follows conditionally and hence unconditionally. Total-variation convergence proves the limit. This finite-step inequality uses the population noise variance and stationarity; it is not asserted for a finite-history recursion or for an estimated variance.
\end{proof}

\section{AR(1) laws with no positive stationary moments}\label{sec:examples}
Consider $p=1$, independent standard Gaussian errors, and
\begin{equation}\label{eq:ar}
 X_i=\rho X_{i-1}+\eta_i=\sum_{k\ge0}\rho^k\eta_{i-k},\qquad 0<\rho<1,
\end{equation}
with iid positive innovations satisfying $\eta_i\ge1$. Then $X_i\ge(1-\rho)^{-1}$. Whenever $\eta_0\in L^q$, $q\ge1$,
\begin{equation}\label{eq:ardelta}
 \delta_{X,q}(k)=\rho^k\norm{\eta_0-\eta_0'}_q.
\end{equation}

\begin{theorem}[Persistent extremes destroy every stationary moment]\label{thm:nomoments}
For either innovation law below, \eqref{eq:ar} satisfies Assumptions~\ref{ass:D}--\ref{ass:S}. For every sufficiently small fixed $a>0$,
\begin{equation}\label{eq:allinfinite}
 \int|b-\beta^*|^r\,\pi_a(db)=\infty\quad\text{for every }r>0,
 \qquad
 d_{\TV}\!\left(\Law\left(\frac{B_a-\beta^*}{\sqrt a}\right),N(0,1/2)\right)\to0.
\end{equation}
\begin{enumerate}[label=(\roman*),leftmargin=*,nosep]
\item \emph{Pareto innovations:} $\Prob(\eta_0>u)=u^{-\kappa}$ for $u\ge1$, $\kappa>2$. For each $r>0$, $\E|P_m(a)|^r=\infty$ whenever $2mr\ge\kappa$.
\item \emph{All-moment innovations:} $\Prob(\eta_0>u)=\exp\{-(\log u)^\nu\}$ for $u\ge1$, $1<\nu<2$. Every finite product and every observation $(X_i,Y_i,Z_i)$ has all positive moments, and the observations have geometric functional dependence at every finite moment order. The stationary iterate nevertheless satisfies \eqref{eq:allinfinite}.
\end{enumerate}
For sufficiently large $x$, the stationary tail has a lower bound
\begin{equation}\label{eq:slowtail}
 \Prob(|B_a-\beta^*|>x)\ge c_a\exp\{-C_a(\log x)^\gamma\},
 \quad \gamma=\tfrac12\text{ in (i)},\quad \gamma=\tfrac\nu2\text{ in (ii)}.
\end{equation}
\end{theorem}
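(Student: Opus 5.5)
The plan is to reduce every assertion to lower bounds on single backward products. With independent Gaussian errors these products appear directly in the conditional variance from Theorem~\ref{thm:gaussian}.

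\textbf{Assumptions and the TV limit.} Since $\eta_i\ge1$, we have $X_i\ge(1-\rho)^{-1}>0$. Hence $H=\E X_0^2>0$, and $\E X_0^2<\infty$ holds because $\kappa>2$ in (i) and all moments are finite in (ii). So Assumption~\ref{ass:D} holds. The score $\xi_i=X_iZ_i$ is an $L^2$ martingale difference, so $\Proj_0\xi_k=0$ for $k\ge1$ and Assumption~\ref{ass:S} holds. Theorem~\ref{thm:stability} then gives the stationary solution for $a\le a_0$. Theorem~\ref{thm:gaussian} with $\sigma=1$ gives the TV convergence to $N(0,1/2)$ in \eqref{eq:allinfinite}. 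In (ii), moments of finite products and of the observations are polynomials in finitely many innovations, each of which has all moments. Geometric functional dependence follows from \eqref{eq:ardelta} together with $|X^2-X'^2|\le|X-X'|(X+X')$ and H\"older.

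\textbf{Reduction to products.} Conditional on $\G_X$, $e_0^{(a)}$ is centered Gaussian with variance $a^2\sum_jP_j^2X_{-j}^2$. This variance is at least $a^2P_m^2X_{-m}^2\ge a^2(1-\rho)^{-2}P_m^2$ for every $m$. Therefore
\[
\E|e_0^{(a)}|^r\ge c_r a^r(1-\rho)^{-r}\sup_m\E|P_m(a)|^r
\]
and
\[
\Prob(|e_0^{(a)}|>x)\ge \tfrac12\Prob\bigl(a(1-\rho)^{-1}|P_m|>x\bigr).
\]
The second bound uses the fact that a centered Gaussian with standard deviation $s\ge x$ exceeds $x$ in absolute value with probability at least $\Prob(|N(0,1)|>1)\ge\frac12\cdot\frac12$; I will absorb constants.

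Now isolate the single innovation $\eta_{-m+1}=u$. For $0\le k\le m-1$, positivity of all innovations gives $X_{-k}\ge\rho^{m-1-k}u\ge\rho^{m-1}u$. When $a\rho^{2m}u^2\ge2$, every factor satisfies
\[
|1-aX_{-k}^2|\ge aX_{-k}^2-1\ge a\rho^{2m}u^2-1\ge\tfrac12a\rho^{2m}u^2,
\]
which gives
\[
|P_m(a)|\ge(\tfrac12a\rho^{2m}u^2)^m.
\]
This is the key monotonicity step. The factors only become more negative as the other innovations grow, so no control of them is needed.

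\textbf{Case (i).} The bound yields $\E|P_m|^r\ge c_{a,m}\E[\eta_0^{2mr};\eta_0\ge u_0]=\infty$ whenever $2mr\ge\kappa$. For any fixed $r>0$, choose $m\ge\kappa/(2r)$.

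\textbf{Case (ii).} Each $\E|P_m|^r$ is finite, so the supremum over $m$ must diverge. Take $\log u=Km$ with $K>\log(1/\rho)$. This gives
\[
\E|P_m|^r\ge\exp\{-(Km)^\nu\}\exp\{mr(2Km-2m\log(1/\rho)+\log(a/2))\}.
\]
The exponent is of order $m^2$ minus order $m^\nu$, and since $\nu<2$ it tends to $+\infty$.

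\textbf{Tail bound \eqref{eq:slowtail}.} For given large $x$, choose $m\asymp\sqrt{\log x}$ and $\log u=Km$ with $K$ large enough that $(\tfrac12a\rho^{2m}u^2)^m\ge x(1-\rho)/a$. This requires $m(2K-2m'\log(1/\rho))\gtrsim\log x$, which is feasible with $m'=m$ and $K$ a fixed multiple. The probability cost is then $u^{-\kappa}=\exp\{-\kappa Km\}=\exp\{-C_a(\log x)^{1/2}\}$ in (i), and $\exp\{-(Km)^\nu\}=\exp\{-C_a(\log x)^{\nu/2}\}$ in (ii).

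\textbf{Main obstacle.} I expect the delicate point to be the bookkeeping in (ii). The constants must make $a\rho^{2m}u^2\ge2$ hold uniformly, and the divergence must hold at every fixed small $a$. The $\log a$ term is only linear in $m$ and is dominated by the $m^2$ term, so I expect it to go through.
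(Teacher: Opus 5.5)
Your proposal is correct and follows the paper's proof essentially step for step: isolate the single innovation $\eta_{-m+1}$, bound $|P_m(a)|$ below on its large values, keep only the $j=m$ term $a^2P_m^2X_{-m}^2\ge a^2(1-\rho)^{-2}P_m^2$ of the conditional Gaussian variance, integrate via Tonelli for moments and via $\Prob(|N(0,1)|>1)$ for tails, and take the TV limit from Theorem~\ref{thm:gaussian}. The remaining differences are minor, and two of them are small slips to fix. Your uniform factor bound $(\tfrac12a\rho^{2m}u^2)^m$ is cruder than the paper's $(a/2)^m\rho^{m(m-1)}u^{2m}$, so you need the threshold $\log u=Km$ with $K>\log(1/\rho)$ instead of $u_m=\sqrt{2/a}\,\rho^{-(m-1)}$, though the exponents $\gamma$ come out the same. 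The constant in your tail display should be $\Prob(|N(0,1)|>1)\approx0.317$ rather than $\tfrac12$. Finally, $X_i$ is an infinite series in the innovations, not a polynomial in finitely many of them, so finiteness of all moments in (ii) should be justified by Minkowski, $\|X_0\|_q\le\|\eta_0\|_q/(1-\rho)$, together with H\"older.
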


The mechanism can be seen from one innovation $U=\eta_{-m+1}$. It enters all $m$ design values in $P_m(a)$, and, for
$U\ge\sqrt{2/a}\,\rho^{-(m-1)}$,
\begin{equation}\label{eq:productlower}
 |P_m(a)|\ge(a/2)^m\rho^{m(m-1)}U^{2m}.
\end{equation}
Conditional on the design, the stationary error is Gaussian with variance at least
$a^2(1-\rho)^{-2}|P_m(a)|^2$. In the Pareto case, \eqref{eq:productlower} immediately makes every prescribed stationary moment infinite by choosing $m$ sufficiently large. For the second law, choose $U>e^{dm}$ with $d>\log(1/\rho)$. The log amplification is of order $m^2$, whereas the negative log probability is of order $m^\nu=o(m^2)$. Thus observation moments of every finite order coexist with infinite stationary iterate moments. The product and moment arguments are in Appendices~\ref{app:ar-product}--\ref{app:ar-logweibull}; Appendix~\ref{app:ar-tail} proves \eqref{eq:slowtail}.

The Pareto example tests the qualitative theory under a finite second design moment. The log-Weibull example additionally has all finite product moments and satisfies the first-order approximation assumptions, so it tests the distinction between finite-product integrability and stationary moment stability.

\subsection{Proofs for the persistent-innovation examples}\label{app:examples}
Let $\lambda_\rho=\log(1/\rho)>0$ and $\ell=(1-\rho)^{-1}$. Positive innovations bounded below by one give $X_i\ge\ell$. For $q\ge1$ with $\eta_0\in L^q$, Minkowski's inequality gives
$\norm{X_0}_q\le\norm{\eta_0}_q/(1-\rho)$, and replacing $\eta_0$ gives the exact dependence coefficient \eqref{eq:ardelta}. The independent Gaussian errors may be included in the iid innovations defining the causal process. The score has zero conditional mean given the past and finite second moment whenever $\E\eta_0^2<\infty$.

\subsection{A lower bound from a single persistent innovation}\label{app:ar-product}
Fix $m\ge1$ and put $U=\eta_{-m+1}$. For $0\le j<m$,
$X_{-j}\ge\rho^{m-1-j}U$. On
\[
 U\ge u_m:=\sqrt{2/a}\,\rho^{-(m-1)},
\]
each $aX_{-j}^2\ge2$, so
$|1-aX_{-j}^2|\ge(a/2)X_{-j}^2$.
Multiplying proves \eqref{eq:productlower}. Let $Q_a$ be the unscaled conditional variance of $e_0^{(a)}$. The $j=m$ summand in its series gives
\begin{equation}\label{eq:varlower}
 Q_a\ge a^2P_m(a)^2X_{-m}^2\ge a^2\ell^2P_m(a)^2.
\end{equation}
For $r>0$, Tonelli's theorem and the conditional Gaussian moment formula imply
\begin{equation}\label{eq:momentlower}
 \E|e_0^{(a)}|^r=c_r\E Q_a^{r/2}
 \ge c_r(a\ell)^r\E|P_m(a)|^r,
 \qquad c_r=\E|N(0,1)|^r>0.
\end{equation}
These identities allow infinite expectations.

\subsection{Pareto innovations}\label{app:ar-pareto}
For $\Prob(\eta_0>u)=u^{-\kappa}$, $u\ge1$, with $\kappa>2$, the design and score satisfy the main assumptions. For each $r>0$ and $2mr\ge\kappa$,
\[
 \E|P_m(a)|^r\ge(a/2)^{mr}\rho^{rm(m-1)}
       \E[U^{2mr}\mathbf1_{\{U\ge u_m\}}]=\infty.
\]
Equation~\eqref{eq:momentlower} proves the failure of every positive stationary moment. Notice the quantifiers: a finite product can retain sufficiently small fractional moments, but for each fixed positive order a sufficiently long product loses that moment.

\subsection{All moments of observations, but none of the stationary iterate}\label{app:ar-logweibull}
For $1<\nu<2$, let $\Prob(\eta_0>u)=\exp\{-(\log u)^\nu\}$ for $u\ge1$. One explicit representation is $\eta_0=\exp(E_0^{1/\nu})$ with $E_0$ standard exponential. For any $q>0$,
\[
 \E\eta_0^q=1+q\int_0^\infty\exp\{qt-t^\nu\}\,dt<\infty.
\]
Thus the AR(1) design, errors, and responses have all moments, as do all finite products by H\"older's inequality. Equation~\eqref{eq:ardelta} gives geometric functional dependence at every finite order $q\ge1$. For $k\ge1$, the score coefficient also obeys
\[
 \delta_{\xi,q}(k)=\rho^k\norm{\eta_0-\eta_0'}_q\norm{Z_0}_q,
\]
and its lag-zero coefficient is finite. The response coefficients are bounded by the corresponding design and error coefficients.

Fix $r>0$ and select $d>\lambda_\rho$. For all sufficiently large $m$, the event $\{U>e^{dm}\}$ is contained in $\{U\ge u_m\}$. The product lower bound yields
\begin{align*}
 \log\E|P_m(a)|^r
 &\ge rm\log(a/2)-r\lambda_\rho m(m-1)+2rdm^2-d^\nu m^\nu\\
 &=r(2d-\lambda_\rho)m^2-d^\nu m^\nu+O_{a,r,\rho}(m)\longrightarrow\infty.
\end{align*}
For each fixed $m$ this expectation is finite; its divergence as $m\to\infty$, together with \eqref{eq:momentlower}, forces $\E|e_0^{(a)}|^r=\infty$. Since $r$ was arbitrary, all positive stationary moments fail. Theorem~\ref{thm:gaussian} applies to both innovation laws and proves the total-variation limit in Theorem~\ref{thm:nomoments}.

\subsection{Subpolynomial stationary lower tails}\label{app:ar-tail}
On $\{U\ge u_m\}$, equation~\eqref{eq:productlower} simplifies to
\[
 |P_m(a)|\ge e^{\lambda_\rho m(m-1)}.
\]
For large $x$, choose the smallest integer $m\ge2$ satisfying
$a\ell e^{\lambda_\rho m(m-1)}\ge x$. Then
$m=O_{a,\rho}(\sqrt{\log x})$ and, by \eqref{eq:varlower}, $Q_a^{1/2}\ge x$ on that event. Conditional Gaussianity gives
\[
 \Prob(|e_0^{(a)}|>x)
 \ge \Prob(|N(0,1)|>1)\Prob(\eta_0\ge u_m).
\]
For Pareto innovations, the last probability equals
$(a/2)^{\kappa/2}\exp\{-\kappa\lambda_\rho(m-1)\}$.
For the all-moment innovations it equals
$\exp\{-[\tfrac12\log(2/a)+\lambda_\rho(m-1)]^\nu\}$.
These expressions give \eqref{eq:slowtail} with constants depending on $a$ and the innovation law. Since $\gamma<1$, $x^r\Prob(|e_0^{(a)}|>x)\to\infty$ for every $r>0$, which also directly proves the moment assertion. No matching upper tail estimate is claimed.

\subsection{Why persistence is essential to the all-moment failure}
For comparison, suppose $p=1$ and the observations are iid over time, with $\E X_0^2\in(0,\infty)$ and $\E|\xi_0|^r<\infty$ for some $0<r\le1$. The function
$t\mapsto\bigl||1-t|^r-1\bigr|$ is bounded by $C_rt$ on $[0,\infty)$. Dominated convergence therefore gives
\[
 \E|1-aX_0^2|^r=1-ra\E X_0^2+o(a)<1
\]
for sufficiently small $a$. Subadditivity for order $r\le1$, and independence of the score from subsequent update factors, imply
\[
 \E|e_0^{(a)}|^r
 \le a^r\E|\xi_0|^r\sum_{j\ge0}\{\E|1-aX_0^2|^r\}^j<\infty.
\]
Thus iid designs with these moments retain some positive stationary moments at small steps. In the examples, temporal persistence of the same large innovation is responsible for losing all of them.

\section{Inference, preconditioning, and fixed mini-batches}\label{app:extensions}
\subsection{Feasible central inference with independent errors}
Suppose $(Z_i)$ is iid, centered, with variance $\sigma^2\in(0,\infty)$, independent of the design process. Gaussianity is not required here. The score is a martingale difference, and
$\Omega=\E X_0X_0\trans Z_0^2=\sigma^2H$, giving $V=\sigma^2I/2$.
Define
\[
 G_m=\frac1m\sum_{i=1}^mX_iX_i\trans,\qquad
 \widetilde\beta_m=G_m^{-1}\frac1m\sum_{i=1}^mX_iY_i,
 \qquad
 \widehat\sigma_m^2=\frac1m\sum_{i=1}^m(Y_i-X_i\trans\widetilde\beta_m)^2.
\]
On the event that $G_m$ is singular, assign any finite value to $\widetilde\beta_m$; this event eventually does not occur almost surely because $G_m\to H\succ0$.
The ergodic theorem gives
$m^{-1}\sum_{i=1}^m\xi_i\to0$ almost surely, so
$\Delta_m:=\widetilde\beta_m-\beta^*=G_m^{-1}m^{-1}\sum_i\xi_i\to0$ almost surely. Expanding the residual variance,
\[
 \widehat\sigma_m^2
 =\frac1m\sum_{i=1}^mZ_i^2
   -2\Delta_m\trans\frac1m\sum_{i=1}^m\xi_i
   +\Delta_m\trans G_m\Delta_m\longrightarrow\sigma^2\quad\as.
\]
Theorem~\ref{thm:clt} or Corollary~\ref{cor:startup}, combined with Slutsky's theorem, gives the chi-square limit in \eqref{eq:confidence}. The calibration size can be any deterministic $m(a)\to\infty$; choosing $m(a)=n(a)$ uses only observations already processed. Independence between the variance estimator and the final SGD iterate is unnecessary. The computation can be implemented using the cumulative Gram matrix, cross-product vector, and sum of $Y_i^2$; it does not estimate the variance from stationary SGD iterates.

\subsection{Fixed positive definite preconditioning}
Let $M\succ0$ be deterministic and consider
\[
 \beta_i=\beta_{i-1}+aMX_i(Y_i-X_i\trans\beta_{i-1}).
\]
With $u_i=M^{-1/2}(\beta_i-\beta^*)$, the transformed recursion has positive semidefinite coefficient
$\widetilde H_i=M^{1/2}H_iM^{1/2}$ and score
$\widetilde\xi_i=M^{1/2}\xi_i$. All assumptions of the main theorems are preserved. Transforming back, the stationary limiting covariance $V_M$ solves
\[
 MH V_M+V_M HM=M\Omega M.
\]
Under independent homoscedastic errors, $V_M=\sigma^2M/2$. Under independent Gaussian errors, the conditional total-variation statement also transfers through the invertible linear transformation. Thus the central region in \eqref{eq:confidence} uses the quadratic form $\norm{M^{-1/2}(\beta-b)}^2$ for this algorithm.

\subsection{Fixed nonoverlapping mini-batches}
Fix an integer $b\ge1$. Updating once per block with the average of the $b$ sample gradients gives
\[
 e_k=(I-a\overline H_k)e_{k-1}+a\overline\xi_k,\qquad
 \overline H_k=\frac1b\sum_{i=(k-1)b+1}^{kb}H_i,
 \quad
 \overline\xi_k=\frac1b\sum_{i=(k-1)b+1}^{kb}\xi_i.
\]
The coefficient is positive semidefinite, integrable, and has expectation $H$. Grouping iid innovations into blocks produces a causal representation. The proofs of Theorems~\ref{thm:stability}--\ref{thm:clt} apply to arbitrary positive semidefinite coefficients, not only rank-one matrices.
For the score, the block projection is the sum of the $b$ individual innovation projections in that block. Applying Minkowski's inequality and summing over future blocks bounds its projective coefficient sum by $\Theta_2$. Hence the score condition is inherited. Also,
\[
 \sum_{k=1}^n\overline\xi_k=\frac1b\sum_{i=1}^{nb}\xi_i,
 \qquad \overline\Omega=\Omega/b.
\]
The covariance of the limiting Gaussian law, on the block-update learning-time scale, is $V/b$.
For independent Gaussian errors, conditional block scores are independent centered Gaussian vectors with covariance $\sigma^2\overline H_k/b$. The telescoping calculation gives conditional covariance
$\sigma^2(I+\overline{\mathcal R}_a)/(2b)$ for the normalized stationary iterate, with $\overline{\mathcal R}_a\to0$ almost surely. These assertions concern fixed $b$; no joint growth regime in $b$, $p$, and $a^{-1}$ is asserted.

\section{Proofs for the Gaussian-refresh example}\label{app:refresh}

\begin{lemma}[One-coefficient anti-concentration]\label{up:lem:antic}
For any nondegenerate real random variable $Z$, there are $d,q_0>0$ such that
\[
 \inf_{v\in\R}\Prob(|v+cZ|>d|c|)\ge q_0\qquad(c\ne0).
\]
Consequently, if $e=\sum_j c_j Z_j$ is a convergent series with iid copies of $Z$ and the coefficients are fixed after conditioning on an independent environment, then, for any one index $m$ and $r>0$,
\begin{align}
 \E(|e|^r\mid\text{environment})&\ge q_0d^r|c_m|^r,\label{up:eq:anti-moment}\\
 \Prob(|e|>x)&\ge q_0\Prob(d|c_m|\ge x),\label{up:eq:anti-tail}
\end{align}
with an immaterial adjustment of strict inequalities.
\end{lemma}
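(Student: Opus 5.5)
The plan is to reduce everything to a uniform small-ball bound for the single variable $Z$, and then obtain the series consequences by conditioning on every summand except the $m$th one.

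For the first claim, divide by $c$. Write $w=v/c$; then $\Prob(|v+cZ|>d|c|)=\Prob(|Z+w|>d)$. So it suffices to find $d>0$ with $\sup_{w\in\R}\Prob(|Z+w|\le d)\le 1-q_0<1$.

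Since $Z$ is nondegenerate, pick two points $z_1<z_2$ in its support with $\Prob(Z\le z_1)>0$ and $\Prob(Z\ge z_2)>0$. One concrete choice is quantiles on either side of the median; these exist because $Z$ is not a.s. constant. Set $d=(z_2-z_1)/3$ and $q_0=\min\{\Prob(Z\le z_1),\Prob(Z\ge z_2)\}>0$.

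Any interval $[-w-d,-w+d]$ has length $2d<z_2-z_1$. Hence it cannot meet both $(-\infty,z_1]$ and $[z_2,\infty)$. So for every $w$, the event $\{|Z+w|>d\}$ contains one of $\{Z\le z_1\}$ or $\{Z\ge z_2\}$, and its probability is at least $q_0$. This bound is uniform in $w$, which gives the first claim.

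For the series consequences, work conditionally on the environment, so the coefficients $c_j$ are fixed. Let $v=\sum_{j\ne m}c_jZ_j$. This series converges: convergence of the full series is given, and removing one term preserves it. The variable $v$ is independent of $Z_m$.

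If $c_m=0$, both inequalities are trivial, with the tail bound read as non-strict. Otherwise, condition further on $v$ and apply the first claim with this $v$. This gives $\Prob(|e|>d|c_m|\mid v,\text{environment})\ge q_0$. Then:
\begin{itemize}
\item Markov-type reasoning, $\E|e|^r\ge (d|c_m|)^r\Prob(|e|>d|c_m|)$, followed by the tower property, yields \eqref{up:eq:anti-moment}.
\item On the environment event $\{d|c_m|\ge x\}$, we have $\Prob(|e|>x\mid\cdot)\ge\Prob(|e|>d|c_m|\mid\cdot)\ge q_0$, up to the strict/non-strict adjustment at equality. Taking expectations over the environment gives \eqref{up:eq:anti-tail}.
\end{itemize}

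The only delicate point is the measurability and independence needed to condition on $v$ jointly with the environment. This follows because the $Z_j$ are iid and independent of the environment, and the conditional law factorizes. Everything else is elementary.
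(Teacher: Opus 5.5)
Your proposal is correct and follows essentially the paper's proof. The paper likewise takes two separated tail sets $\{Z\le z_1\}$ and $\{Z\ge z_2\}$, notes that an interval of length $2d$ cannot meet both, and conditions on the environment and all other summands so that the remainder is a fixed shift before integrating. Your choice $d=(z_2-z_1)/3$ in place of the paper's $(z_2-z_1)/4$ is immaterial.
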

\begin{proof}
Choose $z_1<z_2$ for which both $\Prob(Z\le z_1)$ and $\Prob(Z\ge z_2)$ are positive. Take $d=(z_2-z_1)/4$ and let $q_0$ be the smaller probability. An interval of length $2d$ cannot meet both separated sets. Hence at least one of these sets is disjoint from $\{z:|v+cz|\le d|c|\}$, proving the first claim. For the series, condition on the environment and all errors except $Z_m$; the remainder is a fixed shift. Integrating the conditional bound proves both assertions, also for infinite expectations. Almost sure convergence of the series makes this conditioning legitimate on a set of environments of full measure.
\end{proof}

\begin{proof}[Proof of Theorem~\ref{up:thm:refresh}]
Almost surely there is a most recent refresh at or before any fixed time; its location is a measurable function of the iid past. Setting $X_i$ equal to its Gaussian mark constructs the stationary causal solution. In $n$ future steps, no refresh has probability $\rho^n$; otherwise the latest Gaussian mark is independent of the starting state. This proves the kernel formula and, since the Gaussian law is nonatomic, the total-variation equality.

Replacing the innovation at time zero can affect $X_k$ only if there is no refresh at times $1,\ldots,k$. This event has probability $\rho^k$ and is independent of the time-zero coupled difference. Both coupled marginals are standard Gaussian. The $L^q$ triangle inequality gives the displayed dependence bounds. In particular, $H=1$ and all design moments exist. The score is a square-integrable martingale difference for the joint design/error innovation filtration, so all base theorems apply.

For every fixed $0<a<2/3$, Jensen's inequality gives
\[
\E\log|1-aG_0^2|\le\tfrac12\log\E(1-aG_0^2)^2
=\tfrac12\log(1-2a+3a^2)<0.
\]
The logarithm is integrable (its finitely many real zeros have integrable logarithmic singularities under a Gaussian density). Ergodicity makes the scalar product decay exponentially almost surely, and the score has a finite logarithmic moment. The stationary-series construction and coupling argument therefore apply throughout this interval, for any fixed $\rho<1$. This is a sufficient interval, not the maximal stability threshold.

Fix such an $a$ and any $r>0$. Isolate the coefficient of $Z_{-m}$:
\[
 c_{a,m}=aP_m(a)X_{-m}.
\]
On the event
\[
 A_{m,u}=\{|X_{-m}|\ge u,\ B_{-m+1}=\cdots=B_0=1\},
\]
all of the $m$ subsequent designs equal $X_{-m}$. For $au^2>2$,
\[
 |c_{a,m}|\ge au(au^2-1)^m,\qquad
 \Prob(A_{m,u})=\Prob(|G_0|\ge u)\rho^m.
\]
Lemma~\ref{up:lem:antic} therefore implies
\[
 \E|e_0^{(a)}|^r\ge
 q_0d^r(au)^r\Prob(|G_0|\ge u)
 \{\rho(au^2-1)^r\}^{m}.
\]
Choose a finite $u$ large enough that the quantity in braces exceeds one, then let $m\to\infty$. The left side is independent of $m$, so it is infinite. This proof works for each $r>0$ without assuming any stationary moments. The Gaussian-limit claims are the preceding theorems.

When $\rho=0$, the current design/error pair is independent of the previous iterate. Now
\[
 \E(1-aX_i^2)^2=1-2a+3a^2<1\quad(0<a<2/3).
\]
The independent affine recursion has a square-integrable stationary solution. Its second-moment recursion is $M=(1-2a+3a^2)M+a^2\sigma^2$, which gives~\eqref{up:eq:iid-var}.
\end{proof}

\begin{remark}[Two particularly transparent error choices]
With independent Gaussian errors, each time point $(X_i,Z_i,Y_i)$ is jointly Gaussian (possibly singular as a triple), although the full time series is \emph{not} a jointly Gaussian process. With independent Rademacher errors, every observation is sub-Gaussian, the errors are bounded, and $X_iZ_i$ has an exactly Gaussian marginal distribution. The all-moment failure persists in both cases. Only the Gaussian-error case is assigned total-variation convergence here; bounded errors by themselves do not justify that metric.
\end{remark}

\begin{corollary}[A subpolynomial lower tail in the Gaussian-refresh example]
For the model in Theorem~\ref{up:thm:refresh}, there are positive constants depending on $a,\rho$ and the error distribution such that, for all sufficiently large $x$,
\[
 \Prob(|e_0^{(a)}|>x)\ge c\exp\left\{-C\frac{\log x}{\log\log x}\right\}.
\]
This is a lower bound, not a matching asymptotic.
\end{corollary}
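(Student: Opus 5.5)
The plan is to reuse the one-coefficient lower bound \eqref{up:eq:anti-tail} of Lemma~\ref{up:lem:antic}, but now to optimize the run length $m$ and the threshold $u$ jointly with $x$, instead of fixing $u$ and sending $m\to\infty$. In the proof of Theorem~\ref{up:thm:refresh}, the coefficient of $Z_{-m}$ is $c_{a,m}=aP_m(a)X_{-m}$. On $A_{m,u}=\{|X_{-m}|\ge u,\ B_{-m+1}=\cdots=B_0=1\}$ with $au^2>2$, we have $|c_{a,m}|\ge au(au^2-1)^m$ and $\Prob(A_{m,u})=\Prob(|G_0|\ge u)\rho^m$. Lemma~\ref{up:lem:antic} then gives
\[
\Prob(|e_0^{(a)}|>x)\ge q_0\,\Prob(|G_0|\ge u)\,\rho^m \quad\text{whenever } d\,au(au^2-1)^m\ge x .
\]
The lemma requires the error coefficients to be fixed given the design; this holds because the errors are independent of the entire design.

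The key step is the choice of parameters. Take $u^2\approx\kappa\log\log x$, so that the Gaussian tail gives $\Prob(|G_0|\ge u)\ge c\,(\log x)^{-\kappa/2}\,(\log\log x)^{-1/2}$ up to constants. Then $\log(au^2-1)\ge\tfrac12\log\log\log x$ for large $x$. It therefore suffices to take
\[
m=\Bigl\lceil \frac{\log(x/(dau))}{\log(au^2-1)}\Bigr\rceil\le C\,\frac{\log x}{\log\log\log x},
\]
and the factor $\rho^m$ is then of the form $\exp\{-C\log x/\log\log\log x\}$.

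This choice only yields $\log\log\log x$ in the denominator, not the claimed $\log\log x$. To get $\log\log x$, I would instead let $u$ grow polynomially in $\log x$: take $u^2=(\log x)^{\theta}$ with small $\theta>0$. Then $\log(au^2-1)\approx\theta\log\log x$, so $m\approx\log x/(\theta\log\log x)$. The Gaussian factor becomes $\exp\{-(\log x)^\theta/2\}$, and since $\theta<1$ this is negligible against $\exp\{-C\log x/\log\log x\}$. Choosing, say, $\theta=1/2$ gives the total bound
\[
c\exp\Bigl\{-\frac{\lambda\log x}{\theta\log\log x}(1+o(1))-\tfrac12(\log x)^{\theta}\Bigr\}\ge c\exp\Bigl\{-C\frac{\log x}{\log\log x}\Bigr\},
\]
where $\lambda=\log(1/\rho)$.

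The main obstacle is this balancing of the Gaussian tail cost of a large $u$ against the geometric run-length cost $\rho^m$. The remaining checks are routine:
\begin{itemize}
\item The condition $au^2>2$ holds for large $x$.
\item Rounding $m$ up to an integer only changes constants.
\item Lemma~\ref{up:lem:antic} applies to arbitrary nondegenerate error laws.
\item Absorbing the $O(\log x)$ terms and the constants $q_0,d,a$ into $c$ and $C$ is valid for all sufficiently large $x$.
\end{itemize}
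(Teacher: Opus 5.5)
Your proposal is correct and follows the paper's proof. Both arguments apply the one-coefficient tail bound \eqref{up:eq:anti-tail} on the event $A_{m,u}$, with $u$ growing in $x$ and $m$ the smallest run length such that $d\,au(au^2-1)^m\ge x$. The only difference is the threshold: the paper takes $u=\sqrt{\log x}/\log\log x$, while you take $u=(\log x)^{1/4}$. Either choice gives $\log(au^2-1)\asymp\log\log x$ and a Gaussian cost $u^2/2+O(\log u)=o(\log x/\log\log x)$, and your preliminary attempt with $u^2\approx\kappa\log\log x$ can simply be dropped.
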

\begin{proof}
Write $y=\log x$ and take $u=\sqrt y/\log y$. For large $y$, $b=au^2-1>1$ and $\log b\ge\tfrac12\log y$. Choose
\[
 m=\left\lceil\frac{\log(x/(adu))}{\log b}\right\rceil.
\]
For all sufficiently large $y$, $m\le3y/\log y$ and $dau b^m\ge x$. The event in the preceding proof and~\eqref{up:eq:anti-tail} give $\Prob(|e|>x)\ge q_0\rho^m\Prob(|G|\ge u)$. The elementary Gaussian lower tail bound $\Prob(|G|\ge u)\ge c\,u(1+u^2)^{-1}e^{-u^2/2}$ shows that its negative logarithm is at most $C y/\log y$. This proves the claim.
\end{proof}

\section{First-order conditional covariance under a fourth design moment}

\begin{proof}[Proof of Theorem~\ref{up:thm:first}]
Let $h_j=\norm{H_{-j}}$. On a probability-one event, the preceding finite-window lemma yields uniform convergence of $P_j(a)$ to $e^{-ajH}$ for $j\le T/a$, and $a\sum_{j<T/a}h_j^2=O(1)$ for fixed $T$. Hence
\[
 a\sum_{j<T/a}\{P_jH_{-j}^2P_j^\top-e^{-ajH}H_{-j}^2e^{-ajH}\}\longrightarrow0.
\]
For each matrix entry, the ergodic theorem and summation by parts against the continuously differentiable deterministic matrix weights give
\[
 a\sum_{j<T/a}e^{-ajH}H_{-j}^2e^{-ajH}
 \longrightarrow\int_0^T e^{-tH}\E(H_0^2)e^{-tH}\,dt.
\]
To justify the weighted ergodic step explicitly, centered partial sums of $H_{-j}^2$ are $o(n)$; multiplying by $a$ makes their maximum up to $T/a$ vanish. Abel summation then bounds the weighted error by this maximum times the finite total variation and endpoint norms of the deterministic weights.

Let $D_2=\sup_{n\ge1}n^{-1}\sum_{j<n}h_j^2<\infty$ almost surely. The tail after $N=\lceil T/a\rceil$ is bounded using~\eqref{eq:uniformpath} and dyadic intervals:
\[
 a\sum_{j\ge N}\norm{P_jH_{-j}^2P_j^\top}
 \le CK^2D_2(1+T)e^{-c'T}.
\]
Indeed, on $2^\ell N\le j<2^{\ell+1}N$, sum $h_j^2$ first to get a bound $2^{\ell+1}ND_2$, and use the exponential weight at the left endpoint. This is uniform over $0<a\le a_0\wedge1$. Letting $T\to\infty$ proves~\eqref{up:eq:R-first}. Differentiation under the deterministic integral gives the Lyapunov identity.

The conditional Gaussian covariance formula and local Lipschitz continuity in total variation of nonsingular Gaussian laws with respect to their covariance matrices prove the final statements. The Lipschitz assertion follows directly from the Gaussian relative-entropy formula and Pinsker's inequality in a neighborhood of the identity.
\end{proof}

\section{Integrable logarithmic overshoots and an unconditional sharp rate}\label{app:rate}
Define $\delta_{H,q}(k)=\norm{H_k-H_k^{\{0\}}}_q$, using operator norm inside $L^q$. We use condition~\eqref{up:eq:rate-ass}.
A directly checkable sufficient condition is $X_0\in L^{2q}$ and $\sum_k\delta_{X,2q}(k)<\infty$, because
\[
 \delta_{H,q}(k)\le2\norm{X_0}_{2q}\delta_{X,2q}(k).
\]
The qualitative results retain their weaker conditions.

\begin{lemma}[A step-sensitive prefactor with integrable logarithm]\label{up:lem:overshoot}
Under~\eqref{up:eq:rate-ass} and Assumption~\ref{ass:D}, there exist deterministic $a_0,c>0$ and $L_*\ge0$ with $\E L_*<\infty$ such that
\begin{equation}\label{up:eq:sensitive}
 \norm{P_j(a)}\le e^{aL_*}e^{-caj}
 \quad\text{for every }j\ge0,\ 0<a\le a_0.
\end{equation}
Only integrability of $L_*$ is needed in the probability-law bounds below.
\end{lemma}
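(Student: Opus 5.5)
We revisit the proof of Theorem~\ref{thm:stability}, keeping its deterministic inputs and showing that the random prefactor has an integrable logarithm once~\eqref{up:eq:rate-ass} holds. Use the same block length $m$, the same $a_0$, and the same block variables $z_r=-g_r+2s_r\min(a_0s_r,1)$ with mean $-\eta<0$. Lemma~\ref{lem:block} and submultiplicativity gave, for $j=Nm+r$ with $0\le r<m$ and $0<a\le a_0$,
\[
 \log\norm{P_j(a)}\le a\Bigl(\sum_{r'=1}^Nz_{r'}+s_{N+1}\Bigr).
\]
So it suffices to set $L_*=L+\eta/2$, where
\[
 L=\sup_{N\ge0}\Bigl\{\sum_{r=1}^Nz_r+s_{N+1}+\tfrac\eta2N\Bigr\},
\]
and to prove $\E L<\infty$. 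Then \eqref{up:eq:sensitive} holds with $c=\eta/(2m)$. Because $L$ does not depend on $a$, the bound is simultaneous in $a\le a_0$. Split $L\le L_1+L_2$, where $L_1=\sup_N\{\sum_{r\le N}(z_r+\tfrac{3\eta}{4})\}$ and $L_2=\sup_N\{s_{N+1}-\tfrac\eta4N\}$.

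\emph{The overshoot term $L_2$.} Since $s_{N+1}$ is a sum of $m$ operator norms, $s_{N+1}\in L^q$ with a law independent of $N$. For $t>0$, a union bound and Markov's inequality give
\[
 \Prob(L_2>t)\le\sum_{N\ge0}\Prob\bigl(s_1>t+\tfrac\eta4N\bigr)\le\sum_{N\ge0}\frac{\E s_1^q}{(t+\eta N/4)^q}\le C(1+t)^{1-q}.
\]
This is integrable in $t$ because $q>2$, so $\E L_2^+<\infty$.

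\emph{The drifted random walk $L_1$.} Write $\bar z_r=z_r-\E z_r$, so that $z_r+\tfrac{3\eta}{4}=\bar z_r-\tfrac\eta4$. First, $\bar z_r\in L^q$, since $|z_r|\le3s_r$. Second, the block sequence $(z_r)$ is causal in the grouped innovations $(\varepsilon_{-rm+1},\ldots,\varepsilon_{-(r-1)m})$. It is also Lipschitz in the block Gram matrices: $\lambda_{\min}$ of a sum is $1$-Lipschitz in operator norm, and $s\mapsto s\min(a_0s,1)$ has derivative at most $2$. Hence its functional dependence coefficient in $L^q$ is bounded by $C\sum$ of $\delta_{H,q}$ over the indices in the block, and these coefficients are summable by~\eqref{up:eq:rate-ass}. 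The projection bound $\norm{\Proj_0\bar z_k}_q\le\delta_{z,q}(k)$ from Appendix~\ref{app:score} then applies. Combined with Burkholder's and Doob's inequalities in $L^q$ through the martingale decomposition used in Lemma~\ref{lem:score}, it yields
\[
 \Bigl\|\max_{N\le n}\Bigl|\sum_{r\le N}\bar z_r\Bigr|\Bigr\|_q\le C\sqrt n .
\]
Now decompose dyadically over $N\in[2^k,2^{k+1})$. On this range the drift contributes at most $-\eta 2^k/4$, so by Markov's inequality
\[
 \Prob(L_1>t)\le\Prob\bigl(|\bar z_1|>t\bigr)+\sum_{k\ge0}\frac{C\,2^{(k+1)q/2}}{(t+\eta2^k/4)^q}.
\]
Integrating over $t>0$ gives at most $\E|\bar z_1|+C\sum_k2^{kq/2}2^{k(1-q)}$. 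The last sum is finite exactly because $q>2$, so $\E L_1^+<\infty$.

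The expected main obstacle is this $L^q$ maximal inequality for the dependent block sums, specifically verifying that the nonlinear block map preserves summable functional dependence. The Lipschitz remark above is what I would check carefully first, including the block regrouping of the causal representation. The only other caveat is that $\eta$ and $a_0$ must be chosen before $L$; this holds because they are the deterministic quantities already fixed in Theorem~\ref{thm:stability}.
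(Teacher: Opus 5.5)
Your proposal is correct and follows the paper's proof essentially step for step. It uses the same $L$ and the same split into the drifted block walk $\sup_N(W_N-\eta N/4)$ and the overshoot $\sup_N(s_{N+1}-\eta N/4)$, the same Lipschitz transfer from $\delta_{H,q}$ to a block $L^q$ maximal inequality via projections, Burkholder and Doob, and the same dyadic summation that needs $q>2$. The only differences are cosmetic: you bound the overshoot term with a $q$th-moment union bound instead of the paper's $\sum_n\E(s_1-dn)_+$ estimate. Also, since $z_r=z^+_{1-r}$ is indexed backward in time, the maximal inequality must be transferred by a stationary window reversal, which the paper states explicitly and your phrase ``causal in the grouped innovations'' glosses over.
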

\begin{proof}
Follow the preceding block proof, choosing a fixed block length $m$ and a small $a_0\le1$. For backward block $r$, let $s_r$ be the sum of the Gram norms and $g_r$ the smallest eigenvalue of the sum of the Grams. Put
\[
 z_r=-g_r+2s_r\min(a_0s_r,1),\qquad \E z_r=-\eta<0.
\]
The existing deterministic product inequality bounds each block log norm by $az_r$ simultaneously in $a\le a_0$. With $W_n=\sum_{r=1}^n(z_r+\eta)$ and $W_0=0$, define
\[
 L=\sup_{n\ge0}\{W_n-\eta n/2+s_{n+1}\}.
\]
The block bound gives
\[
 \log\norm{P_j(a)}\le a(L+\eta/2)-\frac{a\eta}{2m}j.
\]
It remains to prove $\E L<\infty$, rather than freezing $e^{aL}$ at $a=a_0$.

First, $s_r,z_r\in L^q$. The function $s\mapsto 2s\min(a_0s,1)$ is globally Lipschitz with constant at most four. The minimum eigenvalue is Lipschitz in operator norm. Thus the block functional defining $z_r$ is Lipschitz, with constant at most five relative to the sum of the changes in its $m$ Gram matrices. Introduce grouped iid innovations and their projections,
\[
E_k=(\varepsilon_{(k-1)m+1},\ldots,\varepsilon_{km}),\quad
\F_k^{(m)}=\sigma(E_i:i\le k),\quad
\Proj_k^{(m)}U=\E(U\mid\F_k^{(m)})-\E(U\mid\F_{k-1}^{(m)}).
\]
Let $s_k^+=\sum_{i=(k-1)m+1}^{km}\norm{H_i}$ and define
$z_k^+=-\lambda_{\min}(\sum_{i=(k-1)m+1}^{km}H_i)+2s_k^+\min(a_0s_k^+,1)$.
This forward-indexed functional is causal for $(E_k)$, and the backward block satisfies $z_r=z_{1-r}^+$. Define $\delta_{z^+,q}^{\mathrm{block}}(k)$ by replacing $E_0$ with an independent copy. Replacing its $m$ innovations one at a time gives
\[
 \sum_{k\ge0}\delta_{z^+,q}^{\mathrm{block}}(k)
 \le 5m\sum_{u\ge0}\delta_{H,q}(u)<\infty.
\]
Here a coefficient at a negative lag is interpreted as zero. For a fixed replaced time $\ell\in\{1-m,\ldots,0\}$, the intervals $\{(k-1)m+1,\ldots,km\}$, $k\ge0$, partition $\{1-m,2-m,\ldots\}$. Thus each lag $u=i-\ell\ge0$ occurs exactly once in the sum over future blocks and their time points, giving the factor $m$.

For the maximal bound, let $\theta_k=\norm{\Proj_0^{(m)}z_k^+}_q$. Conditional Jensen's inequality for the coupled block gives $\theta_k\le\delta_{z^+,q}^{\mathrm{block}}(k)$. The trivial remote-past tail of the iid block innovations and reverse martingale convergence yield the $L^q$ expansion
\[
z_i^+-\E z_0^+=\sum_{k\ge0}\Proj_{i-k}^{(m)}z_i^+.
\]
For fixed $k$, the summands indexed by $i$ form a martingale-difference sequence with $L^q$ norm $\theta_k$. The martingale inequality $\norm{\sum_i D_i}_q\le C_q(\sum_i\norm{D_i}_q^2)^{1/2}$ and Doob's inequality, followed by Minkowski's inequality over $k$, give
\[
\left\|\max_{j\le n}\left|\sum_{i=1}^j(z_i^+-\E z_0^+)\right|\right\|_q
\le C_q\sqrt n\sum_{k\ge0}\theta_k\le C\sqrt n.
\]
The summable projection norms justify passage from finite projection sums to the full expansion. Reversing a stationary window bounds the backward maximum by twice the corresponding forward maximum. Therefore
\begin{equation}\label{up:eq:Wmax}
 \norm{\max_{j\le n}|W_j|}_q\le C\sqrt n.
\end{equation}

Take $d=\eta/4$. We have
\[
 L\le \sup_{n\ge0}(W_n-dn)_+
       +\sup_{n\ge0}(s_{n+1}-dn)_+.
\]
For $N=2^k$,~\eqref{up:eq:Wmax} and the tail integral inequality yield
\[
 \E\big(\max_{j\le2N}|W_j|-dN\big)_+
 \le \frac{\E\max_{j\le2N}|W_j|^q}{(q-1)(dN)^{q-1}}
 \le C N^{1-q/2}.
\]
Summing over $k\ge0$ proves integrability of the first supremum since $q>2$. For the second,
\[
 \E\sup_{n\ge0}(s_{n+1}-dn)_+
 \le\sum_{n\ge0}\E(s_1-dn)_+\le C_d(\E s_1+\E s_1^2)<\infty.
\]
Thus $\E L<\infty$. Set $L_*=L+\eta/2$ and $c=\eta/(2m)$. Since the $n=0$ term in $L$ is $s_1\ge0$, $L_*\ge0$.
\end{proof}

The condition $q>2$ makes $\sum_k2^{k(1-q/2)}$ finite and is sufficient for this unconditional integrability step. The fourth design moment in Theorem~\ref{up:thm:first} instead controls the pathwise covariance expansion; at $q=2$, the displayed maximal bound alone does not sum over dyadic blocks.

\begin{proof}[Proof of Theorem~\ref{up:thm:rate}]
Let $E_a=\{aL_*\le1\}$ and $h_j=\norm{H_{-j}}$. By Lemma~\ref{up:lem:overshoot}, on $E_a$,
\[
 \norm{\mathcal R_a}\le e^2a^2\sum_{j\ge0}e^{-2caj}h_j^2.
\]
The exact conditional Gaussian law, Pinsker's inequality, and convexity of total variation under mixtures give
\begin{align*}
 d_{\TV}(\mu_a,\gamma_0)
 &\le \Prob(L_*>1/a)+C_pa^2\sum_{j\ge0}e^{-2caj}\E h_0^2\\
 &\le a\E L_*+C_p\frac{a^2\E h_0^2}{1-e^{-2ca}}\le Ca.
\end{align*}
This step integrates neither $e^{aL_*}$ nor the stationary covariance.

For the refined assertion, define $U_a=a\sum_{j\ge0}e^{-2caj}h_j^2$. This family is uniformly integrable. One direct proof chooses an increasing convex $\Psi$ with $\Psi(t)/t\to\infty$ and $\E\Psi(h_0^2)<\infty$. The sums of the deterministic weights are uniformly bounded. Normalize them and apply Jensen's inequality to bound $\E\Psi(U_a/C)$ uniformly. Such a $\Psi$ exists for any integrable nonnegative variable, by a piecewise-linear construction from its integrable tails.

Write $\gamma(R)=N_p(0,\sigma^2(I+R)/2)$. By Theorem~\ref{up:thm:first},
\[
 a^{-1}d_{\TV}(\gamma(\mathcal R_a),\gamma(aJ))\longrightarrow0
 \quad\text{almost surely}.
\]
On $E_a$, the triangle inequality via $\gamma(0)$ and the Gaussian Pinsker bound dominate this quantity by
\[
 C_p\big(\norm{\mathcal R_a}/a+\norm{J}\big)\ind_{E_a}
 \le C_p(e^2U_a+\norm{J}),
\]
which is uniformly integrable. Its expectation on $E_a$ therefore tends to zero. On $E_a^c$ use total variation at most one and
\[
 a^{-1}\Prob(L_*>1/a)\longrightarrow0,
\]
which follows from $\E L_*<\infty$. Mixture convexity proves~\eqref{up:eq:TVcorrected}.

It remains to evaluate a deterministic Gaussian comparison. After the common rescaling by $\sqrt2/\sigma$, differentiating the density of $N_p(0,I+aJ)$ at zero gives
\[
 \left.\frac{d}{da}f_a(x)\right|_{a=0}
   =\tfrac12(x^\top Jx-\tr J)\phi_p(x).
\]
This differentiation holds in $L^1$: for $a$ in a small interval, the derivatives are dominated by a polynomial times a Gaussian density with a fixed enlarged covariance. Therefore $d_{\TV}(\gamma_a,\gamma_0)/a$ tends to the right side of~\eqref{up:eq:TVconstant}. The reverse triangle inequality and~\eqref{up:eq:TVcorrected} transfer this limit to $\mu_a$. Finally, $\E H_0^2\succ0$: if $v^\top\E H_0^2v=0$ then $H_0v=0$ almost surely and hence $Hv=0$, contradicting $H\succ0$. Thus $J\succ0$ and the random quadratic form is not constant.
\end{proof}

\begin{corollary}[Quantitative coexistence in both all-moment examples]\label{up:cor:coexist}
The Gaussian-refresh construction of Theorem~\ref{up:thm:refresh}, with Gaussian errors, satisfies~\eqref{up:eq:TVrate}--\eqref{up:eq:TVconstant} and has no positive stationary moments. In this case $H=1$, $\E H_0^2=3$, and $J=3/2$, independently of $\rho$. In particular,
\[
 d_{\TV}\!\left(\Law(e_0^{(a)}/\sqrt a),N(0,\sigma^2(1+3a/2)/2)\right)=o(a),
\]
while the uncorrected distance is asymptotic to $\tfrac32\phi(1)a$.
The positive AR(1) construction with
$\Prob(\eta>u)=\exp\{-(\log u)^\nu\}$, $1<\nu<2$, also satisfies the assumptions of Theorem~\ref{up:thm:rate} for any finite $q>2$. Its qualitative no-moment conclusion therefore coexists with the same order-$a$ unconditional total-variation rate.
\end{corollary}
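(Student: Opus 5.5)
The plan is to verify the hypotheses of Theorem~\ref{up:thm:rate} for each construction, compute $J$, and then read off the conclusions. The no-moment assertions are already available: Theorem~\ref{up:thm:refresh} for the refresh design and Theorem~\ref{thm:nomoments}(ii) for the log-Weibull AR(1).

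\emph{Refresh design.} The marginal is $X_0\sim N(0,1)$, so Assumption~\ref{ass:D} holds with $H=\E G_0^2=1\succ0$. Gaussian marginals give $H_0=X_0^2\in L^q$ for every finite $q$. Theorem~\ref{up:thm:refresh} supplies $\delta_{H,q}(k)\le2\|G_0^2\|_q\rho^{k/q}$, which is summable in $k$ for fixed $\rho<1$. Hence~\eqref{up:eq:rate-ass} holds for, say, $q=3$. The Gaussian-error hypotheses of Theorem~\ref{thm:gaussian} are assumed directly. In the scalar case the Lyapunov equation~\eqref{eq:Jmain} reads $2HJ=\E H_0^2=\E G_0^4=3$, so $J=3/2$, independently of $\rho$.

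Theorem~\ref{up:thm:rate} then gives~\eqref{up:eq:TVrate}--\eqref{up:eq:TVcorrected}, and in particular the $o(a)$ distance to $N(0,\sigma^2(1+3a/2)/2)$. For the constant in~\eqref{up:eq:TVconstant} with $p=1$, I compute
\[
\tfrac14\E|JG^2-J|=\tfrac38\E|G^2-1|.
\]
Integrating by regions gives $\E|G^2-1|=2\E(1-G^2)_+$. Since $\int_{-1}^{1}(1-x^2)\phi(x)\,dx=2\Phi(1)-1-\bigl[(2\Phi(1)-1)-2\phi(1)\bigr]=2\phi(1)$, the constant is $\tfrac38\cdot4\phi(1)=\tfrac32\phi(1)$. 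This yields the asymptotic $\tfrac32\phi(1)a$.

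\emph{Log-Weibull AR(1).} Since $\E\eta_0^q<\infty$ for every $q$, we have $X_0\in L^{2q}$ for every $q$. Equation~\eqref{eq:ardelta} gives $\delta_{X,2q}(k)=\rho^k\norm{\eta_0-\eta_0'}_{2q}$, which is summable. The sufficient condition $\delta_{H,q}(k)\le2\norm{X_0}_{2q}\delta_{X,2q}(k)$ stated after~\eqref{up:eq:rate-ass} then verifies~\eqref{up:eq:rate-ass} for any finite $q>2$. Assumption~\ref{ass:D} holds because $X_0\ge(1-\rho)^{-1}>0$, so $H\ge(1-\rho)^{-2}>0$. The errors are independent $N(0,1)$, so Theorem~\ref{up:thm:rate} applies and yields the order-$a$ rate. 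Theorem~\ref{thm:nomoments}(ii) supplies the absence of all positive stationary moments for small fixed $a$.

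No step is a real obstacle. The only points needing care are the elementary Gaussian integral for the constant and checking that the summability of $\rho^{k/q}$ is uniform enough for a fixed $q>2$. Both hold trivially because $\rho<1$ is fixed.
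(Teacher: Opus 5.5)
Your proposal is correct and follows the paper's proof. You verify \eqref{up:eq:rate-ass} for the refresh design from its displayed $L^q$ dependence bound and Gaussian moments. For the log-Weibull AR(1) you use $\delta_{X,2q}(k)=\rho^k\norm{\eta_0-\eta_0'}_{2q}$ together with the sufficient condition preceding Lemma~\ref{up:lem:overshoot}, and then apply Theorem~\ref{up:thm:rate}; your explicit computation $\E|G^2-1|=4\phi(1)$ is the same identity the paper uses to obtain the constant $\tfrac32\phi(1)$.
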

\begin{proof}
For the refresh model use its displayed $L^q$ dependence bound and Gaussian moments. For the positive AR(1) model, all innovation/design moments are finite and $\delta_{X,2q}(k)=\rho^k\norm{\eta_0-\eta_0'}_{2q}$; apply the sufficient condition preceding Lemma~\ref{up:lem:overshoot}. The scalar constant follows from $\E|G^2-1|=4\phi(1)$.
\end{proof}

\begin{remark}[Localization of the probability-law correction]
Even in these examples, $\E \mathcal R_a$ can be infinite for every fixed $a>0$. Localization on $\{aL_*\le1\}$ and a bounded probability distance make the expansion possible. The deterministic matrix $J$ corrects the Gaussian approximation to the law, while the conditional covariance retains the rare-environment fluctuations responsible for infinite moments.
\end{remark}

\subsection{First-order coverage from the probability-law expansion}\label{app:coverage-first}
\begin{proof}[Proof of Corollary~\ref{cor:coverage-first}]
Fix the scalar design law and $\alpha\in(0,1)$. Theorem~\ref{up:thm:rate} gives $d_{\TV}(\mu_a,\gamma_a)=o(a)$, where $\gamma_a=N(0,\sigma^2(1+aJ)/2)$. For $A=[-\sigma z/\sqrt2,\sigma z/\sqrt2]$,
\[
\mu_a(A)=\gamma_a(A)+o(a)
=2\Phi\!\left(\frac{z}{\sqrt{1+aJ}}\right)-1+o(a)
=1-\alpha-aJz\phi(z)+o(a).
\]
The last step is a Taylor expansion at $a=0$, with $z=z_{1-\alpha/2}$ and $J$ fixed. It proves~\eqref{eq:coverage-first}. For $A_a=\sqrt{1+aJ}\,A$, the Gaussian probability is exactly $\gamma_a(A_a)=1-\alpha$. The same TV bound applies uniformly to measurable sets, including $A_a$, proving~\eqref{eq:coverage-corrected}. Both arguments integrate bounded indicators and require no stationary iterate moment.

Under Corollary~\ref{cor:startup-tv}, $d_{\TV}(\mu_{a,n},\gamma_a)=o(a)$ under its stated history conditions. Replacing $\mu_a$ by $\mu_{a,n}$ therefore proves the finite-history versions. Mere consistency of an estimated noise variance gives an $o(1)$ coverage conclusion; retaining the first-order expansion requires a separate $o(a)$ control of its effect on coverage.
\end{proof}

\subsection{Transfer of the first-order expansion to finite histories}\label{app:finite-tv}
\begin{proof}[Proof of Corollary~\ref{cor:startup-tv}]
Put $A_i=I-aH_i$ and $P_{n,j}=A_n\cdots A_{n-j+1}$, with $P_{n,0}=I$ and $T_n=P_{n,n}$. Shifting Lemma~\ref{up:lem:overshoot} to endpoint $n$ gives
\[
\norm{P_{n,j}}\le e^{aL_{*,n}}e^{-caj},\qquad
L_{*,n}\stackrel{d}=L_*,\qquad \E L_*<\infty.
\]
The constants are independent of $n$; intersecting the shifted probability-one events makes the bound simultaneous in integer endpoints and $0<a\le a_0$.

Conditional on $\G_X$, the initialized normalized error is Gaussian with mean $m_n=a^{-1/2}T_nb$ and covariance
\[
S_n=\sigma^2a\sum_{j=0}^{n-1}P_{n,j}H_{n-j}P_{n,j}\trans.
\]
The stationary conditional covariance $S_\infty$ at the same endpoint extends this sum over all $j\ge0$. Both are finite almost surely. Telescoping the identity
$A_iA_i\trans=I-2aH_i+a^2H_i^2$ gives
\begin{equation}\label{eq:finite-covariance}
S_n=\frac{\sigma^2}{2}(I-T_nT_n\trans+\mathcal R_{a,n}),\qquad
\mathcal R_{a,n}=a^2\sum_{j=0}^{n-1}P_{n,j}H_{n-j}^2P_{n,j}\trans\succeq0.
\end{equation}
On $E_{a,n}=\{aL_{*,n}\le1\}$, choose $t_0$ so that $e e^{-ct_0}\le1/\sqrt2$. For $an\ge t_0$,
\[
S_\infty\succeq S_n\succeq\sigma^2I/4,\qquad
\norm{m_n}\le e\norm b\,a^{-1/2}e^{-can},
\]
and
\[
\ind_{E_{a,n}}\norm{S_\infty-S_n}
\le\sigma^2e^2a\sum_{j\ge n}e^{-2caj}\norm{H_{n-j}}.
\]
Taking expectations of the last bound uses only stationarity and $\E\norm{H_0}<\infty$, and gives $Ce^{-2can}$. The event indicator has already been bounded pathwise; no independence of $L_{*,n}$ and the design is used.

For covariance matrices $S_1,S_2\succeq\nu I$, Gaussian comparison gives
\[
d_{\TV}\{N_p(m,S_1),N_p(0,S_2)\}
\le C_{p,\nu}(\norm m+\norm{S_1-S_2}).
\]
Indeed, Pinsker's inequality controls the mean shift. Along the covariance segment $S_t=S_1+t(S_2-S_1)$, density differentiation and integration give an $L^1$ derivative bounded by $p\norm{S_2-S_1}/\nu$, since
$\E|G\trans D G-\tr D|\le2p\norm D$ for $G\sim N_p(0,I)$ and symmetric $D$. The covariance lower bound is the only spectral bound needed.

Apply this comparison on $E_{a,n}$, and use TV at most one on its complement. Convexity under mixtures gives \eqref{eq:startup-tv}. Finally,
\[
\frac{\Prob(L_*>1/a)}a
\le\E[L_*\ind_{\{L_*>1/a\}}]\longrightarrow0.
\]
The stated burn-in conditions make every term in \eqref{eq:startup-tv} $o(a)$. The triangle inequality transfers the corrected $o(a)$ result, and the reverse triangle inequality transfers the uncorrected first-order constant. The asserted history schedule has $an(a)\ge2\log^2(1/a)$ and therefore meets both sufficient conditions for every fixed $c>0$.
\end{proof}

\section{Quenched weak limits and non-Gaussian extensions}

\begin{proof}[Proof of Theorem~\ref{up:thm:quenched}]
For a fixed design path put $h_j=\norm{H_{-j}}$ and $w_{a,j}=\sqrt a\,P_j(a)X_{-j}$. Identity~\eqref{eq:gaussian-weights} gives
\[
 \sum_j w_{a,j}w_{a,j}^\top\to I/2.
\]
We also have $\max_j\norm{w_{a,j}}\to0$. Indeed, by stationarity and integrability, Borel--Cantelli gives $h_j/(j+1)\to0$ almost surely. For any $\varepsilon>0$, split the bound
\[
 \sup_j\norm{w_{a,j}}^2\le K^2 a\sup_j e^{-2caj}h_j
\]
at a finite random $J_\varepsilon$ after which $h_j\le\varepsilon(j+1)$. The initial part tends to zero; the remaining part has limsup at most $C K^2\varepsilon$. Let $\varepsilon\to0$.

Fix $v\in\R^p$. Conditional on the design, the summands $v^\top w_{a,j}Z_{-j}$ are independent and centered. Let $b_a=\sup_j|v^\top w_{a,j}|\to0$. For any $\varepsilon>0$, their Lindeberg sum is at most
\[
 \left(\sum_j|v^\top w_{a,j}|^2\right)
 \E\{Z_0^2\ind_{\{|Z_0|>\varepsilon/b_a\}}\}\longrightarrow0.
\]
Their total variance converges to $\sigma^2\norm{v}^2/2$. The independent triangular-array CLT applies. For the countably infinite row, truncate each row so that its remaining conditional variance is arbitrarily small, then use the finite-row theorem and Chebyshev's inequality. Cramer--Wold proves the result. Applying this argument to an arbitrary sequence $a\downarrow0$ on the same probability-one design event gives the stated limit.
\end{proof}

\begin{corollary}[Non-Gaussian errors in the AR(1) examples]
Replace the Gaussian errors in the preceding two positive AR(1) examples by any nondegenerate centered iid errors of finite variance, independent of the design. Every positive stationary moment still fails, the same lower-tail exponents $1/2$ and $\nu/2$ still hold (with different constants), and the normalized stationary law has a quenched weak Gaussian limit. In particular one may take bounded Rademacher errors.
\end{corollary}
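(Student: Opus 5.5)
The plan is to reuse the two mechanisms already established and check that each goes through once the Gaussian errors are replaced by arbitrary nondegenerate centered iid errors of finite variance, independent of the design. The design is unchanged, so Assumption~\ref{ass:D}, the pathwise bound \eqref{eq:uniformpath} and the stationary representation \eqref{eq:perpetuity} hold as before. The score $X_iZ_i$ is an $L^2$ martingale difference for the joint design/error filtration, so Assumption~\ref{ass:S} holds.

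\textbf{Quenched weak limit.} This is immediate from Theorem~\ref{up:thm:quenched}, which assumes only Assumption~\ref{ass:D} and finite error variance. Hence the normalized stationary law converges to $N(0,\sigma^2/2)$ conditionally on almost every design path, and therefore also unconditionally.

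\textbf{Moments and tails.} I would replace the conditional Gaussian moment formula \eqref{eq:momentlower} by the one-coefficient anti-concentration Lemma~\ref{up:lem:antic}. Conditional on the design, isolate the coefficient of $Z_{-m}$ in \eqref{eq:perpetuity}, namely $c_{a,m}=aP_m(a)X_{-m}$. Since $X_{-m}\ge\ell$, we have $|c_{a,m}|\ge a\ell|P_m(a)|$. Then \eqref{up:eq:anti-moment} gives
\[
\E|e_0^{(a)}|^r\ge q_0d^r(a\ell)^r\E|P_m(a)|^r.
\]
This is the old bound with a different constant. The Pareto and log-Weibull divergence arguments of Appendices~\ref{app:ar-pareto}--\ref{app:ar-logweibull} only concern $\E|P_m(a)|^r$, so they apply verbatim and every positive stationary moment fails.

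For the tails, Appendix~\ref{app:ar-tail} chooses $m$ so that $a\ell|P_m(a)|\ge x/d$ on the event $\{\eta_{-m+1}\ge u_m\}$; this only shifts $x$ by the constant factor $d$. Then \eqref{up:eq:anti-tail} replaces the bound $\Prob(|N(0,1)|>1)$ by $q_0$. Since $m=O(\sqrt{\log x})$ is essentially unchanged, the exponents $\gamma=1/2$ and $\gamma=\nu/2$ persist with new constants. Rademacher errors are nondegenerate, centered and bounded, so they are covered.

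\textbf{Main obstacle.} The only delicate point is legitimizing the conditioning in Lemma~\ref{up:lem:antic}. The series \eqref{eq:perpetuity} must converge almost surely, and the coefficient $c_{a,m}$ must be measurable with respect to the design alone.

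The first follows from Theorem~\ref{thm:stability}, since $\E\log(1+|\xi_0|)<\infty$ under finite second moments. The second holds because the products involve only $X$. Conditioning on the design and on all errors other than $Z_{-m}$ then leaves a fixed shift, as the lemma requires.
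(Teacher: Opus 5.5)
Your proposal is correct and follows essentially the paper's proof. You substitute the coefficient bound $|aP_m(a)X_{-m}|\ge a\ell|P_m(a)|$ into Lemma~\ref{up:lem:antic} in place of the conditional Gaussian variance, reuse the product-moment and tail-event arguments with only the constants $q_0,d$ changed, and obtain the quenched limit from Theorem~\ref{up:thm:quenched}; your check that almost-sure absolute convergence of \eqref{eq:perpetuity} and design-measurability of the coefficient legitimize the conditioning is exactly what the lemma's proof relies on.
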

\begin{proof}
The deterministic inequality gives, with $\ell=(1-\rho)^{-1}$,
$|aP_mX_{-m}|\ge a\ell|P_m|$. Substitute this coefficient into Lemma~\ref{up:lem:antic} instead of using a conditional Gaussian variance. The product-moment bounds in Appendices~\ref{app:ar-product}--\ref{app:ar-logweibull} and the tail events in Appendix~\ref{app:ar-tail} give the same moment and tail conclusions. The score remains a square-integrable martingale difference. The quenched limit follows from Theorem~\ref{up:thm:quenched}.
\end{proof}

\section{Exact finite-history Gaussian inference}\label{app:exact}
The following algebraic consequence provides central inference before stationarity. It uses a conditional Gaussian linear-model calculation.
Let $A_i=I-aH_i$, $T_n=A_n\cdots A_1$, and
\[
 Q_0=0,\qquad Q_n=A_nQ_{n-1}A_n^\top+a^2H_n.
\]
For a deterministic initialization and Gaussian errors independent of the design,
\[
 \beta_n-T_n\beta_0=(I-T_n)\beta^*+\varepsilon_n,
 \qquad\varepsilon_n\mid X_{1:n}\sim N_p(0,\sigma^2Q_n).
\]
If $Q_n\succ0$, then for known $\sigma^2$ the set
\[
 \left\{b:
 [\beta_n-T_n\beta_0-(I-T_n)b]^\top Q_n^{-1}
 [\beta_n-T_n\beta_0-(I-T_n)b]
 \le\sigma^2\chi^2_{p,1-\alpha}\right\}
\]
has exact conditional coverage $1-\alpha$, without a burn-in assumption. It is a bounded ellipsoid when $I-T_n$ is nonsingular. The center then is
\[
 \widetilde\beta_n=(I-T_n)^{-1}(\beta_n-T_n\beta_0).
\]
These facts follow by iterating the linear recursion; $Q_n$ is the Gram matrix of the noise coefficients. Rank-one updates require $O(p^2)$ storage and work per update for $T_n,Q_n$, with a linear solve when reporting. Overlapping residual-variance calibration gives the asymptotic coverage established in Appendix~\ref{app:extensions}; a disjoint error block yields the exact finite-sample pivot below.

\subsection{Exact calibration from a disjoint error block}
Suppose the Gaussian error assumptions of Theorem~\ref{thm:gaussian} hold. Take $m>p$ calibration observations whose indices are disjoint from the SGD history $1{:}n$, and whose design matrix has full column rank. Let $s_m^2$ be their OLS residual sum of squares divided by $m-p$. Conditional on the entire design,
$(m-p)s_m^2/\sigma^2\sim\chi^2_{m-p}$ and this variable is independent of $\varepsilon_n$ in Appendix~\ref{app:exact}. Therefore, writing $w_n(b)=\beta_n-T_n\beta_0-(I-T_n)b$, the exact pivot is
\begin{equation}\label{eq:exactF}
\frac{w_n(\beta^*)\trans Q_n^{-1}w_n(\beta^*)}{p s_m^2}
\sim F_{p,m-p}\quad\text{conditionally on the design}.
\end{equation}
Replacing the chi-square threshold in the known-variance set by $p s_m^2 F_{p,m-p;1-\alpha}$ gives exact conditional coverage, where $F_{d_1,d_2;u}$ is the $u$-quantile of the $F$ law with degrees of freedom $(d_1,d_2)$. Designs in the two blocks may be dependent. The proof uses the orthogonal projection decomposition of Gaussian calibration errors and the ratio of independent chi-square variables. Full-rank calibration and disjoint independent regression errors ensure the exact pivot.

\section{Feasible calibration for a dependent score}\label{app:hac}
The general Lyapunov limit in Theorem~\ref{thm:clt} also permits feasible central inference. Suppose $V\succ0$ and estimators satisfy $\widehat H_m\to H$ and $\widehat\Omega_m\to\Omega$ in probability. Define $\widehat V_m$ by
\[
\widehat H_m\widehat V_m+\widehat V_m\widehat H_m=\widehat\Omega_m.
\]
On the event of positive definite matrices, continuity of the Lyapunov inverse gives $\widehat V_m\to V$. For a stationary iterate or one satisfying Corollary~\ref{cor:startup}, Slutsky's theorem yields
\[
\frac{(\beta_{n(a)}-\beta^*)\trans\widehat V_m^{-1}(\beta_{n(a)}-\beta^*)}{a}
\Rightarrow\chi_p^2,
\qquad m=m(a)\to\infty.
\]
The estimator and iterate may be dependent. Feasibility rests on consistent estimation of the input score's long-run covariance, which can be established directly from the data process.

For the experiment we use the residual-score Bartlett estimator \citep{neweywest1987}. With OLS fit $\widetilde\beta_m$, put $\widehat\xi_i=X_i(Y_i-X_i\trans\widetilde\beta_m)$ and center these scores by their sample average. Set
\[
\widehat\Gamma_k=\frac1m\sum_{i=k+1}^m
(\widehat\xi_i-\overline{\widehat\xi})(\widehat\xi_{i-k}-\overline{\widehat\xi})\trans,
\quad
\widehat\Omega_{m,\ell}=\widehat\Gamma_0+
\sum_{k=1}^{\ell}\left(1-\frac{k}{\ell+1}\right)(\widehat\Gamma_k+\widehat\Gamma_k\trans).
\]
The fixed-parameter stable Gaussian VAR/AR input model in Appendix~\ref{app:experiments} has geometric dependence and all input moments. For completeness, consistency with $\ell=O(m^{1/4})\to\infty$ follows directly. For the true score, each entry of $\xi_i\xi_{i-k}\trans$ has uniformly bounded variance. Its autocovariances are bounded on lags $|h|\le k$ and decay geometrically in $|h|-k$ beyond that range: the underlying Gaussian Markov process is geometrically mixing and the score products have moments of every order. Consequently each lag-$k$ sample covariance has $L^2$ fluctuation $O(\sqrt{(k+1)/m})$. Summing over $k\le\ell$ gives $O(\ell^{3/2}/\sqrt m)=o(1)$. Absolute summability of the population score covariances removes Bartlett truncation bias by dominated convergence. Boundary bias is $O(\ell^2/m)$, also negligible.

OLS error is $O_P(m^{-1/2})$ in this model, because the sample Gram converges to $H\succ0$ and score averages have variance $O(m^{-1})$. Cauchy--Schwarz, uniform bounded input moments, and Markov's inequality bound the sum of residual-replacement errors over all $\ell$ lags by $O_P(\ell/\sqrt m+\ell/m)=o_P(1)$. Centering the scores contributes $O_P(\ell/\sqrt m)$ by the same bounds. This proves consistency for the stated Gaussian VAR/AR model and bandwidth growth.

\section{Comparison with temporal assumptions in prior work}\label{app:comparison}
Table~\ref{tab:literature} summarizes the objects and assumptions compared below.

\begin{table}[ht]\centering\small
\caption{Different objects and assumptions in closely related theories. The surrounding text examines the temporal conditions for the refresh example.}\label{tab:literature}
\setlength{\tabcolsep}{4pt}
\begin{tabular}{p{.28\linewidth}p{.32\linewidth}p{.33\linewidth}}
\toprule
Representative work & Setting or requirement & Result and relation to this paper \\
\midrule
\citet{chen2021,dieuleveut2020} & Stationary small-step analysis & Law characterization and moment expansions, respectively; our laws may have all moments infinite. \\
\citet{khusainov2025} & Iid, bounded designs; finite third score moment & Finite-history convex-distance bound; our Gaussian-error TV expansion permits dependent unbounded designs. \\
\citet{guo1997,durmus2021markov} & Temporal energy or drift/growth control & Product-moment stability; our PSD argument gives pathwise contraction. \\
\citet{roitershtein2007,damek2026} & Markov-modulated or iid affine recursions & Power-law tails; our examples combine all-moment failure with a Gaussian limit. \\
\citet{mou2020} & Fixed-step averaged linear SA & CLT and correction for averages; our correction is for the stationary last iterate. \\
\citet{merad2023,wang2026} & Invariant-law convergence or quantitative SA with moment control & TV to stationarity or Wasserstein Gaussian bounds; our expansion is from stationarity to Gaussian. \\
\bottomrule
\end{tabular}
\end{table}

\paragraph{Stationary Gaussian characterization.}
\citet{chen2021} study a deterministic drift with additive iid centered noise of finite covariance. Under their smooth strongly convex or Hurwitz linear drift assumptions, the stationary scaled iterate has a finite second moment. Our population-drift representation has noise $\xi_i-(H_i-H)e_{i-1}$, which depends on the iterate and the temporally dependent design. Their characterization therefore supplies the Gaussian/Lyapunov framework but does not cover the infinite-moment stationary laws constructed here.

\paragraph{Temporal energy.}
Theorem 2 of \citet{guo1997} considers a deterministic uniformly stable linear filter of a $\phi$-mixing source, together with a bounded deterministic term, as specified in their equation (14). Write $\zeta_i$ for that source. Their equation (15) requires positive constants $\alpha,M,K$ such that
\[
\E\exp\left\{\alpha\sum_{j=1}^n\|\zeta_{i_j}\|^2\right\}
\le M e^{Kn}
\]
for every $n\ge1$ and increasing nonnegative integer sequence $i_1<\cdots<i_n$. Within this class, their excitation condition (17) characterizes the $L^p$ exponentially stable family of update products. The direct refresh source isolates the role of joint temporal energy: retaining a single Gaussian value for $n$ observations gives
\[
\E e^{\alpha\sum_{i=1}^n X_i^2}
\ge\rho^{n-1}\E e^{\alpha nG^2}=\infty\quad\text{when }\alpha n\ge1/2.
\]
Thus the direct refresh representation lies outside this joint-energy condition despite its Gaussian marginal. This checks the applicability of that sufficient representation, rather than treating its source condition as necessary for every representation.

Likewise, the refresh design has $\E(X_i\mid\mathcal F_{i-1})=\rho X_{i-1}$, so it fails the conditional design martingale assumption in \citet{shen2026}. Its conditional second moment is $\rho X_{i-1}^2+(1-\rho)$, which is unbounded, despite standard Gaussian marginals and uniform geometric ergodicity. The frameworks are therefore not nested. \citet{wang2026} separate an iid component from a finite-state Markov component in their quantitative setting; the unbounded refresh Gram process is not covered merely by its uniform ergodicity. These distinctions locate the coexistence result within complementary theories of dependent-data SGD.

\paragraph{Unbounded-state product stability.}
\citet{durmus2021markov} allow unbounded coefficients through a super-Lyapunov drift and a matching growth bound. In their Assumption UE1, a Lyapunov function $\mathcal V$ obeys $P\mathcal V\le e^{-c(\log\mathcal V)^\delta}\mathcal V$ outside a sublevel set; Assumption A1 controls the matrix entries by a power of $\log\mathcal V$. For the refresh state $x$,
\[
P\mathcal V(x)=\rho\mathcal V(x)+(1-\rho)\E\mathcal V(G)\ge\rho\mathcal V(x).
\]
An unbounded $\mathcal V$ therefore cannot satisfy this drift: dividing by $\mathcal V(x)$ along levels tending to infinity would give $\rho\le e^{-c(\log\mathcal V(x))^\delta}\to0$. A bounded $\mathcal V$ cannot meet their growth condition for the unbounded Gram function $x^2$. Uniform geometric ergodicity thus coexists here with failure of this route to product-moment stability.

\paragraph{Tail results and the object being approximated.}
\citet{roitershtein2007}, Assumption 1.2(A4)--(A5), use a bounded additive coefficient and a multiplicative coefficient bounded above and away from zero, together with Markov-renewal conditions, to obtain power-law tails. The refresh recursion has unbounded coefficients and a multiplier arbitrarily close to zero; its all-moment failure arises from a different regime. \citet{damek2026} provide an irreducible-proximal treatment of iid symmetric-matrix SGD recursions, refining the justification in \citet{gurbuzbalaban2021}. Our scalar persistent constructions use direct anti-concentration and amplification bounds. Meanwhile, \citet{durmus2021tails}, Assumption A1 and Example 1, show that bounded iid matrices with sub-Gaussian additive noise can already obstruct exponential concentration. Our TV approximation controls absolute probabilities of all measurable events despite failure of every positive stationary moment.

\section{Experimental protocol and complete diagnostics}\label{app:experiments}
\subsection{Replications, history lengths and numerical representation}
The declared main grid contains 68 configurations. Scalar refresh designs use $\rho\in\{0,.7,.9\}$ and $a\in\{.04,.02,.01,.005,.0025\}$, with two independent seeds and 12,000 histories per seed. Refresh designs start exactly in their stationary Gaussian marginal. Their history length is
\[
n(a)=\left\lceil\frac{\max\{24,2\log^2(1/a)\}}a\right\rceil.
\]
For the refresh design, this schedule satisfies both the weak-limit condition $an(a)/\log(1/a)\to\infty$ and the sharper finite-history condition in Corollary~\ref{cor:startup-tv}. A zero-initialized paired recursion starts halfway through each history and uses the same final observations. The paired histories measure initialization sensitivity of the reported central probabilities. Rates and seeds use independent histories, except for the explicit paired histories. Pareto and baseline vector comparisons use learning time 24 as a finite approximation, with paired histories of learning time 12; refresh, log-Weibull and feasible-vector experiments use growing learning times.

The largest paired grid discrepancy among individual scalar configurations was $2.11\times10^{-5}$ for the Pareto design at $\rho=.7,a=.04$; among individual refresh seeds it was $7.35\times10^{-7}$. All per-configuration paired diagnostics are saved.

For independent Gaussian errors with variance one, propagate the exact finite-history variance and the coefficient of initialization:
\begin{equation}\label{eq:qsim}
Q_i=(1-aX_i^2)^2Q_{i-1}+a^2X_i^2,\quad Q_0=0,
\qquad T_i=(1-aX_i^2)T_{i-1},\quad T_0=1.
\end{equation}
The simulation stores $\log Q_i$ via log-add-exp, the sign and log magnitude of $T_i$, and the analogous finite sum of $a^2X_i^4$. It never clips a coefficient or deletes a replication. For $v_r=Q_r/a$, conditional integration gives
\[
\widehat F_a(x)=R^{-1}\sum_{r=1}^R\Phi(x/\sqrt{v_r}),\qquad
\widehat C_{1-\alpha}=R^{-1}\sum_r\left[2\Phi\left\{z_{1-\alpha/2}\sqrt{v_{\rm ref}/v_r}\right\}-1\right],
\]
where $v_{\rm ref}=1/2$ or $(1+aJ)/2$. Monte Carlo standard errors are sample standard deviations of these bounded conditional probabilities divided by $\sqrt R$. Saved summaries include 90\%, 95\%, and 99\% coverage, widths, expansion counts, startup coefficients, and quantiles of the conditional correction. The grid CDF uses float64 evaluation. Bootstrap resampling operates on centered float32 CDF values for memory efficiency, over the full 1,601-point grid; the reported 95\% band is the 95th percentile of the resampled suprema. It quantifies Monte Carlo error on the finite grid only. Histories are resampled as units, so the Gaussian errors are not spuriously treated as extra replications.

\begin{table}[t]\centering\small
\caption{Refresh design, $\rho=.7$, 24,000 histories. Coverage is in percent. TV estimates and the linearized Monte Carlo standard error of corrected TV are in units of $10^{-4}$. The empirical conditional Gaussian mixture defines the estimated finite-history marginal law. Full outputs also include coverage standard errors, seed-wise estimates and CDF diagnostics.}\label{tab:refresh}

\begin{tabular}{rrrrrr}
\toprule
$a$ & Base coverage & Corrected coverage & TV & Corrected TV & MCSE (corr.) \\
\midrule
0.04 & 94.3885 & 95.0766 & 123.130 & 17.859 & 0.469 \\
0.02 & 94.6849 & 95.0293 & 65.035 & 6.488 & 0.199 \\
0.01 & 94.8373 & 95.0095 & 33.976 & 2.050 & 0.084 \\
0.005 & 94.9167 & 95.0027 & 17.492 & 0.588 & 0.035 \\
0.0025 & 94.9579 & 95.0009 & 8.872 & 0.185 & 0.013 \\
\bottomrule
\end{tabular}

\end{table}

The scalar Kolmogorov first-order constant follows from Theorem~\ref{up:thm:rate}, which implies that the corrected CDF is uniformly $o(a)$ away from the true CDF. Differentiating $\Phi(x/\sqrt{(1+aJ)/2})$ gives a maximal absolute derivative $J\phi(1)/2$, hence $3\phi(1)/4$ in the refresh example. Saved finite-grid CDF diagnostics supplement the direct TV estimates plotted in Figure~\ref{fig:refresh}.

\subsection{Direct TV integration and conditional correction}
Let $v_r=Q_n^{(r)}/a$ for the $R=24000$ refresh histories, and define
\[
\widehat f_a(x)=R^{-1}\sum_{r=1}^R\frac1{\sqrt{v_r}}\phi(x/\sqrt{v_r}),\qquad
\widehat d_{\TV}=\frac12\int_{\R}|\widehat f_a(x)-g_a(x)|\,dx,
\]
where $g_a$ is the baseline or corrected Gaussian density. This is a Monte Carlo estimate of the finite-history marginal-law TV. For a reference variance $v_*$, the density ratio as a function of $z=x^2$ is
\[
\widehat f_a(\sqrt z)/g_a(\sqrt z)
=R^{-1}\sum_r\sqrt{v_*/v_r}\exp\{z(1/v_*-1/v_r)/2\}.
\]
Its logarithm is convex, so the densities have at most two positive crossings unless they coincide identically. We locate the minimum through the mixture's posterior mean precision and bracket crossings in $\log(x/\sqrt{v_*})$, using log variances throughout. Gaussian CDF differences integrate each sign interval. The omitted positive-tail integral is bounded by the sum of the mixture and reference tail probabilities; all reported bounds are below $10^{-13}$. Analytic Gaussian comparisons, independent adaptive quadrature, boundary-crossing cases and extreme log-variance checks verify the implementation. Every positive-weight history is retained.

For Monte Carlo uncertainty, integrate each component-minus-reference density over the fitted sign intervals. The sample standard deviation of these bounded per-history contributions divided by $\sqrt R$ is the linearized standard error. In the $\rho=.7$ endpoint configurations, 199 whole-history bootstrap resamples re-estimate crossings and TV; their standard deviations agree with the linearized errors. Separate-seed estimates and paired half-history checks are supplied. Finite-history and Monte Carlo approximations are documented separately from numerical integration error.

Convexity implies $\E d_{\TV}(\widehat f_a,g_a)\ge d_{\TV}(f_{a,n},g_a)$, where $f_{a,n}=\E\widehat f_a$ is the finite-history marginal density. To assess sampling sensitivity, we also integrate the distance between the two independent 12,000-history seed mixtures. At $\rho=.7$, these split distances are $4.41\times10^{-5}$ at $a=.01$ and $2.40\times10^{-6}$ at $a=.0025$, compared with pooled corrected TV estimates $2.05\times10^{-4}$ and $1.85\times10^{-5}$. Across all 15 refresh configurations, the split distance is $.009$--$.215$ times the corrected TV estimate. Adaptive quadrature and a separate partitioned integration agree within $10^{-10}$; all results are saved. These are descriptive sampling diagnostics, distinct from the paired checks of history length; a single split does not bound the plug-in bias.

The finite-history conditional correction obeys
\[
\mathcal R_{a,i}=(1-aX_i^2)^2\mathcal R_{a,i-1}+a^2X_i^4,\qquad \mathcal R_{a,0}=0.
\]
Its median and quantiles track concentration toward $J=1.5$ without averaging a stationary iterate or its square. Table~\ref{tab:conditional-correction} reports the fraction farther than $.5$ from $J$. Histories are independently generated across step sizes, so these are distributional diagnostics of the conditional expansion.
Together with the propagated $T_i$ and paired histories, this separates loss of initialization from concentration of $\mathcal R_{a,n}/a$ near $J$. Increasing history length addresses the former; the latter is a small-step property of the design law. Neither empirical diagnostic replaces the asymptotic conditions of Corollary~\ref{cor:startup-tv}.
\begin{table}[t]\centering\small
\caption{Finite-history conditional correction $\mathcal R_{a,n}/a$ in the refresh design, 24,000 histories per row. Quantiles describe between-history variation; $J=1.5$ for every persistence.}\label{tab:conditional-correction}

\begin{tabular}{rrrrrr}
\toprule
$\rho$ & $a$ & 10th percentile & Median & 90th percentile & $\widehat P(|\mathcal R_{a,n}/a-1.5|>.5)$ \\
\midrule
0 & 0.04 & 0.975 & 1.414 & 2.389 & 0.3043 \\
0 & 0.02 & 1.078 & 1.442 & 2.102 & 0.1746 \\
0 & 0.01 & 1.176 & 1.466 & 1.941 & 0.0897 \\
0 & 0.005 & 1.254 & 1.476 & 1.811 & 0.0381 \\
0 & 0.0025 & 1.318 & 1.488 & 1.719 & 0.0091 \\
0.7 & 0.04 & 0.653 & 1.095 & 2.207 & 0.5507 \\
0.7 & 0.02 & 0.786 & 1.202 & 2.110 & 0.4276 \\
0.7 & 0.01 & 0.913 & 1.285 & 2.049 & 0.2943 \\
0.7 & 0.005 & 1.030 & 1.354 & 1.967 & 0.1669 \\
0.7 & 0.0025 & 1.135 & 1.403 & 1.867 & 0.0777 \\
0.9 & 0.04 & 0.421 & 0.838 & 2.033 & 0.7159 \\
0.9 & 0.02 & 0.538 & 0.959 & 2.004 & 0.6342 \\
0.9 & 0.01 & 0.674 & 1.074 & 2.010 & 0.5298 \\
0.9 & 0.005 & 0.811 & 1.189 & 2.009 & 0.3962 \\
0.9 & 0.0025 & 0.937 & 1.292 & 1.969 & 0.2536 \\
\bottomrule
\end{tabular}

\end{table}

\subsection{Pareto and log-Weibull AR designs}
For the raw design $\widetilde X_i=\rho\widetilde X_{i-1}+\eta_i$, normalize by its population second moment. The Pareto innovation satisfies $\Prob(\eta>u)=u^{-3}$, $u\ge1$, giving
\[
h_{\rm raw}=\frac{3/4}{1-\rho^2}+\left(\frac{3/2}{1-\rho}\right)^2,
\qquad X_i=\widetilde X_i/\sqrt{h_{\rm raw}}.
\]
We use $\rho\in\{0,.7,.9\}$ and the same five steps, with 12,000 independent replications per configuration. For log-Weibull inputs, take $\rho=.7$, $\eta=\exp(cE^{1/\nu})$, $\nu=1.5$, $c\in\{.5,1\}$ and $E\sim\mathrm{Exp}(1)$, at steps $.04,.01,.0025$, also with 12,000 replications. The tail is $\exp[-(\log u/c)^\nu]$, so the no-moment argument holds at both scales. Population innovation moments are computed by scaled adaptive quadrature of
\[
\E\eta^k=\int_0^\infty\exp(kct^{1/\nu}-t)\,dt.
\]
Raw AR moments $M_k$ obey $(1-\rho^k)M_k=\sum_{j=0}^{k-1}\binom{k}{j}\rho^jM_j\E\eta^{k-j}$, $M_0=1$. Thus the normalized correction is $J=M_4/(2M_2^2)$. Quadrature tolerances and errors are saved. AR designs start at their population mean and warm up for 512 steps; their omitted initial contribution is attenuated by $\rho^{512}$. No normalization is fitted to simulated paths.

\begin{table}[t]\centering\small
\caption{Selected Pareto and log-Weibull configurations, 12,000 histories each. The parameter is $\rho$ for Pareto and $c$ for log-Weibull. Coverage is in percent. An expansion path contains at least one factor with $|1-aX_i^2|>1$ anywhere in its history. Pareto-$3$ does not meet the correction theorem's fourth design moment, so no corrected result is assigned. Every configuration is retained in the supplied results.}\label{tab:other}

\begin{tabular}{llrrrr}
\toprule
Model & Parameter & $a$ & Coverage & Corrected coverage & Expansion paths \\
\midrule
pareto & 0 & 0.04 & 93.1154 & -- & 3432 \\
pareto & 0 & 0.01 & 94.0687 & -- & 1829 \\
pareto & 0 & 0.0025 & 94.5519 & -- & 941 \\
pareto & 0.7 & 0.04 & 94.6931 & -- & 226 \\
pareto & 0.7 & 0.01 & 94.9047 & -- & 87 \\
pareto & 0.7 & 0.0025 & 94.9703 & -- & 36 \\
pareto & 0.9 & 0.04 & 94.7574 & -- & 10 \\
pareto & 0.9 & 0.01 & 94.9384 & -- & 3 \\
pareto & 0.9 & 0.0025 & 94.9841 & -- & 2 \\
logweibull & 0.5 & 0.04 & 94.7422 & 95.0008 & 0 \\
logweibull & 0.5 & 0.01 & 94.9358 & 95.0003 & 0 \\
logweibull & 0.5 & 0.0025 & 94.9839 & 95.0000 & 0 \\
logweibull & 1 & 0.04 & 94.5327 & 97.8303 & 967 \\
logweibull & 1 & 0.01 & 94.8405 & 95.9553 & 631 \\
logweibull & 1 & 0.0025 & 94.9314 & 95.2344 & 350 \\
\bottomrule
\end{tabular}

\end{table}

\subsection{Noncommuting design and serial score covariance}
The vector experiment uses
\begin{equation}\label{eq:vectorconfig}
\widetilde X_i=B\widetilde X_{i-1}+\eta_i,\quad
B=\begin{pmatrix}.6&.15\\0&.4\end{pmatrix},\quad
\Cov(\eta_i)=\begin{pmatrix}1&.3\\.3&1\end{pmatrix},
\quad Z_i=.7Z_{i-1}+\sqrt{1-.7^2}\,\zeta_i,
\end{equation}
where Gaussian innovations are independent between design and errors, and $\zeta_i$ is standard normal. Solve $H_{\rm raw}=BH_{\rm raw}B\trans+\Cov(\eta_i)$ and rescale $X_i=s\widetilde X_i$, $s=\sqrt{2/\tr H_{\rm raw}}$. Initialize both input states exactly in their stationary Gaussian laws. With $A=(I-.7B)^{-1}$, independence gives
\begin{equation}\label{eq:vectoromega}
\Omega=AH+HA\trans-H,\qquad HV+VH=\Omega.
\end{equation}
The population matrices are
\begin{align*}
H&\simeq\begin{pmatrix}1.18802063&.33333890\\.33333890&.81197937\end{pmatrix},&
\Omega&\simeq\begin{pmatrix}3.07622956&.90851551\\.90851551&1.44351887\end{pmatrix},\\
V&\simeq\begin{pmatrix}1.26604476&.10208074\\.10208074&.84698206\end{pmatrix}.&&
\end{align*}
The baseline vector runs use all five steps and 12,000 independent replications, with zero SGD initialization and $\lceil24/(a\lambda_{\min}(H))\rceil$ updates. The feasible-HAC runs use steps $.04,.01,.0025$ and 6,000 replications. Each processes 8,192 calibration observations, followed by $\lceil\max\{24,2\log^2(1/a)\}/(a\lambda_{\min}(H))\rceil$ observations before recording the endpoint. SGD processes both blocks; the calibration score estimates use only the earlier block. The last 2,048 or all 8,192 calibration observations are used with $\ell=\lfloor c_m m^{1/4}\rfloor$, $c_m\in\{.5,1,2\}$, yielding $(3,6,13)$ or $(4,9,19)$. Matrices are symmetrized only through their defining algebra; no eigenvalue clipping or oracle replacement is used. A non-positive estimate is counted as a failed, empty confidence set.

\begin{table}[t]\centering\small
\caption{Dependent-score ellipse coverage in percent, with Monte Carlo standard errors in parentheses, from 6,000 replications. HAC column headings give calibration size and bandwidth. All runs and all bandwidth choices are in the saved summaries.}\label{tab:hac}

\begin{tabular}{rrrrrr}
\toprule
$a$ & Oracle & Lag zero & HAC $(2048,13)$ & HAC $(8192,9)$ & HAC $(8192,19)$ \\
\midrule
0.04 & 94.92 (0.28) & 77.60 (0.54) & 93.60 (0.32) & 93.55 (0.32) & 94.12 (0.30) \\
0.01 & 95.35 (0.27) & 76.62 (0.55) & 93.92 (0.31) & 93.82 (0.31) & 94.50 (0.29) \\
0.0025 & 95.23 (0.28) & 76.17 (0.55) & 93.60 (0.32) & 93.78 (0.31) & 94.38 (0.30) \\
\bottomrule
\end{tabular}

\end{table}

For the normalized endpoint error $u=(\beta_n-\beta^*)/\sqrt a$, the oracle statistic is $u\trans V^{-1}u$ and the lag-zero statistic is $2\|u\|^2$, compared with $\chi^2_{2,.95}$. For a Gaussian limit the latter has coverage
\[
\frac1{2\pi}\int_0^{2\pi}\left[1-\exp\{-\chi^2_{2,.95}/(4v(\theta))\}\right]d\theta,
\quad v(\theta)=(\cos\theta,\sin\theta)V(\cos\theta,\sin\theta)\trans.
\]
A deterministic 100,000-angle quadrature gives $.760097279998$, quantifying the central miscalibration caused by ignoring serial score covariance.

To isolate finite-bandwidth tapering, define the population Bartlett benchmark
\begin{equation}\label{eq:population-bartlett}
\Gamma_k=(.7B)^kH,\quad
\Omega_\ell=H+\sum_{k=1}^\ell\left(1-\frac{k}{\ell+1}\right)(\Gamma_k+\Gamma_k\trans),
\qquad HV_\ell+V_\ell H=\Omega_\ell.
\end{equation}
Let $\lambda_1,\lambda_2$ be the generalized eigenvalues of $(V,V_\ell)$. For $U\sim N_2(0,V)$, the coverage of $U\trans V_\ell^{-1}U\le\chi^2_{2,.95}$ is
\begin{equation}\label{eq:bartlett-coverage}
C_\ell=\frac1{2\pi}\int_0^{2\pi}\left[1-\exp\left\{-\frac{\chi^2_{2,.95}}{2(\lambda_1\cos^2\theta+\lambda_2\sin^2\theta)}\right\}\right]d\theta.
\end{equation}
Adaptive quadrature gives $C_9=.936597378164$ and $C_{19}=.943743069940$. Figure~\ref{fig:hac}(b) plots this population Gaussian reference alongside the feasible estimates. It holds the bandwidth fixed and uses the known model to quantify tapering; finite-sample covariance estimation and finite-step approximation remain separate sources of error. The exact matrices, eigenvalues and quadrature errors for $\ell=3,\ldots,19$ are supplied with the outputs.

\subsection{Non-Gaussian errors and bounded process diagnostics}
At $\rho=.7$ and $a\in\{.04,.01,.0025\}$, actual scalar SGD is run with Gaussian, Rademacher, and variance-one Student-$t_5$ errors, with 12,000 independent replications each. Signed-log addition represents the state without clipping, including cancellation between opposite signs. These distributions test the finite-variance conditional CLT through marginal coverage. Actual-error coverage estimates have Bernoulli Monte Carlo standard errors.

For an OU diagnostic, save two endpoints separated by $k=\lceil .5/a\rceil$ updates, with learning-time lag $\tau=ak$. Conditional on a Gaussian-error design history, write $U_{\rm new}=T_kU_{\rm old}+\varepsilon$ with conditionally independent new noise. For $(s,t)\in\{(1,1),(1,-1),(2,.5)\}$, evaluate
\[
\E[e^{i(sU_{\rm old}+tU_{\rm new})}\mid X]
=\exp\{-\tfrac12[(s+tT_k)^2v_{\rm old}+t^2v_{\rm noise}]\}.
\]
The sum of nonnegative variance terms is evaluated in log space, so rare large variances contribute their correct bounded limiting value zero. The OU target is $\exp[-(s^2+t^2+2st e^{-\tau})/4]$. This bounded diagnostic targets two-time distributions of the OU limit without requiring stationary covariance moments.

\begin{figure}[t]\centering
\includegraphics[width=\linewidth]{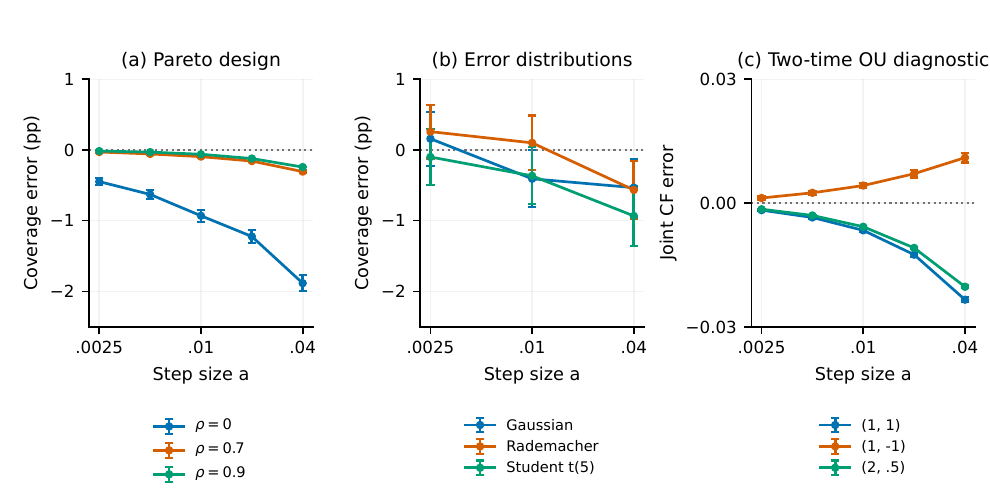}
\caption{Qualitative stationary and process predictions. (a,b) Coverage error relative to 95\%, in percentage points, on the same fixed axis range: Pareto AR designs and actual Gaussian/Rademacher/Student-$t_5$ errors, respectively. (c) Bounded two-time characteristic-function errors relative to the OU prediction, with a zero reference. Bars show 1.96 Monte Carlo standard errors. Log-Weibull results appear in Tables~\ref{tab:other} and~\ref{tab:stress}.}\label{fig:additional}
\end{figure}

\subsection{Observed inference before burn-in}
Set $\beta^*=0$, $\beta_0=1$ and use the refresh design with $\rho=.7$. For each of $a=.04,.01$, simulate 48,000 independent design/error trajectories and record endpoints near learning times $an=.05,.25,1,4,24$. Before SGD starts, generate 65 calibration observations on the same design stream, fit scalar OLS and compute its residual variance $s^2$ with 64 degrees of freedom. Regression errors are actual independent standard Gaussian draws in both blocks. The calibration block and SGD history have disjoint errors and potentially dependent designs.

All methods use the same endpoint and design history. Alongside SGD, propagate $T_n,Q_n$ from \eqref{eq:qsim}, and put $w_n=\beta_n-T_n\beta_0$. The stationary interval uses center $\beta_n$ and radius $z_{.975}\sqrt{a/2}$. The history-corrected center is $w_n/(1-T_n)$, with radius $z_{.975}\sqrt{Q_n}/|1-T_n|$ for known variance, or $t_{64,.975}s\sqrt{Q_n}/|1-T_n|$ for estimated variance, where $t_{d,u}$ is the $u$-quantile of Student's $t$ law with $d$ degrees of freedom. We record actual coverage indicators and interval widths. Figure~\ref{fig:startupshock} shows $a=.01$, and Table~\ref{tab:startup} reports both step sizes. Endpoint estimates share trajectories across times; error bars use independent trajectories as the replication unit.

\begin{figure}[t]\centering
\includegraphics[width=\linewidth]{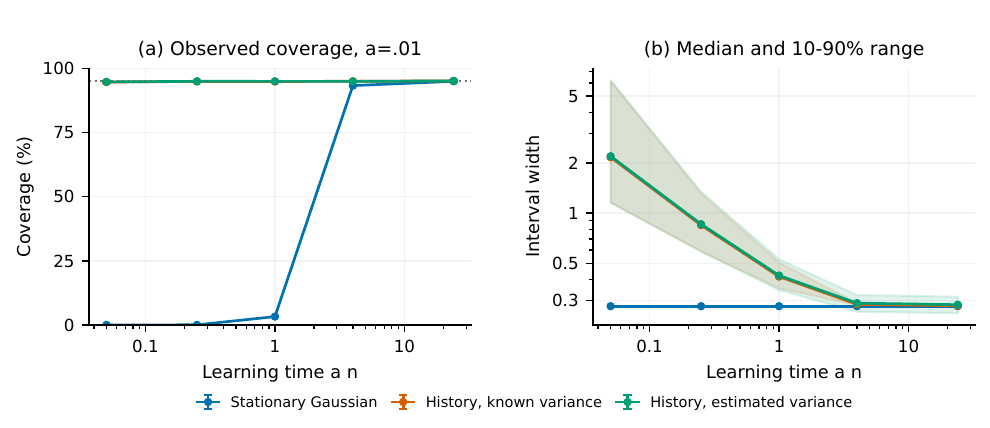}
\caption{Observed calibration and precision before burn-in, $a=.01$, 48,000 trajectories. (a) Actual 95\% coverage with 1.96 Monte Carlo standard-error bars; the dotted line is nominal. Both history-corrected curves are measured from simulated outcomes. (b) Median interval widths and 10th--90th percentile bands across the same trajectories, on a logarithmic scale. Estimated-variance intervals use an earlier block of 65 observations.}\label{fig:startupshock}
\end{figure}
\begin{table}[t]\centering\footnotesize
\caption{Actual finite-history inference, 48,000 trajectories per step size. Coverage is in percent with Monte Carlo standard errors in parentheses. Width columns describe the estimated-variance history-corrected interval.}\label{tab:startup}

\begin{tabular}{rrrrrrr}
\toprule
$a$ & $an$ & Naive & Known $\sigma^2$ & Estimated $\sigma^2$ & Median width & 90th width \\
\midrule
0.04 & 0.08 & 0.12 (0.02) & 95.03 (0.10) & 95.06 (0.10) & 3.811 & 16.863 \\
0.04 & 0.28 & 2.86 (0.08) & 94.86 (0.10) & 94.97 (0.10) & 1.781 & 4.308 \\
0.04 & 1 & 29.89 (0.21) & 95.04 (0.10) & 95.02 (0.10) & 0.884 & 1.344 \\
0.04 & 4 & 93.05 (0.12) & 95.08 (0.10) & 95.01 (0.10) & 0.594 & 0.669 \\
0.04 & 24 & 94.40 (0.10) & 95.01 (0.10) & 95.08 (0.10) & 0.577 & 0.645 \\
0.01 & 0.05 & 0.00 (0.00) & 94.69 (0.10) & 94.72 (0.10) & 2.186 & 6.242 \\
0.01 & 0.25 & 0.02 (0.01) & 94.90 (0.10) & 94.97 (0.10) & 0.857 & 1.342 \\
0.01 & 1 & 3.23 (0.08) & 94.90 (0.10) & 94.94 (0.10) & 0.423 & 0.529 \\
0.01 & 4 & 93.30 (0.11) & 94.98 (0.10) & 94.93 (0.10) & 0.289 & 0.323 \\
0.01 & 24 & 94.96 (0.10) & 95.13 (0.10) & 95.13 (0.10) & 0.283 & 0.316 \\
\bottomrule
\end{tabular}

\end{table}

\subsection{An imposed shock and its amplification}
For the normalized Pareto AR design with $\rho=.7$, $a=.01$, start at the population mean, impose one innovation $u\in[10^2,10^7]$, and use the innovation mean for the next 599 updates. Record the maximum partial log product, including the empty product. With $s=h_{\rm raw}^{-1/2}$ and
\[
m=1+\left\lfloor\frac{\log(\sqrt{a/2}\,su)}{\log(1/\rho)}\right\rfloor,
\]
the analytic lower bound is $m\log(a/2)+m(m-1)\log\rho+2m\log(su)$. The positive background increases factor magnitudes over the selected expansion interval. This deterministic construction isolates the amplification mechanism used in the tail proof.
\begin{figure}[t]\centering
\includegraphics[width=.72\linewidth]{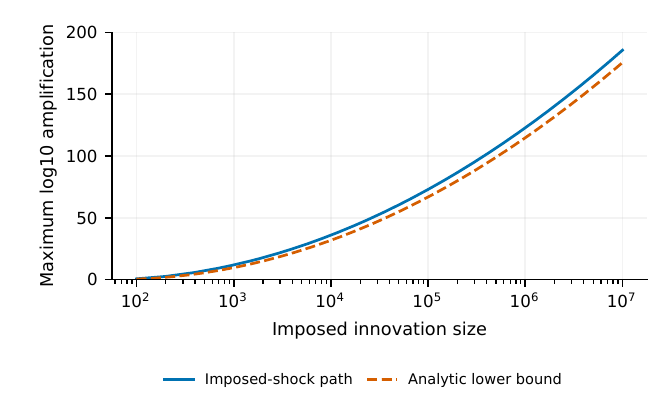}
\caption{Persistence amplifies one extreme innovation. The deterministic path and analytic lower bound describe amplification conditional on an imposed shock; the vertical axis is its base-10 logarithm.}\label{fig:shock}
\end{figure}

\subsection{Additional log-Weibull stress checks}\label{app:stress}
After the declared main grid revealed overcorrection at $c=1$, we fixed two additional configurations: 48,000 independent histories at $a=.0025$ and 24,000 at $a=.000625$, with new seeds 940001 and 940002. They retain the same normalization, $\rho=.7$, $\nu=1.5$, and growing histories. These checks assess sampling precision and the effect of a smaller learning rate. The results are summarized in the generated stress table.

\begin{table}[ht]\centering\small
\caption{Log-Weibull stress checks at $c=1$. Coverage is reported as a percentage, with Monte Carlo standard errors in percentage points.}\label{tab:stress}
\begin{tabular}{rrrrr}
\toprule
$a$ & Histories & Baseline coverage & Corrected coverage & MCSE (baseline, corrected) \\
\midrule
0.0025 & 48000 & 94.9426 & 95.2455 & (0.0026, 0.0026) \\
0.000625 & 24000 & 94.9772 & 95.0545 & (0.0031, 0.0031) \\
\bottomrule
\end{tabular}
\end{table}

The smaller step reduces both coverage errors. At $a=.000625$, baseline and corrected errors are approximately $-0.0228$ and $+0.0545$ percentage points, respectively. The population correction is $J\simeq10.8553$, whereas the sampled median of $\mathcal R_{a,n}/a$ is about $1.44$ and its 99th percentile is about $25.53$. The sampled conditional correction remains dispersed relative to its limiting coefficient, consistent with delayed entry into the first-order regime. Over the tested steps, the baseline interval remains closer to nominal coverage in the $c=1$ setting.

\paragraph{Reproducibility artifacts.}
Configuration files and the ancillary README specify the seeds, software versions, and reproduction steps. Saved analysis summaries, execution metadata and algebraic checks accompany the scripts. The README identifies the omitted replication arrays and explains how to regenerate them.

\end{document}